\theoremstyle{plain}
\newtheorem{theorem}{Theorem}[section]
\newtheorem{corollary}{Corollary}[section]
\newtheorem{lemma}{Lemma}[section]
\newtheorem{proposition}{Proposition}[section]
\theoremstyle{definition}
\newtheorem{definition}{Definition}[section]
\newtheorem{example}{Example}[section]
\newtheorem{remark}{Remark}
\newtheorem{property}{Property}
\newtheorem{problem}{Problem}
\numberwithin{equation}{section}
\begin{document}

\begin{frontmatter}
\title{Topological and Algebraic Structures of Atanassov's
Intuitionistic Fuzzy-Values Space\tnoteref{mytitlenote}}
\tnotetext[mytitlenote]{This work was supported by the National Natural Science Foundation of China
(No. 11601449), and the Key Natural Science Foundation of Universities in Guangdong Province (No. 2019KZDXM027).}


\author[a1,a2]{Xinxing Wu}
\address[a1]{School of Sciences, Southwest Petroleum University, Chengdu, Sichuan 610500, China}
\address[a2]{Zhuhai College of Jilin University, Zhuhai, Guangdong 519041, China}
\ead{wuxinxing5201314@163.com}

\author[a1]{Tao Wang}

\author[a1]{Qian Liu}

\author[a3]{Peide Liu}
\address[a3]{School of Management Science and Engineering, Shandong University
of Finance and Economics, Jinan Shandong 250014, China}
\ead{peide.liu@gmail.com}

\author[a4]{Guanrong Chen}
\address[a4]{Department of Electrical Engineering, City University of
Hong Kong, Hong Kong SAR, China}
\ead{eegchen@cityu.edu.hk}

\author[a5]{Xu Zhang\corref{mycorrespondingauthor}}
\cortext[mycorrespondingauthor]{Corresponding author}
\address[a5]{Department of Mathematics, Shandong University, Weihai, Shandong 264209, China}
\ead{xu$\_$zhang$\_$sdu@mail.sdu.edu.cn}
%

\begin{abstract}
We prove that the space of intuitionistic fuzzy values (IFVs) with a
linear order based on a score function and an accuracy function has the same
algebraic structure as the one induced by a linear order based on a similarity
function and an accuracy function. By introducing a new operator for IFVs via
the linear order based on a score function and an accuracy function, we show
that such an operator is a strong negation on IFVs. Moreover, we observe that
the space of IFVs is a complete lattice and a Kleene algebra with the new
operator. We also demonstrate that the topological space of IFVs with the order
topology induced by the above two linear orders is not separable
and metrizable but compact and connected. From some new perspectives,
our results partially answer three open problems posed by Atanassov [Intuitionistic
Fuzzy Sets: Theory and Applications, Springer, 1999] and [On Intuitionistic
Fuzzy Sets Theory, Springer, 2012]. Furthermore, we construct an isomorphism
between the spaces of IFVs and q-rung orthopedic fuzzy values (q-ROFVs) under
the corresponding linear orders. To this end, we introduce the concept of
admissible similarity measures with particular orders for IFSs, extending the
existing definition of the similarity measure for IFSs, and construct an admissible
similarity measure with a linear order based on a score function and an accuracy
function, which is effectively applied to a pattern recognition problem about
the classification of building materials.
\end{abstract}
\begin{keyword}
Intuitionistic fuzzy value (IFV), intuitionistic fuzzy set (IFS),
Q-rung orthopair fuzzy set, Complete lattice, Kleene algebra, Isomorphism.
\end{keyword}
\end{frontmatter}

\section{Introduction}
Atanassov~\cite{Ata1986,Ata1999} extended Zadeh's fuzzy set
theory by introducing the concept of intuitionistic fuzzy sets (IFSs), characterized
by a membership function and a non-membership function meeting the condition
that the sum of the membership degree and the non-membership degree at every point
is less than or equal to one. Atanassov and Gargov~\cite{AG1989} further extended IFSs
to interval-valued intuitionistic fuzzy sets (IVIFSs) (see also \cite{Ata2020}), with
membership degree and non-membership degree being the closed intervals contained in the
unit interval rather than the real numbers in the unit interval.
Gau and Buehrer~\cite{GB1993} introduced the concept of vague sets, which was proven
to be equivalent to IFSs by Bustince and Burillo~\cite{BB1996}.
Every pair of membership and non-membership degrees for IFSs was
called an intuitionistic fuzzy value (IFV) by Xu~\cite{Xu2007}.
However, in the theory of IFSs, the condition that the sum of the
membership degree and the non-membership degree is less than or
equal to one induces some decision evaluation information that cannot be expressed effectively. Hence, the range of their applications is limited. To overcome this shortcoming,
Yager~\cite{Yager2014,Yager2017,YagerA2017} proposed the concepts of Pythagorean fuzzy sets
(PFSs) and q-rung orthopair fuzzy sets (q-ROFSs). These sets satisfy the condition
that the square sum or the $q$th power sum of the membership degree and the
non-membership degree is less than or equal to one.

Atanassov~\cite{Ata1986} and De et al.~\cite{DBR2000} further introduced some basic operational laws for IFSs,
including ``intersection", ``union", ``supplement", ``sum", ``product", ``power",
``necessity" and ``possibility" operators. Since all operations on IFSs
act on IFVs point by point, it is essential to study the operational laws for IFVs,
which inspired Xu and Yager~\cite{XY2006} to define some of them.
To obtain effective decision-making under an intuitionistic fuzzy environment, it is needed to rank any two IFVs.
For this reason, Xu and Yager~\cite{XY2006} introduced a linear order
`$\leq_{_{\text{XY}}}$' for ranking any two IFVs by using a score function and an accuracy function.
Then, by using a similarity function and an accuracy function, Zhang and Xu~\cite{ZX2012}
introduced another linear order `$\leq_{_{\text{ZX}}}$' to compare any two IFVs. For more results on
linear orders for ranking IFVs or IVIFVs, one is referred  to \cite{Guo2014,DeBPBDaBMO2016,DeBFIKM2016,XXL2017,AFMMF2019}.
Based on linear orders, the above operational laws for IFSs and IVIFSs were
successfully applied to  intuitionistic fuzzy information aggregation
\cite{XC2012,WL2012,DGM2017,XX2011,XXZ2012,WY2016} and decision-making
\cite{GLMG2015,Liu2014,LCW2020,LW2007,XZ2016,Li2010}. In practical applications, the
linear orders $\leq_{_{\text{XY}}}$ and $\leq_{_{\text{ZX}}}$ are fundamental for decision-making
under the intuitionistic fuzzy environment.

With the development of the IFSs theory, more and more unsolved or unformulated problems
were identified. In particular, Atanassov~\cite{Ata1999,Ata2012} posed the following
three open problems:

\begin{problem}[{\textrm{\protect\cite[Open problem~15]{Ata2012},\cite[Open problem~3]{Ata1999}}}]
\label{Prob-1}
What other operations, relations, operators, norms,
and metrics (essential from the standpoint of the applications of IFSs) can be
defined over IFSs and their extensions, and what properties will they
have?
\end{problem}

\begin{problem}[{\textrm{\protect\cite[Open problem~18]{Ata2012}}}]
\label{Prob-2}
To study IF algebraic objects.
\end{problem}

\begin{problem}[{\textrm{\protect\cite[Open problem~22]{Ata2012}}}]
\label{Prob-3}
To introduce elements of IFS in topology and geometry.
\end{problem}

As mentioned above, all operations on IFSs act on IFVs point by point,
so it is essential to study the operational laws on IFVs.
In the following, we address Problems~\ref{Prob-1}--\ref{Prob-3} for IFVs.
Although the linear order $\leq_{_{\text{XY}}}$
was introduced by Xu and Yager~\cite{XY2006} many years ago, the topological and algebraic
structures of the space of IFVs under this linear order have never been investigated in the field of vision.
Therefore, we will systematically study the topological and algebraic
structures of the space of IFVs under some common linear orders discussed in~\cite{Xu2007,XY2006,ZX2012,XC2012}.

The rest of this paper is organized as follows. In Section~\ref{Sec-2}, we introduce
some basic concepts of orders, lattices, and IFSs. In Section~\ref{Sec-3}, we show
that the spaces of IFVs under linear orders $\leq_{_{\text{XY}}}$ and
$\leq_{_{\text{ZX}}}$ are isomorphic; namely, they have the same algebraic
structures. In Section~\ref{Sec-4}, we define a new operator for IFVs by using the
linear order $\leq_{_{\text{XY}}}$, which is a strong negation
on IFVs under this linear order. Furthermore, we prove that
the space of IFVs is a complete lattice and a Kleene algebra under the linear order
$\leq_{_{\text{XY}}}$ with a new strong negation. In Section~\ref{Sec-5},
we demonstrate that the space of IFVs under the order topology with
the linear order $<_{_{\text{XY}}}$ is not separable and metrizable but compact and connected.
These results partially answer Problems~\ref{Prob-1}--\ref{Prob-3}
from some new perspectives. Then, in Section~\ref{Sec-6}, we construct
an isomorphism between the space of IFVs and the space of q-ROFNs under
some linear orders proposed in \cite{LW2018,XZZW2019}, which transforms q-ROFNs to IFVs
equivalently. In Section~\ref{Sec-7}, as applications, we define an admissible
similarity measure with particular orders for IFSs, extending the
similarity measure for IFSs in~\cite{LC2002}. We also construct an admissible
similarity measure with the linear order $\leq_{_{\text{XY}}}$ effectively applied to a pattern
recognition problem about the classification of building materials. Finally, we make some concluding
remarks in Section~\ref{Sec-8}.



\section{Preliminaries}\label{Sec-2}
\subsection{Order}
\begin{definition}[{\textrm{\protect\cite[Definition~1.1.3]{HWW2016}}}]
A {\it partial order} is a binary relation $\preceq$ on a set $L$ with the
following properties:
\begin{enumerate}[(1)]
  \item (Reflexivity) $a\preceq a$.
  \item (Antisymmetry) If $a\preceq b$ and $b\preceq a$, then $a=b$.
  \item (Transitivity) If $a\preceq b$ and $b\preceq c$,  then $a\preceq c$.
\end{enumerate}
\end{definition}

A \textit{partially ordered set}, or \textit{poset} for short, is a nonempty
set $L$ equipped with a partial order $\preceq $.
A {\it bounded poset} is a structure $(L, \preceq, \bm{0}, \bm{1})$ such that
$(L, \preceq)$ is a poset, and $\bm{0}$, $\bm{1} \in L$ are its
bottom and top elements, respectively.

A function $f: X\rightarrow Y$ between two posets is called {\it order preserving} or
{\it monotone} if and only if $x\preceq y$ implies $f(x)\preceq f(y)$.
A bijection $f: X\rightarrow Y$ is called an {\it isomorphism} if $f$ and $f^{-1}$ are monotone.
Two posets $(X, \preceq)$ and $(Y, \preceq)$ are called ({\it order}-)
{\it isomorphic}, denoted by $X\simeq Y$, if and only if there is an isomorphism between them.

Let $(L,\preceq )$ be a poset and $X\subset L$. An element $u\in L$ is said
to be an \textit{upper bound} of $X$ if $x\preceq u$ for all $x\in X$. An
upper bound $u$ of $X$ is said to be its \textit{smallest upper bound} or
\textit{supremum}, written as $\bigvee X$ or $\sup X$, if $u\preceq y$ for
all upper bounds $y$ of $X$. Similarly, we can define the \textit{greatest
lower bound} or \textit{infimum} of $X$, written as $\bigwedge X$ or $\inf X$.
For any pair of elements, simply write
\begin{equation}
x\vee y=\sup \{x,y\}\text{ and }x\wedge y=\inf \{x,y\}.
\label{sup-inf-operation}
\end{equation}

For $x, y \in X$, the notation $x\prec y$ means that $x\preceq y$ and $x\neq y$.
Suppose that $(X, \preceq)$ is a poset. Given elements $a, b \in X$ satisfying
$a\preceq b$, there exist some subsets of $X$, called the {\it intervals},
as below:
\begin{align*}
(a, b)&=\left\{x\in X \mid a\prec x\prec b\right\},\\
(a, b]&=\left\{x\in X \mid a\prec x\preceq b\right\},\\
[a, b)&=\left\{x\in X \mid a\preceq x\prec b\right\},\\
[a, b]&=\left\{x\in X \mid a\preceq x\preceq b\right\},\\
(\leftarrow, a]&=\left\{x\in X \mid x\preceq a\right\},\\
(\leftarrow, a)&=\left\{x\in X \mid x\prec a\right\},\\
[b, \rightarrow)&=\left\{x\in X \mid b\preceq x\right\},\\
(b, \rightarrow)&=\left\{x\in X \mid b\prec x\right\}.
\end{align*}

\begin{definition}
[{\textrm{\protect\cite{Eng1977, M1975}}}]
A \textit{linear order} is a binary relation $\prec$ on a set $X$ with the following properties:
\begin{itemize}
\item[(LO1)]  (Comparability) If $x \neq y$, then either $x \prec y$ or $y \prec x$.
\item[(LO2)]  (Nonreflexivity) For no $x$ in $X$, the relation $x \prec x$ holds.
\item[(LO3)]  (Transitivity) If  $x \prec y$ and  $y \prec z$, then  $x \prec z$.
\end{itemize}
A \textit{linearly ordered set} is a nonempty set $X$ together with a linear order $\prec$.
\end{definition}

\begin{definition}[{\textrm{\protect\cite{M1975}}}]
\label{Def-order-Topology}
Let $X$ be a linearly ordered set with more than one element. Assume that
$\mathscr{B}$ is the collection of all sets of the following types:
\begin{enumerate}[(1)]
\item  All open intervals $(a, b)$ in $X$.
\item  All intervals of the form $(\leftarrow, b)$.
\item  All intervals of the form $(a, \rightarrow)$.
\end{enumerate}
The collection $\mathscr{B}$ is a basis for a topology on $X$,
called the \textit{linear order topology}.
\end{definition}

Any pair $(D, E)$ of subsets of $X$ that is a linearly ordered set together with
$\prec$ is called a \textit{cut} of $X$ if the following properties hold:
(1) $D\cup E=X$;
(2) $D\neq \varnothing$ and $E\neq \varnothing$;
(3) $x\prec y$ for all $x\in D$ and $y\in E$.
The sets $D$ and $E$ are called the \textit{lower section} and the \textit{upper section} of the cut, respectively.
These sections are disjoint. For every cut of a linearly ordered set,
one of the following four conditions is satisfied:
\begin{enumerate}[(i)]
\item\label{Cut-i} The lower and the upper sections have the largest and the smallest elements, respectively.
\item The lower section has the largest element while the upper section has no smallest one.
\item The upper section has the smallest element while the lower section has no largest one.
\item\label{Cut-iv} The lower and the upper sections have no largest and smallest elements, respectively.
\end{enumerate}
When condition (\ref{Cut-i}) (resp. (\ref{Cut-iv})) holds, we say that the cut is a \textit{jump} (resp. \textit{gap}).
If no cut of a linearly ordered set $X$ is a jump (resp. gap), then it is called a \textit{densely ordered set} (resp.
\textit{continuously ordered set}).

\subsection{Lattice}
\begin{definition}[{\textrm{\protect\cite{Bir1967}}}]\label{de-lattice}
A \textit{lattice} is a poset such that every pair of elements
have the greatest lower bound and the smallest upper bound. A lattice
is \textit{bounded} if it has the bottom and the top elements.
\end{definition}

\begin{definition}[{\textrm{\protect\cite{Bir1967}}}]\label{de-algebra}
A \textit{lattice} is an algebra $(L, \vee, \wedge)$ with two binary
operations $\wedge$ and $\vee$, which are called \textit{meet} and \textit{join}, respectively,
if the following conditions hold:
\begin{itemize}
\item[(L1)] (Idempotent law) $x\wedge x=x$ and $x\vee x=x$.
\item[(L2)] (Commutative law) $x\wedge y=y\wedge x$ and $x\vee y=y\vee x$.
\item[(L3)] (Associative law) $x\wedge(y\wedge z)=(x\wedge y)\wedge z$ and
$x\vee(y\vee z)=(x\vee y)\vee z$;
\item[(L4)] (Absorption law) $x\wedge(x\vee y)=x$ and $x\vee(x\wedge y)=x$.
\end{itemize}
\end{definition}

For the equivalence of Definitions \ref{de-lattice} and \ref{de-algebra},
we refer to {\textrm{\protect\cite[Theorem~1]{Bir1967}}}. In fact,
for a lattice, the meet $\wedge$ and join $\vee$ operations in
Definition~\ref{de-algebra} can be defined as the infimum and supremum of two elements, respectively.

\begin{definition}
\label{Complete-Lattice}
A lattice $(L, \preceq)$ is \textit{complete} if every subset $A$ of $L$ has
a supremum and an infimum under the partial order $\preceq$.
\end{definition}

\begin{lemma}[{\textrm{\protect\cite{Bir1967}}}]
\label{Comp-Lattice-Char}
For a bounded lattice $L$, the following statements are equivalent:
\begin{enumerate}[{\rm (i)}]
\item $L$ is a complete lattice;
\item Every nonempty subset of $L$ has an infimum.
\item Every nonempty subset of $L$ has a supremum.
\end{enumerate}
\end{lemma}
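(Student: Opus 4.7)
The plan is to prove the equivalence by the cycle (i) $\Rightarrow$ (ii), (ii) $\Rightarrow$ (i), (i) $\Rightarrow$ (iii), (iii) $\Rightarrow$ (i). The two implications out of (i) are immediate from Definition~\ref{Complete-Lattice}, since the condition that every subset of $L$ admits a supremum and an infimum trivially restricts to nonempty subsets. The real content lies in the two converse implications, and both rest on the standard duality trick of realizing a missing supremum as the infimum of the set of upper bounds (and dually).

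For (ii) $\Rightarrow$ (i), I would fix an arbitrary $A \subseteq L$ and produce $\sup A$ and $\inf A$. If $A \ne \varnothing$, hypothesis (ii) directly supplies $\inf A$, so the nontrivial task is to produce $\sup A$. Here I would introduce the set of upper bounds
\[
U_A = \{\, u \in L : a \preceq u \text{ for all } a \in A \,\}.
\]
Since $L$ is bounded, $\bm{1} \in U_A$, so $U_A$ is nonempty and by (ii) possesses an infimum $s := \inf U_A$. Two quick verifications then finish the step: first, every $a \in A$ is a lower bound of $U_A$, hence $a \preceq \inf U_A = s$, showing $s \in U_A$; second, any upper bound $v$ of $A$ lies in $U_A$, so $s \preceq v$. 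Thus $s = \sup A$. The case $A = \varnothing$ is then handled separately by boundedness: every element of $L$ is vacuously an upper bound and a lower bound of $\varnothing$, so $\sup \varnothing = \bm{0}$ and $\inf \varnothing = \bm{1}$.

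The implication (iii) $\Rightarrow$ (i) is obtained by the dual argument: swap the roles of $\sup$ and $\inf$, of $\wedge$ and $\vee$, of upper and lower bounds, and of $\bm{0}$ and $\bm{1}$. No new ideas are required, only bookkeeping.

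There is no serious obstacle in this proof; the only subtlety is the empty-subset case in (ii) $\Rightarrow$ (i) and (iii) $\Rightarrow$ (i), where the assumption that $L$ is bounded is essential. The argument would fail for a lattice without top and bottom elements, which explains the boundedness hypothesis in the lemma and highlights why ``every nonempty subset has an infimum'' alone suffices to recover completeness only in the bounded setting.
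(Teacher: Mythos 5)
Your argument is correct: the reduction of $\sup A$ to $\inf U_A$ (with $\bm 1\in U_A$ guaranteeing $U_A\neq\varnothing$), the dual step, and the separate treatment of $A=\varnothing$ via boundedness are exactly what is needed. The paper itself gives no proof of this lemma --- it is quoted from Birkhoff \cite{Bir1967} --- and your proof is precisely the standard argument underlying that citation, so there is nothing further to compare.
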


\begin{definition}[{\textrm{\protect\cite[Definition~1.1.4]{HWW2016}}}]
A \textit{distributive lattice} is a lattice $(L, \vee, \wedge)$
satisfying the following distributive laws: for any $x$, $y$, $z\in L$,
\begin{enumerate}[(1)]
\item $x\wedge(y\vee z)=(x\wedge y)\vee(x\wedge z)$;
\item $x\vee(y\wedge z)=(x\vee y)\wedge(x\vee z)$.
\end{enumerate}
\end{definition}

\begin{definition}[{\textrm{\protect\cite[Definition~1.1.6]{HWW2016}}}]
Let $L$ be a bounded lattice. A \textit{negation} on $L$ is a decreasing
mapping $N: L\rightarrow L$ such that $N(\bm{0})=\bm{1}$
and $N(\bm{1})=\bm{0}$. If, additionally, $N(N(x))=x$
holds for all $x\in L$, then it is called a \textit{strong negation}.
\end{definition}

\begin{definition}[{\textrm{\protect\cite[Definition~1.1.7]{HWW2016}}}]
A \textit{De Morgan algebra} is a bounded distributive lattice
with a strong negation $\neg$.
\end{definition}

\begin{definition}[{\textrm{\protect\cite[Definition~1.1.8]{HWW2016}}}]
A \textit{Kleene algebra} is a De Morgan algebra $(L, \vee, \wedge, \neg)$ that satisfies
Kleene's inequality: for any $x, y\in L$,
$$
x\wedge (\neg x)\preceq y \vee (\neg y).
$$
\end{definition}

\subsection{Intuitionistic fuzzy set (IFS)}

\begin{definition}[{\textrm{\protect\cite[Definition~1.1]{Ata1999}}}]
Let $X$ be the universe of discourse. An \textit{intuitionistic fuzzy set}~(IFS)
$I$ in $X$ is defined as an object in the following form:
\begin{equation}
I=\left\{\langle x, \mu_{_{I}}(x), \nu_{_{I}}(x)\rangle \mid x\in X\right\},
\end{equation}
where the functions
$$
\mu_{_{I}}: X \rightarrow [0,1],
$$
and
$$
\nu_{_{I}}: X \rightarrow [0,1],
$$
define the \textit{degree of membership} and the
\textit{degree of non-membership} of the element $x \in X$ to the set $I$,
respectively, and for every $x\in X$,
\begin{equation}
\mu_{_{I}}(x)+\nu_{_{I}}(x)\leq 1.
\end{equation}
\end{definition}

Let $\mathrm{IFS}(X)$ denote the set of all IFSs in the universe of discourse $X$.
For $I\in \mathrm{IFS}(X)$, the \textit{indeterminacy degree} $\pi_{_{I}}
(x)$ of an element $x$ belonging to $I$ is defined by $\pi_{_I}(x)=1-\mu_{_I}(x)-\nu_{_I}(x)$.
In \cite{Xu2007,XC2012}, the pair $\langle\mu_{_I}(x), \nu_{_I}(x)\rangle$ is called an \textit{ intuitionistic fuzzy value} (IFV) or an \textit{intuitionistic fuzzy number} (IFN).
For convenience, we use $\alpha=\langle \mu_{\alpha}, \nu_{\alpha}\rangle$ to represent an IFV $\alpha$,
which satisfies $\mu_{\alpha}\in [0, 1]$, $\nu_{\alpha}\in [0, 1]$, and $0\leq \mu_{\alpha}
+\nu_{\alpha}\leq 1$. Additionally, $s(\alpha) =\mu_{\alpha}-\nu_{\alpha}$ and $h(\alpha)=\mu_{\alpha}+\nu_{\alpha}$
are called the \textit{score degree} and the \textit{accuracy degree} of $\alpha$, respectively.
Let $\tilde{\mathbb{I}}$ denote the set of all IFVs, i.e.,
${\tilde{\mathbb{I}}}=\{\langle \mu, \nu \rangle\in [0, 1]^{2} \mid \mu+\nu \leq 1\}$.

Motivated by the basic operations on IFSs, Xu et al.~\cite{XY2006,XC2012}
introduced the following basic operational laws for IFVs.

\begin{definition}[{\textrm{\protect\cite[Definition~1.2.2]{XC2012}}}]
\label{Def-Int-Operations}
Let $\alpha=\langle\mu_{\alpha}, \nu_{\alpha}\rangle$,
$\beta=\langle\mu_{\beta}, \nu_{\beta}\rangle\in \tilde{\mathbb{I}}$. Define
\begin{enumerate}[(i)]
\item $\overline{\alpha}=\langle\nu_{\alpha},\mu_{\alpha}\rangle$.
\item $\alpha\cap\beta=\langle\min\{\mu_{\alpha}, \mu_{\beta}\}, \max\{\nu_{\alpha},\nu_{\beta}\}\rangle$.
\item $\alpha\cup\beta=\langle\max\{\mu_{\alpha},\mu_{\beta}\}, \min\{\nu_{\alpha},\nu_{\beta}\}\rangle$.
\item $\alpha\oplus\beta=\langle\mu_{\alpha}+\mu_{\beta}-\mu_{\alpha}\mu_{\beta}, \nu_{\alpha}\nu_{\beta}\rangle$.
\item $\alpha\otimes\beta=\langle\mu_{\alpha}\mu_{\beta}, \nu_{\alpha}+\nu_{\beta}-\nu_{\alpha}\nu_{\beta}\rangle$.
\item $\lambda\alpha=\langle 1-(1-\mu_{\alpha})^{\lambda}, (\nu_{\alpha})^{\lambda}\rangle$, $\lambda >0 $.
\item $\alpha^{\lambda}=\langle (\mu_{\alpha})^{\lambda}, 1-(1-\nu_{\alpha})^{\lambda}\rangle$, $\lambda >0$.
\end{enumerate}
\end{definition}

The order $\subset$, defined by the rule that $\alpha\subset \beta$ if and only if
$\alpha\cap \beta=\alpha$, is a partial order on $\tilde{\mathbb{I}}$.
To compare any two IFVs, Xu and Yager~\cite{XY2006} introduced the following linear order `$\leq_{_{\text{XY}}}$'
(see also \cite[Definition~3.1]{Xu2007} and \cite[Definition~1.1.3]{XC2012}):

\begin{definition}
[{\textrm{\protect\cite[Definition~1]{XY2006}}}]
\label{de-order(Xu)}
Let $\alpha_{1}$ and $\alpha_{2}$ be two IFVs.
\begin{itemize}
  \item If $s(\alpha_{1})<s(\alpha_{2})$,
then $\alpha_{1}$ is smaller than $\alpha_{2}$,
denoted by $\alpha_{1}<_{_{\text{XY}}}\alpha_{2}$.

  \item If $s(\alpha_{1})=s(\alpha_{2})$, then

\begin{itemize}
  \item[$-$] if $h(\alpha_{1})=h(\alpha_{2})$, then $\alpha_{1}=\alpha_{2}$;
  \item[$-$] if $h(\alpha_{1})<h(\alpha_{2})$, then $\alpha_{1}<_{_{\text{XY}}}\alpha_{2}$.
\end{itemize}
\end{itemize}
If $\alpha_{1}<_{_{\text{XY}}}\alpha_{2}$ or $\alpha_{1}=\alpha_{2}$,
 then denote it as $\alpha_{1}\leq_{_{\text{XY}}}\alpha_{2}$.
\end{definition}

Alongside Xu and Yager's order `$\leq_{_{\text{XY}}}$' in Definition \ref{de-order(Xu)},
Szmidt and Kacprzyk~\cite{SK2009} proposed another comparison
function $\rho(\alpha)=\frac{1}{2}(1+\pi(\alpha))(1-\mu(\alpha))$ for IFVs,
which is a partial order. However, it sometimes cannot distinguish between any two IFVs.
Although Xu's method \cite{Xu2007} constructs a linear order for ranking any pair of IFVs, this procedure has
the following disadvantages:
(1) It may result in that the less we know, the better the IFV, which contradicts the general logic.
(2) It is sensitive to a slight change of parameters.
(3) It is not preserved under multiplication by a scalar, namely,
$\alpha\leq_{_{\text{XY}}}\beta$ might not imply
$\lambda\alpha<_{_{\text{XY}}}\lambda\beta$, where $\lambda$ is a scalar
(see {\textrm{\protect\cite[Example~1]{BBGMP2011}}}).
To overcome some shortcomings of the above two ranking methods, Zhang and Xu~\cite{ZX2012}
improved Szmidt and Kacprzyk's one \cite{SK2009}, according to Hwang and Yoon's idea
\cite{HY1981} on the technique for order preference by similarity to an ideal solution.
They also defined a similarity function $L(\alpha)$, for any IFV
$\alpha=\langle\mu_{\alpha}, \nu_{\alpha}\rangle$, as follows:
\begin{equation}
L(\alpha)=\frac{1-\nu_{\alpha}}{(1-\mu_{\alpha})+(1-\nu_{\alpha})}
=\frac{1-\nu_{\alpha}}{1+\pi_{\alpha}}.
\end{equation}
In particular, if $\nu_{\alpha}<1$, then
\begin{equation}\label{L-formula}
L(\alpha)=\frac{1}{\tfrac{1-\mu_{\alpha}}{1-\nu_{\alpha}}+1}.
\end{equation}
Furthermore, Zhang and Xu~\cite{ZX2012} introduced the following order `$\leq_{_{\textrm{ZX}}}$'
for IFVs by applying the similarity function $L(\_)$.

\begin{definition}[{\textrm{\protect\cite{ZX2012}}}]
\label{de-order(ZX)}
Let $\alpha_{1}$ and $\alpha_{2}$ be two IFVs.
\begin{itemize}
  \item If $L(\alpha_{1})<L(\alpha_{2})$, then $\alpha_{1}$ is smaller than $\alpha_{2}$,
denoted by $\alpha_{1}<_{_{\textrm{ZX}}}\alpha_{2}$;
  \item If $L(\alpha_{1})=L(\alpha_{2})$, then
  \begin{itemize}
    \item[$-$] if $h(\alpha_{1})=h(\alpha_{2})$, then $\alpha_{1}=\alpha_{2}$;
    \item[$-$] if $h(\alpha_{1})<h(\alpha_{2})$, then $\alpha_{1}<_{_{\textrm{ZX}}}\alpha_{2}$.
  \end{itemize}
\end{itemize}
If $\alpha_{1}<_{_{\textrm{ZX}}}\alpha_{2}$ or $\alpha_{1}=\alpha_{2}$, then denote it as $\alpha_{1}\leq_{_{\textrm{ZX}}}\alpha_{2}$.
\end{definition}

\begin{remark}\label{Remark-1}
The bottom and the top elements of $\tilde{\mathbb{I}}$ are $\langle 0, 1\rangle$
and $\langle 1, 0 \rangle$, respectively, under both the order $\leq_{_{\text{XY}}}$ and the order $\leq_{_{\textrm{ZX}}}$.
\end{remark}


\section{An isomorphism between lattices $(\tilde{\mathbb{I}}, \leq_{_{\text{XY}}})$
and $(\tilde{\mathbb{I}}, \leq_{_{\textmd{ZX}}})$}\label{Sec-3}

In this section, we construct an isomorphism between $(\tilde{\mathbb{I}}, \leq _{_\text{XY}})$ and
$(\tilde{\mathbb{I}}, \leq _{_\mathrm{ZX}})$. Therefore, we only consider the construction of
$(\tilde{\mathbb{I}}, \leq _{_\text{XY}})$ in the following sections.

Define a mapping $\tilde{\Gamma}: (\tilde{\mathbb{I}},\leq _{_\mathrm{ZX}})
\rightarrow (\tilde{\mathbb{I}}, \leq _{_\text{XY}}) $ as follows:
for $\alpha=(\mu_\alpha, \nu_\alpha) \in \tilde{\mathbb{I}}$,
\begin{equation}
\label{eq-3.1}
\tilde{\Gamma}(\alpha)
=
\begin{cases}
\langle 0, 1\rangle, & L(\alpha)=0, \\
\langle1, 0\rangle, & L(\alpha)=1, \\
\alpha, & L(\alpha)=\frac{1}{2}, \\
\left\langle \frac{\mu_{\alpha}}{2(1-L(\alpha))} ,
\frac{\mu_{\alpha} + 2-4L(\alpha)}{2(1-L(\alpha))}\right\rangle,
& 0< L(\alpha) <\frac{1}{2}, \\
\Big\langle \frac{1}{2}\Big(\frac{\mu_{\alpha}}{1-L(\alpha)}
-\frac{(2L(\alpha)-1)^2}{L(\alpha)(1-L(\alpha))}\Big),& \\
~~~~~~~~\frac{L(\alpha)\mu_{\alpha}-(2L(\alpha)-1)}{2L(\alpha)(1-L(\alpha))}\Big\rangle,
& \frac{1}{2} < L(\alpha) <1.
\end{cases}
\end{equation}
Figure~\ref{Fig-1} is the geometric description of $\tilde{\Gamma}(\alpha)$.
\begin{figure}[H]
\begin{center}
\scalebox{0.55}{\includegraphics{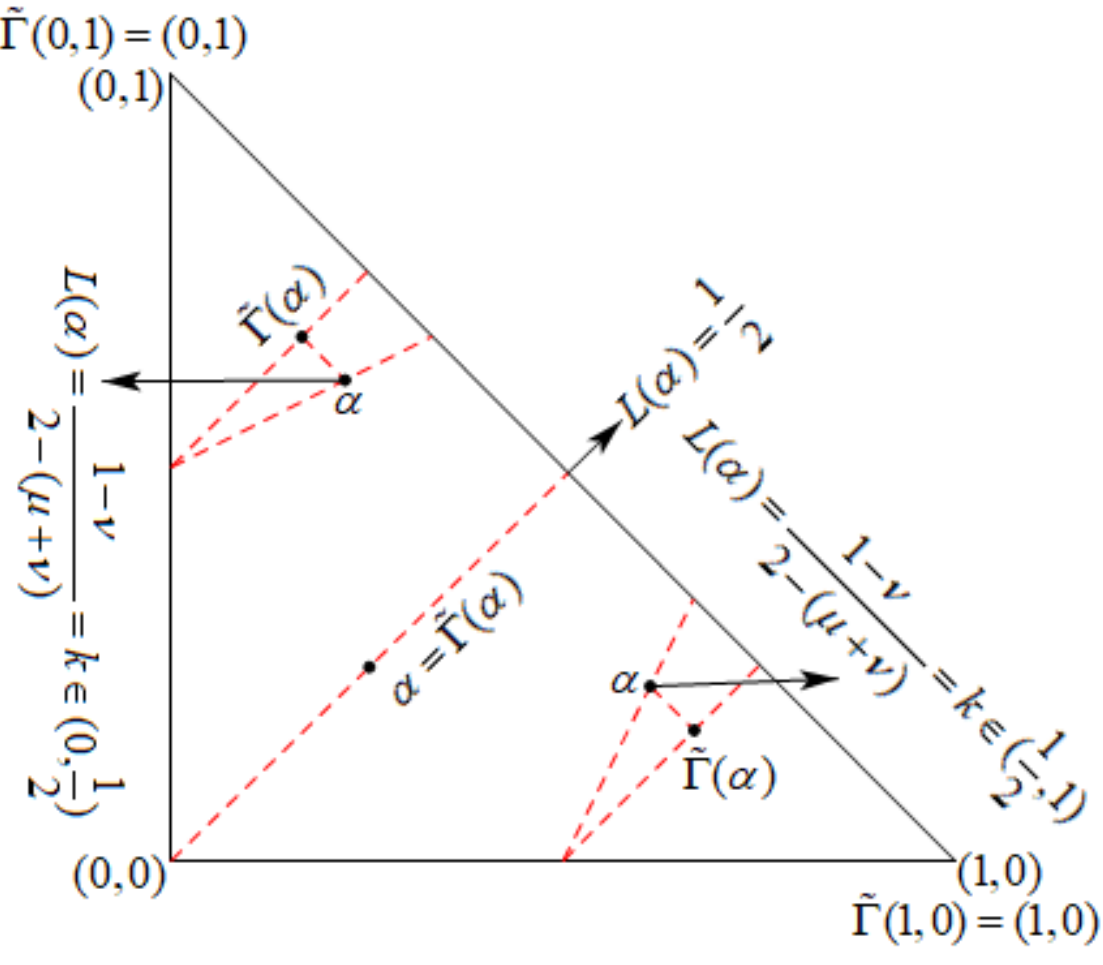}}
\end{center}
\caption{Geometrical interpretation of $\tilde{\Gamma}(\alpha)$}
\label{Fig-1}
\end{figure}

\begin{theorem}\label{th-isomorphism}
The mapping $\tilde{\Gamma}$ defined in~\eqref{eq-3.1} is an
isomorphism between $(\tilde{\mathbb{I}}, \leq _{_\mathrm{ZX}})$
and $(\tilde{\mathbb{I}}, \leq_{_\text{XY}})$.
\end{theorem}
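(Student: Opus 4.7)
The plan is to exploit what I expect is the key structural fact hidden in the formula~\eqref{eq-3.1}: the map $\tilde{\Gamma}$ preserves the accuracy degree $h$ and converts the similarity $L(\alpha)$ into the score $s(\tilde{\Gamma}(\alpha))$ through a strictly increasing function. Concretely, I anticipate that for every $\alpha\in\tilde{\mathbb{I}}$ one has $h(\tilde{\Gamma}(\alpha))=h(\alpha)$ and $s(\tilde{\Gamma}(\alpha))=\varphi(L(\alpha))$, where $\varphi$ is $-1, \tfrac{2L-1}{1-L}, 0, \tfrac{2L-1}{L}, 1$ on the five regions $\{0\},(0,\tfrac{1}{2}),\{\tfrac{1}{2}\},(\tfrac{1}{2},1),\{1\}$, respectively. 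Since $\varphi$ is manifestly strictly increasing on $[0,1]$ with range $[-1,1]$, and since any IFV is uniquely recovered from the pair $(s,h)$ (via $\mu=(s+h)/2$, $\nu=(h-s)/2$) as well as from the pair $(L,h)$, the isomorphism statement reduces to three technical verifications.

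First I would verify well-definedness of $\tilde{\Gamma}$. The constraint $\mu'+\nu'\le 1$ will follow immediately once $h(\tilde{\Gamma}(\alpha))=h(\alpha)$ is established. Non-negativity of $\mu'$ and $\nu'$ in the two nontrivial branches is what requires real work: in the branch $0<L(\alpha)<\tfrac{1}{2}$ it is essentially obvious from $\mu_{\alpha}\ge 0$ and $2-4L(\alpha)>0$; in the branch $\tfrac{1}{2}<L(\alpha)<1$ one must use the inequality $\mu_{\alpha}\ge \tfrac{2L(\alpha)-1}{L(\alpha)}$, which is forced by $\nu_{\alpha}\ge 0$ together with the identity $\nu_{\alpha}=\tfrac{L(\alpha)\mu_{\alpha}+1-2L(\alpha)}{1-L(\alpha)}$ (obtained by solving the defining equation of $L(\alpha)$ for $\nu_{\alpha}$). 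This same identity yields both claimed formulas $s(\tilde{\Gamma}(\alpha))=\varphi(L(\alpha))$ and $h(\tilde{\Gamma}(\alpha))=h(\alpha)$ after short algebraic manipulations in each of the five cases.

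Second, monotonicity of $\tilde{\Gamma}$ from $(\tilde{\mathbb{I}},\leq_{_{\mathrm{ZX}}})$ to $(\tilde{\mathbb{I}},\leq_{_{\mathrm{XY}}})$ then follows from the lexicographic structure of both orders: if $\alpha_{1}<_{_{\mathrm{ZX}}}\alpha_{2}$ because $L(\alpha_{1})<L(\alpha_{2})$, the strict monotonicity of $\varphi$ gives $s(\tilde{\Gamma}(\alpha_{1}))<s(\tilde{\Gamma}(\alpha_{2}))$, hence $\tilde{\Gamma}(\alpha_{1})<_{_{\mathrm{XY}}}\tilde{\Gamma}(\alpha_{2})$; if instead $L(\alpha_{1})=L(\alpha_{2})$ and $h(\alpha_{1})<h(\alpha_{2})$, the scores coincide but the accuracies (preserved by $\tilde{\Gamma}$) are strictly ordered, giving the conclusion via the secondary criterion of $\leq_{_{\mathrm{XY}}}$.

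Finally, for bijectivity I would invert $\varphi$ piecewise and show that for any $\beta\in\tilde{\mathbb{I}}$ the pair $(\varphi^{-1}(s(\beta)),h(\beta))$ corresponds to a unique IFV $\alpha$ with $\tilde{\Gamma}(\alpha)=\beta$; because the inverse map is again a composition of a monotone reparametrization in the first coordinate with the identity in the second, it is order preserving as well. The main obstacle I expect is the case analysis required for well-definedness in the branch $\tfrac{1}{2}<L(\alpha)<1$, where the subtle inequality $\mu_{\alpha}\ge (2L(\alpha)-1)/L(\alpha)$ must be tracked carefully; once this is in hand, everything else reduces to bookkeeping on the identities for $s$, $h$, and $L$.
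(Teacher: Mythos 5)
Your proposal is correct, and its structural claims check out: writing $\nu_\alpha=\frac{L\mu_\alpha+1-2L}{1-L}$ one verifies that in every branch of~\eqref{eq-3.1} the accuracy is preserved, $h(\tilde{\Gamma}(\alpha))=h(\alpha)$, and the score of the image is $s(\tilde{\Gamma}(\alpha))=\varphi(L(\alpha))$ with $\varphi$ equal to $-1$, $\frac{2L-1}{1-L}$, $0$, $\frac{2L-1}{L}$, $1$ on the five regions, a strictly increasing bijection of $[0,1]$ onto $[-1,1]$; moreover the admissible accuracies for a fixed similarity $L$ form exactly the interval $[\,|\varphi(L)|,1\,]$, which is the admissible range for the score $\varphi(L)$, so the fiberwise correspondence $(L,h)\mapsto(\varphi(L),h)$ indeed yields a bijection. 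This is a genuinely different organization from the paper's argument. The paper never states the two identities you isolate: it asserts bijectivity directly ``by formula~\eqref{eq-3.1}'', proves that $\alpha<_{_\mathrm{ZX}}\beta$ implies $\tilde{\Gamma}(\alpha)<_{_\text{XY}}\tilde{\Gamma}(\beta)$ by a seven-case analysis on the positions of $L(\alpha)$ and $L(\beta)$ relative to $0$, $\tfrac12$, $1$ (computing scores and accuracies of the images case by case, e.g.\ obtaining $s(\tilde{\Gamma}(\alpha))=\frac{2L(\alpha)-1}{1-L(\alpha)}$ or $\frac{2L(\alpha)-1}{L(\alpha)}$, which are exactly your $\varphi$), and then exhibits the explicit inverse~\eqref{eq-3.2}, leaving the order preservation of $\tilde{\Gamma}^{-1}$ as ``similar''. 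Your route buys a cleaner and more complete treatment of bijectivity (the step the paper glosses over) and collapses the pairwise case split into monotonicity of a single function $\varphi$ plus $h$-preservation, at the cost of the same per-branch algebra needed to establish the two identities and the fiber check; the paper's route buys the closed-form inverse~\eqref{eq-3.2}, which it reuses later (e.g.\ in Theorem~\ref{Thm-ZX-Kleene}), whereas your argument only shows the inverse exists and is monotone without displaying it.
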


\begin{proof}
By formula~\eqref{eq-3.1}, $\tilde{\Gamma}$ is a bijection.
For $\alpha=\langle \mu_\alpha, \nu_\alpha\rangle$ and $\beta=\langle \mu_\beta, \nu_\beta\rangle
\in \tilde{\mathbb{I}}$ such that $\alpha<_{_\mathrm{ZX}} \beta$, we prove that
$\tilde{\Gamma}(\alpha)<_{_\text{XY}} \tilde{\Gamma}(\beta)$. For this purpose,
we consider the following cases:

(1) If $L(\alpha)=0$ or $L(\beta)=1$, i.e., $\alpha=\langle 0, 1\rangle$ or $\beta=\langle 1, 0 \rangle$,
then $\tilde{\Gamma}(\alpha)=\langle 0, 1 \rangle$ or $\tilde{\Gamma}(\beta)=\langle 1, 0\rangle$. Thus,
$\tilde{\Gamma}(\alpha)<_{_\text{XY}} \tilde{\Gamma}(\beta)$.

(2) If $L(\alpha)= L(\beta)=\frac{1}{2}$, then
$\tilde{\Gamma}(\alpha)=\alpha$ and $\tilde{\Gamma}(\beta)=\beta$ by formula~\eqref{eq-3.1}.
Since $\alpha <_{_\mathrm{ZX}} \beta$, we obtain $h(\alpha)<h(\beta)$, which
implies that $\alpha <_{_\text{XY}} \beta$. Thus,
$\tilde{\Gamma}(\alpha)<_{_\text{XY}} \tilde{\Gamma}(\beta)$.

(3) If $0 < L(\alpha)= L(\beta) < \frac{1}{2}$, then, by formula \eqref{eq-3.1}, we have
    $$
    \tilde{\Gamma}(\alpha)=\left\langle \frac{\mu_{\alpha}}{2(1-L(\alpha))},
    \frac{\mu_{\alpha} + 2-4L(\alpha)}{2(1-L(\alpha))} \right\rangle,
    $$
and
    $$
    \tilde{\Gamma}(\beta)=\left\langle \frac{\mu_{\beta}}{2(1-L(\beta))},
    \frac{\mu_{\beta} + 2-4L(\beta)}{2(1-L(\beta))} \right\rangle.
    $$
Since $\alpha<_{_\mathrm{ZX}} \beta$, we have $h(\alpha) < h(\beta)$. This, together with
$\frac{1-\nu_{\alpha}}{2-h(\alpha)}=L(\alpha)=L(\beta)=\frac{1-\nu_{\beta}}{2-h(\beta)}$, implies
that $\nu_{\alpha}<\nu_{\beta}$. Thus, $\mu_{\alpha}< \mu_{\beta}$ by formula~\eqref{L-formula}. In this case, $h(\tilde{\Gamma}(\alpha))
=\frac{\mu_{\alpha}+1-2L(\alpha)}{1-L(\alpha)}<\frac{\mu_{\beta}+1-2L(\beta)}{1-L(\beta)}=h(\tilde{\Gamma}(\beta))$.
By $s(\tilde{\Gamma}(\alpha))=s(\tilde{\Gamma}(\beta))$, there holds $\tilde{\Gamma}(\alpha) <_{_\text{XY}} \tilde{\Gamma}(\beta)$.

(4) If $\frac{1}{2}<L(\alpha)= L(\beta)<1$, then, similarly to the proof of (3),
it can be verified that $\tilde{\Gamma}(\alpha)<_{_\text{XY}} \tilde{\Gamma}(\beta)$.

(5) If $0<L(\alpha)<L(\beta)\leq\frac{1}{2}$, then, by formula~\eqref{eq-3.1},
we have $s(\tilde{\Gamma}(\alpha))=\frac{2L(\alpha)-1}{1-L(\alpha)} < \frac{2L(\beta)-1}{1-L(\beta)}
=s(\tilde{\Gamma}(\beta))$. Thus, $\tilde{\Gamma}(\alpha)<_{_\text{XY}} \tilde{\Gamma}(\beta)$.

(6) If $\frac{1}{2}\leq L(\alpha)<L(\beta)<1$, then, by formula~\eqref{eq-3.1}, we have
$s(\tilde{\Gamma}(\alpha))=\frac{2L(\alpha)-1}{L(\alpha)}<\frac{2L(\beta)-1}{L(\beta)}
=s(\tilde{\Gamma}(\beta))$. Thus, $\tilde{\Gamma}(\alpha)<_{_\text{XY}} \tilde{\Gamma}(\beta)$.

(7) If $0<L(\alpha)<\frac{1}{2}<L(\beta)<1$, then, by formula~\eqref{eq-3.1}, we have
$s(\tilde{\Gamma}(\alpha))=\frac{2L(\alpha)-1}{1-L(\alpha)}<0<\frac{2L(\beta)-1}{1-L(\beta)}
=s(\tilde{\Gamma}(\beta))$. Thus, $\tilde{\Gamma}(\alpha)<_{_\text{XY}}\tilde{\Gamma}(\beta)$.

By direct calculation, it can be verified that,
for $\alpha=\langle \mu_\alpha, \nu_\alpha\rangle \in \tilde{\mathbb{I}}$,
\begin{equation}
\label{eq-3.2}
\tilde{\Gamma}^{-1}(\alpha)
=
\begin{cases}
\langle 0, 1\rangle, & s(\alpha)=-1, \\
\langle 1, 0\rangle, & s(\alpha)=1, \\
\alpha, & s(\alpha)=0, \\
\Big\langle 2\mu_{\alpha}-s(\alpha)-\frac{2\mu_{\alpha}-2s(\alpha)}{2-s(\alpha)}, & \\
~~~~~~~~~~~~ \frac{2\mu_{\alpha}-2s(\alpha)}{2-s(\alpha)}\Big\rangle, & 0<s(\alpha)<1, \\
\left\langle \frac{2\mu_{\alpha}}{2+s(\alpha)},
\frac{2\mu_{\alpha}(1+s(\alpha))}{2+s(\alpha)}-s(\alpha)\right\rangle, & -1<s(\alpha)<0.
\end{cases}
\end{equation}

Similarly to the above proof, one can show that, for any $\alpha$, $\beta \in\tilde{\mathbb{I}}$
with $\alpha <_{_\text{XY}} \beta$, there holds $\tilde{\Gamma}^{-1}(\alpha)<_{_\mathrm{ZX}}
\tilde{\Gamma}^{-1}(\beta)$. Hence, $\tilde{\Gamma}$ an isomorphism.
\end{proof}

\section{Algebraic structure of $(\tilde{\mathbb{I}}, \leq_{_{\text{XY}}})$}\label{Sec-4}

In  Theorem~\ref{th-isomorphism}, we observe that $(\tilde{\mathbb{I}}, \leq_{_{\text{XY}}})$$\simeq$$(\tilde{\mathbb{I}}, \leq_{_{\text{ZX}}})$; namely, ${(\tilde{\mathbb{I}}, \leq_{_{\text{XY}}})}$ and
$(\tilde{\mathbb{I}}, \leq_{_{\text{ZX}}})$ have the same algebraic structures. Accordingly, it suffices
to study the algebraic structures of $(\tilde{\mathbb{I}}, \leq_{_{\text{XY}}})$.
In the following, we first show that $(\tilde{\mathbb{I}}, \leq_{_{\text{XY}}})$ is a
complete lattice. Then, by introducing the operator `$\neg$' for IFVs, we demonstrate in Corollary~\ref{co-strong-negation} that
it is a strong negation on $(\tilde{\mathbb{I}}, \leq_{_{\text{XY}}})$. We also present in Theorem~\ref{Kleene-algebra-Thm} that
$(\tilde{\mathbb{I}}, \wedge, \vee, \neg)$ is a Kleene algebra.

\begin{proposition}
\label{Pro-order-Xu}
{\rm (i)} The order `$<_{_{\text{XY}}}$' defined in Definition~\ref{de-order(Xu)}
is a linear order on $\tilde{\mathbb{I}}$.

{\rm (ii)} Let $\alpha\leq_{_{\text{XY}}}\beta$ and
$s(\alpha)=s(\beta)$ for $\alpha=\langle \mu_{\alpha}, \nu_{\alpha}\rangle$ and $\beta=\langle\mu_{\beta},
\nu_{\beta}\rangle\in \tilde{\mathbb{I}}$. Then, there holds $(\alpha, \beta)=\{\gamma \in \tilde{\mathbb{I}} \mid
\alpha<_{_{\text{XY}}}\gamma <_{_{\text{XY}}}\beta\}
 =\{(\mu,\nu)\in \tilde{\mathbb{I}} \mid \mu-\nu=s({\alpha}) \text{ and }
 \mu_{\alpha}<\mu<\mu_{\beta}\}$.
\end{proposition}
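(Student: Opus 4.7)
The plan for (i) is to verify the three linear order axioms (LO1)--(LO3) directly from Definition~\ref{de-order(Xu)}. The key structural observation is that the correspondence $\alpha=\langle\mu_{\alpha},\nu_{\alpha}\rangle\mapsto(s(\alpha),h(\alpha))$ is injective on $\tilde{\mathbb{I}}$, since one recovers $\mu_{\alpha}=\tfrac{1}{2}(s(\alpha)+h(\alpha))$ and $\nu_{\alpha}=\tfrac{1}{2}(h(\alpha)-s(\alpha))$. Under this identification, $<_{_{\text{XY}}}$ is precisely the lexicographic order on the pair $(s(\alpha),h(\alpha))$, and such a lex order inherits comparability, nonreflexivity, and transitivity from the usual $<$ on $\mathbb{R}$. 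The only subtle point in (LO1) is that $\alpha\ne\beta$ forces $(s(\alpha),h(\alpha))\ne(s(\beta),h(\beta))$, which is exactly the injectivity just noted; (LO2) and (LO3) then drop out of a short case split on whether the scores are equal.

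For (ii), I plan to prove the two set inclusions. Take any $\gamma=\langle\mu_{\gamma},\nu_{\gamma}\rangle\in\tilde{\mathbb{I}}$ with $\alpha<_{_{\text{XY}}}\gamma<_{_{\text{XY}}}\beta$, under the hypothesis $s(\alpha)=s(\beta)$. A squeeze on scores forces $s(\gamma)=s(\alpha)$: indeed, $s(\gamma)>s(\alpha)=s(\beta)$ would contradict $\gamma<_{_{\text{XY}}}\beta$, while $s(\gamma)<s(\alpha)$ would contradict $\alpha<_{_{\text{XY}}}\gamma$. Hence $\mu_{\gamma}-\nu_{\gamma}=s(\alpha)$, and Definition~\ref{de-order(Xu)} collapses the chain to $h(\alpha)<h(\gamma)<h(\beta)$. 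Substituting $\nu_{\gamma}=\mu_{\gamma}-s(\alpha)$ into $h(\gamma)=\mu_{\gamma}+\nu_{\gamma}$ converts this to $\mu_{\alpha}<\mu_{\gamma}<\mu_{\beta}$. For the reverse inclusion, any $\langle\mu,\nu\rangle\in\tilde{\mathbb{I}}$ with $\mu-\nu=s(\alpha)$ and $\mu_{\alpha}<\mu<\mu_{\beta}$ has score $s(\alpha)=s(\beta)$ and accuracy $h=2\mu-s(\alpha)$ lying strictly between $h(\alpha)=2\mu_{\alpha}-s(\alpha)$ and $h(\beta)=2\mu_{\beta}-s(\alpha)$, placing it strictly between $\alpha$ and $\beta$ in $<_{_{\text{XY}}}$.

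The only piece of bookkeeping requiring care is the verification that the candidate $\langle\mu,\mu-s(\alpha)\rangle$ used in the reverse inclusion actually belongs to $\tilde{\mathbb{I}}$. This is automatic from $\mu\in(\mu_{\alpha},\mu_{\beta})\subseteq[0,1]$ and $\nu=\mu-s(\alpha)\in(\nu_{\alpha},\nu_{\beta})\subseteq[0,1]$, together with $\mu+\nu=2\mu-s(\alpha)<\mu_{\beta}+\nu_{\beta}\le 1$. No deeper obstacle is expected; once the score-matching squeeze is made, the remainder is elementary arithmetic in the $(s,h)$-coordinates induced by the bijection in part (i).
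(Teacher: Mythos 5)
Your proof is correct: the identification $\alpha\mapsto(s(\alpha),h(\alpha))$ with inverse $\mu=\tfrac12(s+h)$, $\nu=\tfrac12(h-s)$ shows $<_{_{\text{XY}}}$ is exactly a lexicographic order (giving (LO1)--(LO3)), and the score-squeeze plus the substitution $h=2\mu-s(\alpha)$ settles both inclusions in (ii). The paper states this proposition without proof, so there is nothing to compare against; your argument is the natural one the authors implicitly rely on, and the only superfluous step is re-verifying membership in $\tilde{\mathbb{I}}$ in the reverse inclusion, which is already built into the right-hand set.
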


\begin{theorem}
\label{Complete-Thm-Wu}
$(\tilde{\mathbb{I}}$, $\leq_{_{\text{XY}}})$ is a complete lattice.
\end{theorem}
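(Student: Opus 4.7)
The plan is to invoke Lemma~\ref{Comp-Lattice-Char}: by Remark~\ref{Remark-1} the poset $(\tilde{\mathbb{I}},\leq_{_{\text{XY}}})$ is bounded with $\bm{0}=\langle 0,1\rangle$ and $\bm{1}=\langle 1,0\rangle$, so it suffices to show that every nonempty $A\subset\tilde{\mathbb{I}}$ admits an infimum. Given such an $A$, I would look first at the scalar set $\{s(\alpha):\alpha\in A\}\subset[-1,1]$ and set $s^{*}=\inf\{s(\alpha):\alpha\in A\}$. The construction of $\inf A$ then splits into two cases depending on whether $s^{*}$ is attained.

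If $s^{*}$ is attained, let $A'=\{\alpha\in A:s(\alpha)=s^{*}\}\neq\varnothing$ and set $h^{*}=\inf\{h(\alpha):\alpha\in A'\}$. Every $\alpha\in A'$ satisfies $h(\alpha)\geq|s(\alpha)|=|s^{*}|$ and $h(\alpha)\leq 1$, so $|s^{*}|\leq h^{*}\leq 1$; hence the candidate $\gamma^{*}=\left\langle\tfrac{s^{*}+h^{*}}{2},\tfrac{h^{*}-s^{*}}{2}\right\rangle$ has nonnegative coordinates summing to $h^{*}\leq 1$, so $\gamma^{*}\in\tilde{\mathbb{I}}$ with $s(\gamma^{*})=s^{*}$ and $h(\gamma^{*})=h^{*}$. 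I would then check $\gamma^{*}$ is a lower bound of $A$ (for $\alpha\in A'$ compare accuracies, for $\alpha\in A\setminus A'$ compare scores) and that it dominates every other lower bound $\delta$: by the infimum property any lower bound must satisfy $s(\delta)\leq s^{*}$, and in case of equality $h(\delta)\leq h(\alpha)$ for all $\alpha\in A'$ forces $h(\delta)\leq h^{*}$, so $\delta\leq_{_{\text{XY}}}\gamma^{*}$.

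If $s^{*}$ is not attained, then $s(\alpha)>s^{*}$ for every $\alpha\in A$, and I would take $\gamma^{*}=\left\langle\tfrac{s^{*}+1}{2},\tfrac{1-s^{*}}{2}\right\rangle$, the unique element on the boundary $\mu+\nu=1$ with score $s^{*}$. Again $\gamma^{*}\in\tilde{\mathbb{I}}$, and $s(\gamma^{*})=s^{*}<s(\alpha)$ for all $\alpha\in A$ makes $\gamma^{*}$ a lower bound. For any other lower bound $\delta$, I would rule out $s(\delta)>s^{*}$ by using that $s^{*}$ is an infimum (so some $\alpha\in A$ has $s(\alpha)<s(\delta)$, contradicting $\delta\leq_{_{\text{XY}}}\alpha$); therefore either $s(\delta)<s^{*}=s(\gamma^{*})$ or $s(\delta)=s^{*}$ together with $h(\delta)\leq 1=h(\gamma^{*})$ (the maximal accuracy at any fixed score), which in both cases yields $\delta\leq_{_{\text{XY}}}\gamma^{*}$.

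The routine part is the lower-bound verification; the only subtlety, which I expect to be the main point to make carefully, is the non-attained case, where the infimum ``jumps'' to the hypotenuse $\mu+\nu=1$. This is exactly the place where the linear order $\leq_{_{\text{XY}}}$ fails to be continuously ordered in the sense of the cut-dichotomy recalled in Section~\ref{Sec-2}, and it is what forces the choice $h(\gamma^{*})=1$ rather than some interior accuracy. Once both cases are handled, Lemma~\ref{Comp-Lattice-Char} immediately gives the completeness of the lattice.
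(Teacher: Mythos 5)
Your proposal is correct and follows essentially the same route as the paper: reduce to Lemma~\ref{Comp-Lattice-Char}, split on whether $\inf\{s(\alpha)\}$ is attained, take the element with minimal accuracy among the minimal-score elements in the attained case, and the boundary element $\big\langle \tfrac{1+s^{*}}{2},\tfrac{1-s^{*}}{2}\big\rangle$ with $\mu+\nu=1$ in the non-attained case. The only difference is cosmetic: you parametrize the candidate infimum by $(s,h)$ while the paper writes it via $\hat{\mu}=\inf\{\mu_{\alpha}\}$, which gives the same element.
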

\begin{proof}
Given a nonempty subset $\Omega \subset \tilde{\mathbb{I}}$, it will be shown that the
greatest lower bound of $\Omega$ exists.

Let $\mathscr{S}(\Omega)=\{\mu_{\alpha}-\nu_{\alpha} \mid \langle\mu_{\alpha}, \nu_{\alpha}\rangle \in \Omega\}$
and $\xi=\inf\mathscr{S}(\Omega)$. Consider the following two cases:

(1) Assume that $\xi\in \mathscr{S}(\Omega)$. Then, there exists
$\alpha_1=\langle \mu_{\alpha_1}, \nu_{\alpha_1}\rangle \in \Omega$ such that
$\mu_{\alpha_1}-\nu_{\alpha_1}=\xi$. This means that the set
$\Omega^{\xi} =\{\mu_{\alpha} \mid \langle\mu_{\alpha}, \mu_{\alpha}-\xi\rangle\in \Omega \}$
is a nonempty set. Let $\hat{\mu}=\inf \Omega^\xi$ and $\hat{\alpha}=\langle\hat{\mu}, \hat{\mu}-\xi\rangle$.
It is obvious that $s(\hat{\alpha})=\xi$. Notice that
\begin{align*}
\hat{\mu}-\xi
&=\inf\Omega^\xi-\xi=\inf\{\mu_{\alpha}-\xi \mid \langle\mu_{\alpha}, \mu_{\alpha}-\xi\rangle\in \Omega\}\\
&=\inf\{\nu_{\alpha} \mid \langle\nu_{\alpha}+\xi, \nu_{\alpha}\rangle\in \Omega\}\geq 0.
\end{align*}
Hence, $\hat{\alpha}\in\tilde{\mathbb{I}}$. Now, we show that $\hat{\alpha}$ is the greatest lower bound of $\Omega$.

1.1) For any $\alpha=\langle \mu_{\alpha},\nu_{\alpha}\rangle\in \tilde{\mathbb{I}}$,
it follows from the choice of $\xi$ that $s(\alpha)\geq \xi=s(\hat{\alpha})$.
(1) If $s(\alpha)> \xi=s(\hat{\alpha})$, then $\alpha>_{_{\text{XY}}}
\hat{\alpha}$; (2) If $s(\alpha)=\xi=s(\hat{\alpha})$, then $\mu_{\alpha}\in\Omega^{\xi}$.
This, together with $\hat{\mu}=\inf\Omega^{\xi}$, implies that $\mu_{\alpha}\geq \hat{\mu}$.
Thus,
 $$
 h(\alpha)=\mu_{\alpha}+\nu_{\alpha}\geq \hat{\mu}=h(\hat{\alpha}).
 $$
Therefore, $\alpha\geq_{_{\text{XY}}}\hat{\alpha}$, i.e., $\hat{\alpha}$ is
a lower bound of $\Omega$.

1.2)  Given a lower bound $\alpha^{\prime}=\langle \mu_{\alpha^\prime}, \nu_{\alpha^\prime}\rangle
\in \tilde{\mathbb{I}}$ of $\Omega$, it follows from Definition~\ref{de-order(Xu)} that
$s(\alpha^\prime)\leq \xi$.

1.2.1)  If $s(\alpha^\prime)<\xi$, then $\alpha^\prime<_{_{\text{XY}}}\hat{\alpha}$
since $s(\hat{\alpha})=\xi$.

1.2.2)  If $s(\alpha^\prime)=\xi $, it follows from Definition~\ref{de-order(Xu)} that,
for any $\alpha=\langle \mu_{\alpha}, \nu_{\alpha}\rangle\in \Omega$ with
$\mu_{\alpha}-\nu_{\alpha}=\xi$, $h(\alpha^\prime) \leq h(\alpha)=\mu_{\alpha}+\nu_{\alpha}$.
It follows that
\begin{align*}
h(\alpha^\prime)
&\leq\inf\{\mu_{\alpha}+\nu_{\alpha} \mid \langle\mu_{\alpha}, \nu_{\alpha}\rangle\in \Omega
\text{ and }  \mu_{\alpha}-\nu_{\alpha}=\xi\} \\
&=\inf\{2\mu_{\alpha}-\xi \mid \langle \mu_{\alpha},\nu_{\alpha}\rangle\in \Omega \text{ and }
\mu_{\alpha}-\nu_{\alpha}=\xi \} \\
&=2\inf \Omega^{\xi}-\xi=2\hat{\mu}-\xi=h(\hat{\alpha}).
\end{align*}
This, together with $s(\alpha^{\prime})=s(\hat{\alpha})$, implies that
$\alpha^{\prime}\leq_{_{\text{XY}}} \hat{\alpha}$.

Summing up 1.1) and 1.2) shows that $\hat{\alpha}$ is the greatest lower bound of $\Omega$ .

(2) Assume that $\xi\notin \mathscr{S}(\Omega)$. Take $\tilde{\alpha}=\big\langle\frac{1+\xi}{2},
\frac{1-\xi}{2}\big\rangle \in \tilde{\mathbb{I}}$. It is obvious that $s(\tilde{\alpha})=\xi$.
Now, we show that $\tilde{\alpha}$ is the
greatest lower bound of $\Omega$.

2.1) It follows from $\xi\notin \mathscr{S}(\Omega)$ that, for any $\alpha\in\Omega$,
 $s(\alpha)>\xi=s(\tilde{\alpha})$. This observation shows that $\alpha>_{_{\text{XY}
 }}\tilde{\alpha}$. Namely, $\tilde{\alpha}$ is a lower bound of $\Omega$.

2.2) Given a lower bound $\alpha^{\prime\prime}=\langle \mu_{\alpha^{\prime\prime}},
\nu_{\alpha^{\prime\prime}}\rangle \in \tilde{\mathbb{I}}$,
it follows from Definition~\ref{de-order(Xu)}
that $s(\alpha^{\prime\prime})\leq \xi$.

2.2.1) If $s(\alpha^{\prime\prime})<\xi$, then
$\alpha^{\prime\prime}<_{_{\text{XY}}}\tilde{\alpha}$ since $s(\tilde{\alpha})=\xi$.

2.2.2) If $s(\alpha^{\prime\prime})=\xi$, then $\alpha^{\prime\prime}\leq_{_{\text{XY}}}\tilde{\alpha}$
since $h(\tilde{\alpha})=\frac{1+\xi}{2}+\frac{1-\xi}{2}=1\geq \mu_{\alpha^{\prime\prime}}
+\nu_{\alpha^{\prime\prime}}=h(\alpha^{\prime\prime})$
and $s(\tilde{\alpha})=\xi$.

Summing up 2.1) and 2.2) shows that $\tilde{\alpha}$ is the greatest lower bound of $\Omega$.

Hence, $(\tilde{\mathbb{I}}$, $\leq_{_{\text{XY}}})$ is a complete lattice by
applying Lemma~\ref{Comp-Lattice-Char}.
\end{proof}

Given any nonempty subset $\Omega\subset\tilde{\mathbb{I}}$, let
$\mathscr{S}(\Omega)=\{\mu_{\alpha}-\nu_{\alpha} \mid \langle\mu_{\alpha},
\nu_{\alpha}\rangle \in\Omega\}$, $\xi(\Omega)=\inf\mathscr{S}(\Omega)$,
and $\eta(\Omega)=\sup\mathscr{S}(\Omega)$.

\begin{remark}
According to the proof of Theorem~\ref{Complete-Thm-Wu}, it holds that
\begin{equation}
\label{eq-4.1}
\inf\Omega=
\begin{cases}
\langle \hat{\mu},\hat{\mu}-\xi(\Omega)\rangle, & \xi(\Omega)\in \mathscr{S}(\Omega), \\
\left\langle \frac{1+\xi(\Omega)}{2}, \frac{1-\xi(\Omega)}{2}\right\rangle, & \xi(\Omega)\notin \mathscr{S}(\Omega),
\end{cases}
\end{equation}
where $\hat{\mu}=\inf\{\mu_{\alpha} \mid
\langle \mu_{\alpha},\mu_{\alpha}-\xi(\Omega)\rangle \in\Omega\}$.

Similarly to the proof of Theorem~\ref{Complete-Thm-Wu}, it can be verified that
\begin{equation}
\label{eq-4.2}
\sup\Omega
=
\begin{cases}
\langle \tilde{\mu}, \tilde{\mu}-\eta(\Omega)\rangle, & \eta(\Omega)\in \mathscr{S}(\Omega), \\
\langle 0, -\eta(\Omega)\rangle, & \eta(\Omega) \notin \mathscr{S}(\Omega) \text{ and } \eta(\Omega)\leq 0, \\
\langle \eta(\Omega), 0 \rangle, & \eta(\Omega) \notin \mathscr{S}(\Omega) \text{ and } \eta(\Omega)> 0,
\end{cases}
\end{equation}
where $\tilde{\mu}=\sup\{\mu_{\alpha} \mid \langle \mu_{\alpha},\mu_{\alpha}-\eta(\Omega)\rangle\in\Omega\}$.
\end{remark}

The operator `$\overline{\alpha}$' in Definition~\ref{Def-Int-Operations}
is not a negation on $\tilde{\mathbb{I}}$ under the order $\leq_{_{\text{XY}}}$.
In the following, by introducing a new operator `$\neg$' for IFVs, we show that
it is a strong negation on $(\tilde{\mathbb{I}}, \leq_{_{\text{XY}}})$ (see Corollary~\ref{co-strong-negation}).
Moreover, we will prove that $(\tilde{\mathbb{I}}, \wedge, \vee, \neg)$ is a Kleene algebra
(see Theorem~\ref{Kleene-algebra-Thm}).

For $\alpha=\langle \mu_\alpha,\nu_\alpha\rangle \in\tilde{\mathbb{I}}$, define
\begin{equation}\label{eq-4.3}
\neg\alpha
=\begin{cases}
\left\langle \frac{1-\mu_\alpha-\nu_\alpha}{2},\frac{1+\mu_\alpha-3\nu_\alpha}{2}\right\rangle, & \mu_\alpha>\nu_\alpha, \\
\left\langle \frac{1}{2}-\mu_\alpha,\frac{1}{2}-\nu_\alpha\right\rangle, & \mu_\alpha=\nu_\alpha, \\
\left\langle \frac{1+\nu_\alpha-3\mu_\alpha}{2},\frac{1-\mu_\alpha-\nu_\alpha}{2}\right\rangle, & \mu_\alpha< \nu_\alpha.
\end{cases}
\end{equation}
Figure~\ref{Fig-2} is the geometric description of $\neg\alpha$.
\begin{figure}[H]
\begin{center}
\scalebox{0.55}{\includegraphics{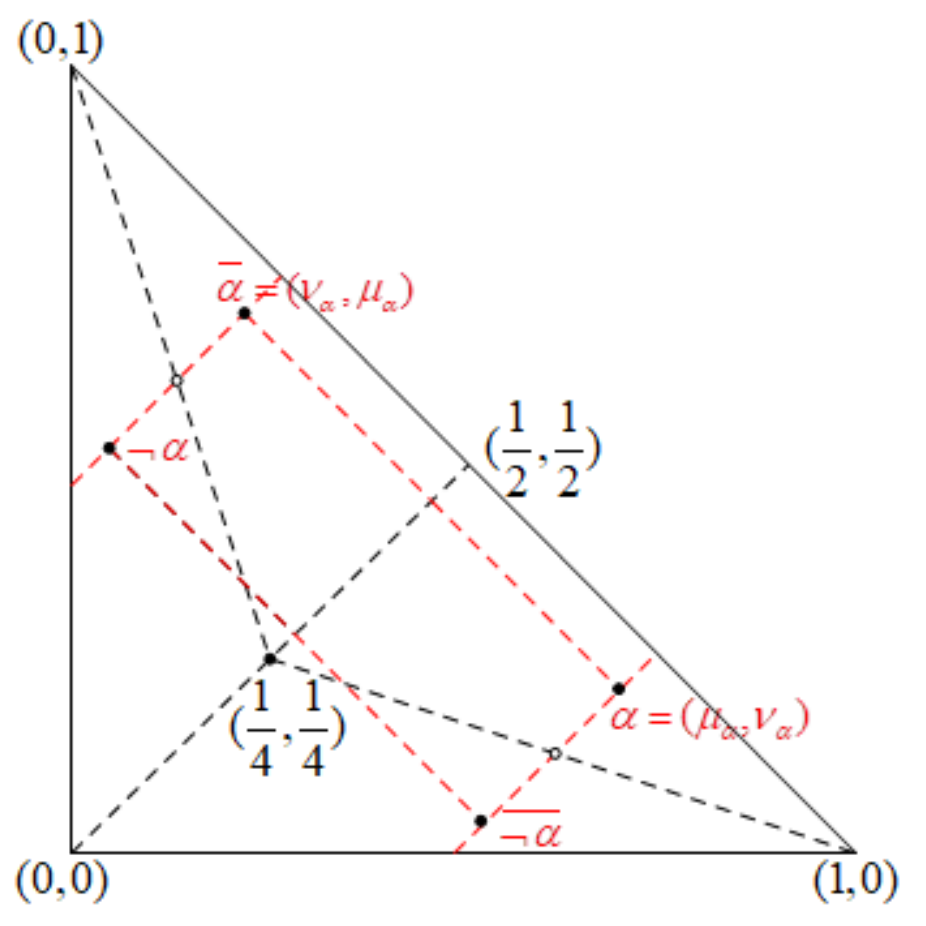}}
\end{center}
\caption{Geometrical interpretation of $\neg\alpha$}
\label{Fig-2}
\end{figure}

\begin{theorem}\label{Negation-Thm}
For $\alpha=\langle \mu_\alpha, \nu_\alpha\rangle$,
$\beta=\langle \mu_\beta, \nu_\beta\rangle\in\tilde{\mathbb{I}}$, we have

(1) $\neg\langle 1, 0\rangle =\langle 0, 1\rangle$ and $\neg \langle0, 1\rangle =\langle 1, 0\rangle$.

(2) $s(\neg\alpha)=-s(\alpha)$ and
   $$
  h(\alpha)=\begin{cases} 1-2\nu_\alpha, &\mu_\alpha > \nu_\alpha,\\
        1-(\mu_\alpha + \nu_\alpha), &\mu_\alpha= \nu_\alpha,\\
        1-2\mu_\alpha, & \mu_\alpha < \nu_\alpha.
        \end{cases}
   $$

(3) $\neg(\neg\alpha)=\alpha$.

(4) $\neg(\alpha\wedge\beta)=(\neg\alpha)\vee(\neg\beta)$.

(5) $\neg(\alpha\vee\beta)=(\neg\alpha)\wedge(\neg\beta)$.
\end{theorem}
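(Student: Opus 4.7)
The plan is to verify parts (1)--(3) directly from formula~\eqref{eq-4.3}, exploiting the fact that the three branches of $\neg$ partition $\tilde{\mathbb{I}}$ by the sign of $s(\alpha)=\mu_\alpha-\nu_\alpha$, and then deduce (4)--(5) from the order-reversing property of $\neg$. En route one should also confirm that $\neg$ takes values in $\tilde{\mathbb{I}}$; the only nontrivial check is the nonnegativity of $\frac{1+\mu_\alpha-3\nu_\alpha}{2}$ in the first branch, which follows because $\mu_\alpha>\nu_\alpha$ together with $\mu_\alpha+\nu_\alpha\leq 1$ forces $\nu_\alpha<\tfrac{1}{2}$.

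Part (1) is by substitution: $\langle 1,0\rangle$ falls in the first branch and evaluates to $\langle 0,1\rangle$, and $\langle 0,1\rangle$ falls in the third branch and evaluates to $\langle 1,0\rangle$. For part (2), I would compute $s(\neg\alpha)$ and $h(\neg\alpha)$ in each of the three branches; the differences and sums telescope to give $s(\neg\alpha)=\nu_\alpha-\mu_\alpha=-s(\alpha)$ in all three cases, while $h(\neg\alpha)$ becomes $1-2\nu_\alpha$, $1-(\mu_\alpha+\nu_\alpha)$, and $1-2\mu_\alpha$ in the first, second, and third branches, respectively (matching the piecewise expression displayed in the theorem, which evidently refers to $h(\neg\alpha)$ rather than $h(\alpha)$).

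For part (3), I would use part (2) to identify which branch of~\eqref{eq-4.3} governs $\neg(\neg\alpha)$: since $s(\neg\alpha)=-s(\alpha)$ has the opposite sign, if $\mu_\alpha>\nu_\alpha$ then $\mu_{\neg\alpha}<\nu_{\neg\alpha}$, so $\neg(\neg\alpha)$ is computed by the third branch. Substituting $\mu_{\neg\alpha}=\frac{1-\mu_\alpha-\nu_\alpha}{2}$ and $\nu_{\neg\alpha}=\frac{1+\mu_\alpha-3\nu_\alpha}{2}$ into the third-branch formula simplifies back to $\langle\mu_\alpha,\nu_\alpha\rangle$. The case $\mu_\alpha<\nu_\alpha$ is symmetric, and the case $\mu_\alpha=\nu_\alpha$ is immediate because the second branch sends the diagonal to itself and is visibly an involution there.

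Parts (4) and (5) reduce to showing that $\neg$ reverses $\leq_{_{\text{XY}}}$. By Definition~\ref{de-order(Xu)}, $\alpha<_{_{\text{XY}}}\beta$ means either $s(\alpha)<s(\beta)$, in which case part (2) directly yields $s(\neg\beta)<s(\neg\alpha)$, or $s(\alpha)=s(\beta)$ and $h(\alpha)<h(\beta)$; in the latter situation $\alpha$ and $\beta$ lie in the same branch of~\eqref{eq-4.3}, so the same formula for $h(\neg\cdot)$ applies to both, and $h(\alpha)<h(\beta)$ translates, via the shared value of $s$, into $\nu_\alpha<\nu_\beta$ when $s>0$, into $h(\alpha)<h(\beta)$ itself when $s=0$, or into $\mu_\alpha<\mu_\beta$ when $s<0$, each of which yields $h(\neg\beta)<h(\neg\alpha)$. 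Since $\leq_{_{\text{XY}}}$ is linear, the lattice operations $\wedge$ and $\vee$ on $(\tilde{\mathbb{I}},\leq_{_{\text{XY}}})$ coincide with the min and max in that order, and any order-reversing involution swaps min with max, which is precisely (4) and (5). The main obstacle is purely bookkeeping the case analysis; the only subtle individual step is confirming $\frac{1+\mu_\alpha-3\nu_\alpha}{2}\geq 0$ in the first branch, and everything else is direct algebraic simplification.
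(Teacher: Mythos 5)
Your proposal is correct and follows essentially the same route as the paper: parts (1)--(3) by direct computation with the branches of formula~\eqref{eq-4.3} (including the branch-identification for $\neg(\neg\alpha)$ via the sign reversal of $s$), and parts (4)--(5) by showing that $\neg$ strictly reverses $\leq_{_{\text{XY}}}$ through a case analysis on $s$ and $h$ and then using linearity of the order to swap $\wedge$ and $\vee$, which is exactly what the paper's case analysis for (4) amounts to (the paper obtains (5) from (3) and (4), a cosmetic difference). Your reading of the displayed formula in (2) as $h(\neg\alpha)$ rather than $h(\alpha)$ is indeed the intended one, and your check that $\tfrac{1+\mu_\alpha-3\nu_\alpha}{2}\geq 0$ is a sound (if implicit in the paper) well-definedness step.
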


\begin{proof}
(1) and (2) follow directly from formula~\eqref{eq-4.3}.

(3) It can be shown that $\neg(\neg\alpha)=\alpha$.
  \begin{itemize}
    \item If $\mu_\alpha=\nu_\alpha$, then $\neg(\neg\alpha)=
    \neg\big\langle\frac{1}{2}-\mu_\alpha, \frac{1}{2}-\nu_\alpha\big\rangle
    =\langle \mu_\alpha, \nu_\alpha\rangle =\alpha$.

    \item If $\mu_\alpha>\nu_\alpha $, then, by formula~\eqref{eq-4.3}, $\neg\alpha=\big\langle
     \frac{1-\mu_\alpha-\nu_\alpha}{2},\frac{1+\mu_\alpha-3\nu_\alpha}{2}\big\rangle$.
        This, together with $\frac{1-\mu_\alpha-\nu_\alpha}{2}-\frac{1+\mu_\alpha-3\nu_\alpha}{2}=
        -(\mu_\alpha-\nu_\alpha)$ and the formula~\eqref{eq-4.3}, implies that
        \begin{align*}
        \neg(\neg\alpha)
        &=\Big\langle \frac{1}{2}\Big[1+\frac{1+\mu_\alpha-3\nu_\alpha}{2}
        -\frac{3}{2}(1-\mu_\alpha-\nu_\alpha )\Big],\\
        &\quad \quad \frac{1}{2}\Big[1-\frac{1-\mu_\alpha-\nu_\alpha}{2}
        -\frac{1+\mu_\alpha-3\nu_\alpha}{2}\Big]\Big\rangle \\
        &=\langle \mu_\alpha, \nu_\alpha\rangle =\alpha.
        \end{align*}

  \item If $\mu_\alpha< \nu_\alpha $, then, similarly to the above proof, $\neg(\neg\alpha)=\alpha$.
  \end{itemize}

(4) The equality $\neg(\alpha\wedge\beta)=(\neg\alpha)\vee(\neg\beta)$ holds trivially
when $\alpha=\beta$. Without loss of generality, assume that $\alpha<_{_{\text{XY}}}\beta$.
We consider the following cases:
   \begin{itemize}
    \item Let $s(\alpha)=s(\beta)$ and $h(\alpha)<h(\beta)$. This means that
    $\mu_{\beta}>\mu_{\alpha}$ and $\nu_{\beta}>\nu_{\alpha}$.
       \begin{itemize}
       \item If $s(\alpha)=s(\beta)=0$, then $\neg(\alpha\wedge\beta)=\neg\alpha
       =\big\langle \frac{1}{2}-\mu_\alpha, \frac{1}{2}-\nu_\alpha\big\rangle$ and
       $\neg\beta=\big\langle \frac{1}{2}-\mu_\beta, \frac{1}{2}-\nu_\beta\big\rangle$.
       Observe that $s(\neg\alpha)=s(\neg\beta)$ and $h(\neg\alpha)=1-h(\alpha)>1-h(\beta)
       =h(\neg\beta)$, i.e., $\neg \alpha>_{_{\text{XY}}} \neg \beta$. Thus,
       $(\neg \alpha) \vee (\neg \beta)=\neg\alpha=\neg (\alpha \wedge \beta)$.

       \item If $s(\alpha)=s(\beta)>0$, then, by formula~\eqref{eq-4.3}, we have
       \begin{equation}
       \label{eq-4.4}
       \neg(\alpha\wedge\beta)=\neg \alpha =
       \left\langle \frac{1-\mu_\alpha-\nu_\alpha}{2},
       \frac{1+\mu_\alpha-3\nu_\alpha}{2}\right\rangle,
       \end{equation}
       and
       \begin{equation}
       \label{eq-4.5}
       \neg\beta=\left\langle \frac{1-\mu_\beta-\nu_\beta}{2},
       \frac{1+\mu_\beta-3\nu_\beta}{2}\right\rangle,
       \end{equation}
       This observation shows that $s(\neg\alpha)=-s(\alpha)=-s(\beta)=s(\neg \beta)$ and
       $h(\neg\alpha)=1-2\nu_\alpha>1-2\nu_\beta=h(\neg\beta)$. Hence, $\neg\alpha>_{_{\text{XY}}}\neg\beta$. This, together with the formula~\eqref{eq-4.4},
       implies that
       $$
       (\neg\alpha)\vee(\neg\beta)=\neg\alpha=\neg(\alpha\wedge\beta).
       $$

       \item If $s(\alpha)=s(\beta)<0$, then, similarly to the above proof,  $\neg(\alpha\wedge\beta)=(\neg\alpha)\vee(\neg\beta)$.
       \end{itemize}

  \item Let $s(\alpha)<s(\beta)$. Then, $s(\neg\alpha)=-s(\alpha)>-s(\beta)=s(\neg\beta)$,
  i.e., $\neg\alpha>_{_{\text{XY}}}\neg\beta$. Thus, $\neg(\alpha\wedge\beta)=
  (\neg\alpha)\vee(\beta)$.
     \end{itemize}

  (5) It follows from (3) and (4) that
  $\neg((\neg\alpha)\wedge(\neg\beta))=(\neg(\neg\alpha))\vee(\neg(\neg\beta))=\alpha\vee\beta$.
  Thus, $(\neg\alpha)\wedge(\neg\beta)=\neg(\alpha\vee\beta)$.
\end{proof}

\begin{corollary}\label{co-strong-negation}
The operator $\neg$ defined by formula~\eqref{eq-4.3}
is a strong negation on $(\tilde{\mathbb{I}}, \leq_{_{\text{XY}}})$.
\end{corollary}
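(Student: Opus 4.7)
The plan is to verify the three defining conditions of a strong negation on $(\tilde{\mathbb{I}}, \leq_{_{\text{XY}}})$ by assembling the items already established in Theorem~\ref{Negation-Thm}. Recall that a strong negation must be a decreasing self-map satisfying $N(\bm{0})=\bm{1}$, $N(\bm{1})=\bm{0}$, and $N(N(x))=x$. By Remark~\ref{Remark-1}, the bottom and top of $(\tilde{\mathbb{I}}, \leq_{_{\text{XY}}})$ are $\bm{0}=\langle 0,1\rangle$ and $\bm{1}=\langle 1,0\rangle$, so part (1) of Theorem~\ref{Negation-Thm} immediately supplies the boundary conditions $\neg\bm{0}=\bm{1}$ and $\neg\bm{1}=\bm{0}$, while part (3) gives the involutivity $\neg(\neg\alpha)=\alpha$.

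The only remaining requirement is that $\neg$ is decreasing, which I would deduce from the De Morgan identity in Theorem~\ref{Negation-Thm}(4). If $\alpha \leq_{_{\text{XY}}} \beta$, then $\alpha \wedge \beta = \alpha$, so applying $\neg$ and invoking Theorem~\ref{Negation-Thm}(4) yields
\begin{equation*}
\neg\alpha \;=\; \neg(\alpha \wedge \beta) \;=\; (\neg\alpha) \vee (\neg\beta),
\end{equation*}
which forces $\neg\beta \leq_{_{\text{XY}}} \neg\alpha$. Hence $\neg$ reverses the order, completing the verification.

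Putting these three observations together establishes the corollary. No genuine obstacle arises at this stage, because the substantive case analysis, namely the three regimes $\mu_\alpha>\nu_\alpha$, $\mu_\alpha=\nu_\alpha$, $\mu_\alpha<\nu_\alpha$ in formula~\eqref{eq-4.3}, has already been absorbed into the proof of Theorem~\ref{Negation-Thm}. The corollary is thus essentially a one-line deduction for each of the three defining properties, and the only mild care needed is in citing Remark~\ref{Remark-1} so that the boundary conditions are stated in the correct lattice-theoretic language.
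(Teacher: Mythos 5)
Your proposal is correct and matches the paper's intent: the paper proves this corollary by simply citing Proposition~\ref{Pro-order-Xu} and Theorem~\ref{Negation-Thm}, and you have just made that deduction explicit (boundary values from (1), involutivity from (3), and antitonicity extracted from the De Morgan law (4) via $\alpha\leq_{_{\text{XY}}}\beta\Rightarrow\alpha\wedge\beta=\alpha$, which is valid and non-circular since the proof of (4) does not use the corollary). The only cosmetic remark is that one could equally read the decreasing property off Theorem~\ref{Negation-Thm}(2), but your route is just as direct.
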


\begin{proof}
It follows directly from Proposition~\ref{Pro-order-Xu} and Theorem~\ref{Negation-Thm}.
\end{proof}


\begin{example}
Take  $\alpha=\langle 0, 0\rangle$ and $\beta=\langle\frac{1}{2}, \frac{1}{2}\rangle$.
It can be verified that $\alpha\cap\beta=\langle 0, \frac{1}{2}\rangle$,
$\alpha\cup \beta=\langle\frac{1}{2}, 0\rangle$, and $\neg\alpha=\beta$.
Therefore, $\neg(\alpha \cap \beta)=\langle\frac{3}{4}, \frac{1}{4}\rangle$ and
$(\neg\alpha)\cup(\neg\beta)=\beta \cup\alpha=\langle\frac{1}{2}, 0\rangle$.
This means that $\neg(\alpha\cap\beta)\neq (\neg\alpha)\cup(\neg \beta)$.
\end{example}

\begin{theorem}
\label{Kleene-algebra-Thm}
$(\tilde{\mathbb{I}}, \wedge, \vee, \neg)$ is a Kleene algebra,
where $\wedge$ and $\vee$ are infimum and supremum operations
under the order $\leq_{_\text{XY}}$, respectively.
\end{theorem}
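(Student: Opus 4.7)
The plan is to verify in turn the two conditions making up a Kleene algebra, namely that $(\tilde{\mathbb{I}}, \wedge, \vee, \neg)$ is a De Morgan algebra and that Kleene's inequality holds. Much of the De Morgan structure is already in place: Theorem~\ref{Complete-Thm-Wu} gives the lattice property, Remark~\ref{Remark-1} identifies the bounds $\langle 0,1\rangle$ and $\langle 1,0\rangle$, and Corollary~\ref{co-strong-negation} certifies that $\neg$ is a strong negation. The only remaining piece of the De Morgan part is distributivity, which is automatic because $\leq_{_{\text{XY}}}$ is a linear order by Proposition~\ref{Pro-order-Xu}(i); every chain is a distributive lattice, since $\wedge=\min$ and $\vee=\max$ and the two distributive laws reduce to a routine case split on the relative order of three elements.

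For Kleene's inequality $\alpha \wedge \neg\alpha \leq_{_{\text{XY}}} \beta \vee \neg\beta$, the guiding observation is Theorem~\ref{Negation-Thm}(2), giving $s(\neg\gamma) = -s(\gamma)$ for every $\gamma \in \tilde{\mathbb{I}}$. Since $\leq_{_{\text{XY}}}$ is total, whichever of $\alpha$ and $\neg\alpha$ is the $\leq_{_{\text{XY}}}$-smaller must have nonpositive score, so $s(\alpha \wedge \neg\alpha) \leq 0$; symmetrically $s(\beta \vee \neg\beta) \geq 0$. By Definition~\ref{de-order(Xu)}, if either inequality is strict, we are immediately done.

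The one remaining case is the tie where both scores equal zero, which forces $\mu_\alpha = \nu_\alpha$ and $\mu_\beta = \nu_\beta$, and here I would compare accuracies via the second clause of~\eqref{eq-4.3}. That clause gives $\neg\alpha = \langle \tfrac{1}{2}-\mu_\alpha, \tfrac{1}{2}-\mu_\alpha\rangle$, so $\{h(\alpha), h(\neg\alpha)\} = \{2\mu_\alpha, 1-2\mu_\alpha\}$, whose minimum is at most $\tfrac{1}{2}$; hence $h(\alpha \wedge \neg\alpha) \leq \tfrac{1}{2}$, and symmetrically $h(\beta \vee \neg\beta) \geq \tfrac{1}{2}$. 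Combined with the equal scores, Definition~\ref{de-order(Xu)} yields $\alpha \wedge \neg\alpha \leq_{_{\text{XY}}} \beta \vee \neg\beta$. I do not anticipate a real obstacle: once distributivity-from-linearity is noted, the whole argument is a sign analysis driven by $s(\neg\gamma) = -s(\gamma)$, with the tie case reducing to the elementary observation that $\min\{2t, 1-2t\} \leq \tfrac{1}{2} \leq \max\{2t, 1-2t\}$ for $t \in [0, \tfrac{1}{2}]$.
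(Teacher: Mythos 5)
Your proposal is correct and follows essentially the same route as the paper: De Morgan structure from linearity of $\leq_{_{\text{XY}}}$ plus Corollary~\ref{co-strong-negation}, then Kleene's inequality via the sign analysis $s(\alpha\wedge\neg\alpha)=\min\{s(\alpha),-s(\alpha)\}\leq 0\leq\max\{s(\beta),-s(\beta)\}=s(\beta\vee\neg\beta)$ from Theorem~\ref{Negation-Thm}(2), with the zero-score tie settled by comparing accuracies across $\tfrac{1}{2}$ (the paper phrases this as comparison with $\langle\tfrac14,\tfrac14\rangle$). Your organization is slightly leaner in that it skips the paper's preliminary split on $\alpha\leq_{_{\text{XY}}}\beta$ versus $\alpha>_{_{\text{XY}}}\beta$, but the substance is the same.
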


\begin{proof}
By Proposition \ref{Pro-order-Xu}, $(\tilde{\mathbb{I}}, \wedge, \vee)$
is a distributive lattice since $\leq_{_{\text{XY}}}$ is a linear order. This, together
with Corollary~\ref{co-strong-negation},
implies that $(\tilde{\mathbb{I}}, \wedge, \vee, \neg)$ is a De Morgan algebra. So, we only need to show that
$(\tilde{\mathbb{I}}, \wedge, \vee, \neg)$ satisfies Kleene's inequality.

For any $\alpha=\langle\mu_{\alpha}, \nu_{\alpha}\rangle$,
$\beta=\langle\mu_{\beta}, \nu_{\beta}\rangle
\in\tilde{\mathbb{I}}$, we consider the following cases:

(1)  If $\alpha \leq_{_{\text{XY}}} \beta$, then $\alpha \wedge (\neg \alpha)
\leq_{_{\text{XY}}} \alpha \leq_{_{\text{XY}}} \beta \leq_{_{\text{XY}}} \beta \vee (\neg \beta)$.

(2) If $\alpha >_{_{\text{XY}}} \beta$, then we have the following:

2.1) If $s(\alpha)>s(\beta)$, it follows from Theorem~\ref{Negation-Thm} (2) that
$s(\alpha \wedge (\neg \alpha))=\min\{s(\alpha), -s(\alpha)\} \leq 0$ and
$s(\beta\vee (\neg \beta))=\max\{s(\beta), -s(\beta)\}\geq 0$. Thus,
\begin{itemize}
\item if $s(\alpha) > 0$, then $s(\alpha \wedge (\neg \alpha))=-s(\alpha)<0
\leq s(\beta \vee (\neg \beta))$, which implies that $\alpha \wedge (\neg \alpha)
<_{_\text{XY}} \beta \vee (\neg \beta)$;

\item if $s(\alpha)=0>s(\beta)$, then $s(\alpha \wedge (\neg \alpha))=0<-s(\beta)=
s(\beta \vee (\neg \beta))$, which implies that
$\alpha \wedge (\neg \alpha)<_{_\text{XY}} \beta \vee (\neg \beta)$;

\item if $s(\alpha)<0$, then $s(\alpha\wedge(\neg \alpha))=
s(\alpha)<0\leq s(\beta\vee(\neg \beta))$, which implies that
$\alpha \wedge(\neg \alpha)<_{_{\text{XY}}}\beta\vee(\neg \beta)$.
\end{itemize}

Therefore, $\alpha\wedge(\neg \alpha)<_{_\text{XY}}
\beta \vee(\neg \beta)$.

2.2) If $s(\alpha)=s(\beta)$ and $h(\alpha)>h(\beta)$,
it follows from Theorem~\ref{Negation-Thm} (2) that
$s(\alpha\wedge (\neg\alpha))=\min\{s(\alpha), -s(\alpha)\}=
-\max\{s(\alpha), -s(\alpha)\}=-\max\{s(\beta), -s(\beta)\}
=-s(\beta\vee(\neg \beta))$. Thus,
\begin{itemize}
\item if $s(\alpha) \neq 0$, then $s(\alpha \wedge(\neg \alpha))
=\min \{s(\alpha), -s(\alpha)\}<0< \max\{s(\beta), -s(\beta)\}
=s(\beta \vee(\neg \beta))$, which implies that $\alpha \wedge(\neg\alpha)
<_{_\text{XY}} \beta \vee (\neg \beta)$;

\item if $s(\alpha)=s(\beta)=0$, i.e., $\mu_{\alpha}=\nu_{\alpha}$ and
$\mu_{\beta}=\nu_{\beta}$, then $\neg\alpha=\left\langle\frac{1}{2}-\mu_{\alpha},
\frac{1}{2}-\mu_{\alpha}\right\rangle$ and $\neg\beta=\left\langle\frac{1}{2}-\mu_{\beta},
\frac{1}{2}-\mu_{\beta}\right\rangle$. This implies that
\begin{align*}
\alpha\wedge(\neg \alpha)&=\left\langle \min \left\{\mu_{\alpha},
\frac{1}{2}-\mu_{\alpha} \right\},
\min\left\{\mu_{\alpha}, \frac{1}{2}-\mu_{\alpha} \right\} \right\rangle\\
&\leq_{_{\text{XY}}} \left\langle\frac{1}{4} , \frac{1}{4} \right\rangle,
\end{align*}
and
\begin{align*}
\beta \vee (\neg \beta)&=\left\langle \max \left\{\mu_{\beta},
\frac{1}{2}-\mu_{\beta} \right\}, \max \left\{\mu_{\beta},
\frac{1}{2}-\mu_{\beta} \right\} \right\rangle\\
&\geq_{_{\text{XY}}} \left\langle\frac{1}{4} , \frac{1}{4} \right\rangle.
\end{align*}
Therefore, $\alpha \wedge (\neg \alpha)
\leq _{_\text{XY}} \beta \vee (\neg \beta)$.
\end{itemize}

Summing up the above shows that
$\alpha\wedge(\neg \alpha)\leq_{_\text{XY}}\beta\vee (\neg \beta)$.
\end{proof}

By Theorems~\ref{th-isomorphism}, \ref{Complete-Thm-Wu},
and \ref{Kleene-algebra-Thm}, we have the following results.

\begin{theorem}
\label{Thm-ZX-Comp}
$(\tilde{\mathbb{I}}, \leq _{_\mathrm{ZX}})$ is a complete lattice.
\end{theorem}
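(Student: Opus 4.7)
The plan is to deduce this completeness result directly by transporting the complete lattice structure of $(\tilde{\mathbb{I}}, \leq_{_{\text{XY}}})$ (already established in Theorem~\ref{Complete-Thm-Wu}) through the order isomorphism $\tilde{\Gamma}$ constructed in Theorem~\ref{th-isomorphism}. Since order isomorphisms preserve suprema and infima of arbitrary subsets, completeness is an isomorphism invariant, and the conclusion should follow almost mechanically.

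More concretely, first I would invoke Lemma~\ref{Comp-Lattice-Char}: because $\langle 0, 1\rangle$ and $\langle 1, 0\rangle$ are the bottom and top elements of $\tilde{\mathbb{I}}$ under $\leq_{_{\text{ZX}}}$ (Remark~\ref{Remark-1}), it suffices to show that every nonempty subset $\Omega \subset \tilde{\mathbb{I}}$ has an infimum with respect to $\leq_{_{\text{ZX}}}$. Given such an $\Omega$, consider its image $\tilde{\Gamma}(\Omega) \subset \tilde{\mathbb{I}}$. By Theorem~\ref{Complete-Thm-Wu}, the set $\tilde{\Gamma}(\Omega)$ has an infimum $\gamma \in \tilde{\mathbb{I}}$ under $\leq_{_{\text{XY}}}$, given explicitly by formula~\eqref{eq-4.1}. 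I would then propose $\tilde{\Gamma}^{-1}(\gamma)$ as the infimum of $\Omega$ under $\leq_{_{\text{ZX}}}$.

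The verification consists of two standard checks. First, $\tilde{\Gamma}^{-1}(\gamma)$ is a lower bound: for every $\alpha \in \Omega$ we have $\gamma \leq_{_{\text{XY}}} \tilde{\Gamma}(\alpha)$, so the monotonicity of $\tilde{\Gamma}^{-1}$ (established in Theorem~\ref{th-isomorphism}) yields $\tilde{\Gamma}^{-1}(\gamma) \leq_{_{\text{ZX}}} \alpha$. Second, it is the greatest lower bound: if $\beta \leq_{_{\text{ZX}}} \alpha$ for all $\alpha \in \Omega$, then monotonicity of $\tilde{\Gamma}$ gives $\tilde{\Gamma}(\beta) \leq_{_{\text{XY}}} \tilde{\Gamma}(\alpha)$ for all $\alpha \in \Omega$, whence $\tilde{\Gamma}(\beta) \leq_{_{\text{XY}}} \gamma$, and one more application of $\tilde{\Gamma}^{-1}$ (monotone) gives $\beta \leq_{_{\text{ZX}}} \tilde{\Gamma}^{-1}(\gamma)$.

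There is no real obstacle here; the whole argument is really just the folklore statement that order isomorphisms transfer the property of being a complete lattice. The only subtlety worth mentioning is that one must use the monotonicity of both $\tilde{\Gamma}$ and $\tilde{\Gamma}^{-1}$, but both are part of the definition of an isomorphism and are already verified in the proof of Theorem~\ref{th-isomorphism}. If desired, one could alternatively provide an explicit description of $\inf_{\leq_{_{\text{ZX}}}} \Omega$ by post-composing formula~\eqref{eq-4.1} with $\tilde{\Gamma}^{-1}$ as given in formula~\eqref{eq-3.2}, but this is not logically necessary for the theorem and would only make the proof longer.
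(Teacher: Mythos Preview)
Your proposal is correct and is essentially the same approach as the paper, which simply states that the result follows from Theorems~\ref{th-isomorphism} and \ref{Complete-Thm-Wu} without further detail. Your write-up merely makes explicit the routine verification that completeness transfers along an order isomorphism.
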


\begin{theorem}
\label{Thm-ZX-Kleene}
$(\tilde{\mathbb{I}}, \wedge, \vee, \tilde{\Gamma}^{-1} \circ \neg \circ \tilde{\Gamma})$
is a Kleene algebra, where $\wedge$ and $\vee$ are infimum and supremum operations under
the order $\leq _{_\mathrm{ZX}}$, respectively, and $\tilde{\Gamma}$ is defined by the formula~\eqref{eq-3.2}.
\end{theorem}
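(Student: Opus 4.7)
My plan is to transport the Kleene algebra structure from $(\tilde{\mathbb{I}}, \leq_{_\text{XY}})$, established in Theorem~\ref{Kleene-algebra-Thm}, to $(\tilde{\mathbb{I}}, \leq_{_\mathrm{ZX}})$ along the isomorphism $\tilde{\Gamma}$ furnished by Theorem~\ref{th-isomorphism}. The guiding principle is that any order isomorphism between linearly ordered sets automatically preserves binary meets and joins, so every lattice-theoretic property transfers effortlessly across $\tilde{\Gamma}$.

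First I would record that, since $\tilde{\Gamma}: (\tilde{\mathbb{I}}, \leq_{_\mathrm{ZX}}) \to (\tilde{\mathbb{I}}, \leq_{_\text{XY}})$ is an isomorphism of linearly ordered sets, for any $\alpha, \beta \in \tilde{\mathbb{I}}$ there hold
\begin{align*}
\tilde{\Gamma}(\alpha \wedge \beta) &= \tilde{\Gamma}(\alpha) \wedge \tilde{\Gamma}(\beta), \\
\tilde{\Gamma}(\alpha \vee \beta) &= \tilde{\Gamma}(\alpha) \vee \tilde{\Gamma}(\beta),
\end{align*}
where on each side $\wedge, \vee$ denote the infimum/supremum under the ambient linear order. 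Consequently, as $(\tilde{\mathbb{I}}, \leq_{_\text{XY}}, \wedge, \vee)$ is a distributive lattice (indeed, every linearly ordered lattice is distributive), so is $(\tilde{\mathbb{I}}, \leq_{_\mathrm{ZX}}, \wedge, \vee)$. Moreover, by Remark~\ref{Remark-1}, the bottom element $\langle 0,1\rangle$ and top element $\langle 1,0\rangle$ coincide under both orders, and a direct inspection of \eqref{eq-3.1} shows that $\tilde{\Gamma}$ fixes each of them.

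Next I would verify that $N := \tilde{\Gamma}^{-1} \circ \neg \circ \tilde{\Gamma}$ is a strong negation on $(\tilde{\mathbb{I}}, \leq_{_\mathrm{ZX}})$. The involutive property is immediate from Theorem~\ref{Negation-Thm}(3):
$$
N \circ N = \tilde{\Gamma}^{-1} \circ \neg \circ \tilde{\Gamma} \circ \tilde{\Gamma}^{-1} \circ \neg \circ \tilde{\Gamma} = \tilde{\Gamma}^{-1} \circ \mathrm{id} \circ \tilde{\Gamma} = \mathrm{id}.
$$
Because $\neg$ is order-reversing on $(\tilde{\mathbb{I}}, \leq_{_\text{XY}})$ (Corollary~\ref{co-strong-negation}) while $\tilde{\Gamma}$ and $\tilde{\Gamma}^{-1}$ are order-preserving, $N$ is order-reversing on $(\tilde{\mathbb{I}}, \leq_{_\mathrm{ZX}})$; combined with $N(\langle 0, 1 \rangle) = \langle 1, 0 \rangle$ and $N(\langle 1, 0\rangle) = \langle 0, 1\rangle$, this shows that $N$ is a strong negation. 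Hence $(\tilde{\mathbb{I}}, \wedge, \vee, N)$ is a De Morgan algebra.

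Finally, for Kleene's inequality, given any $\alpha, \beta \in \tilde{\mathbb{I}}$ I would apply $\tilde{\Gamma}$ and use its compatibility with meets and joins together with $\tilde{\Gamma} \circ N = \neg \circ \tilde{\Gamma}$ to get
\begin{align*}
\tilde{\Gamma}\bigl(\alpha \wedge N(\alpha)\bigr) &= \tilde{\Gamma}(\alpha) \wedge \neg \tilde{\Gamma}(\alpha), \\
\tilde{\Gamma}\bigl(\beta \vee N(\beta)\bigr) &= \tilde{\Gamma}(\beta) \vee \neg \tilde{\Gamma}(\beta),
\end{align*}
and then invoke Theorem~\ref{Kleene-algebra-Thm} on the right-hand sides and the monotonicity of $\tilde{\Gamma}^{-1}$ to conclude that $\alpha \wedge N(\alpha) \leq_{_\mathrm{ZX}} \beta \vee N(\beta)$. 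There is no genuinely hard step: the argument is a purely formal transport of structure along an isomorphism. The only point I would highlight at the outset is the observation that an order isomorphism between linearly ordered sets automatically respects the lattice operations, since this is what lets the distributive, De Morgan, and Kleene axioms migrate without any extra computation.
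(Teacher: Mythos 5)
Your proposal is correct and follows essentially the same route as the paper, which simply derives Theorem~\ref{Thm-ZX-Kleene} as a transport of structure along the isomorphism $\tilde{\Gamma}$ of Theorem~\ref{th-isomorphism} combined with Theorems~\ref{Complete-Thm-Wu} and \ref{Kleene-algebra-Thm}; your write-up just makes the standard conjugation argument explicit (and correctly uses \eqref{eq-3.1} for $\tilde{\Gamma}$, where the paper's statement has a harmless slip citing \eqref{eq-3.2}).
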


\begin{remark}
Theorems~\ref{Negation-Thm} and \ref{Thm-ZX-Kleene} and Corollary~\ref{co-strong-negation} show that
the strong negation operators $\neg$ and $\tilde{\Gamma}^{-1} \circ \neg \circ \tilde{\Gamma}$
are new operators on both IFVs and IFSs. This partially answers Problem~\ref{Prob-1}.
\end{remark}

\begin{remark}
Through the linear orders $\leq _{_\mathrm{XY}}$ and $\leq _{_\mathrm{ZX}}$, and
the strong negation operators $\neg$ and $\tilde{\Gamma}^{-1} \circ \neg \circ \tilde{\Gamma}$,
we obtain in Theorems~\ref{Complete-Thm-Wu}, \ref{Kleene-algebra-Thm},
\ref{Thm-ZX-Comp}, and \ref{Thm-ZX-Kleene} some IF algebraic properties from exactly different perspectives, which partially answers Problem~\ref{Prob-2}.
\end{remark}

\begin{remark}
Because both $(\tilde{\mathbb{I}}, \leq_{_\mathrm{XY}})$ and $(\tilde{\mathbb{I}}, \leq _{_\mathrm{ZX}})$
are complete lattice, we can establish the decomposition theorem and Zadeh's extension principle
for IFSs as follows. For an IFS $I\in \mathrm{IFS}(X)$:

(Decomposition Theorem) For every $x\in X$,
$$
I(x)=\bigvee\{\alpha\in \tilde{\mathbb{I}}
\mid x\in I_{\alpha}\},
$$
where $I(x)=\langle \mu_{_{I}}(x), \nu_{_{I}}(x)\rangle$,
$I_{\alpha}=\{z\in X \mid I(z)\geq_{_\mathrm{XY}}\alpha\}$ and
$\vee$ is the supremum under the linear order $\leq_{_\mathrm{XY}}$.

(Zadeh's Extension Principle) Let $X$ and $Y$ be two nonempty sets and
$f: X\rightarrow Y$ be a mapping from $X$ to $Y$. Define a mapping
$\mathfrak{F}: \tilde{\mathbb{I}}^{X} \rightarrow \tilde{\mathbb{I}}^{Y}$ by
\begin{align*}
\mathfrak{F}: \tilde{\mathbb{I}}^{X} &\rightarrow \tilde{\mathbb{I}}^{Y}\\
I& \mapsto \mathfrak{F}(I)(y)=
\begin{cases}
\langle 0, 1\rangle, & f^{-1}(\{y\})=\varnothing, \\
\bigvee_{x\in f^{-1}(\{y\})}I(x), & f^{-1}(\{y\})\neq\varnothing,
\end{cases}
\end{align*}
which is called {\it Zadeh's extension mapping} of $f$ in the sense of
IFSs.
\end{remark}

\section{Topological structure of $(\tilde{\mathbb{I}}, \leq_{_{\text{XY}}})$
under the order topology}\label{Sec-5}

This section is devoted to investigating the topological structure of $(\tilde{\mathbb{I}}, \leq_{_{\text{XY}}})$
under the order topology. In particular, we show that the space $(\tilde{\mathbb{I}}, \leq_{_{\text{XY}}})$
under the order topology induced by the linear order $<_{_{\text{XY}}}$
is not separable and metrizable but compact and connected. We also demonstrate
that $(\tilde{\mathbb{I}}, \leq_{_{\text{ZX}}})$ has the same
topological structure as $(\tilde{\mathbb{I}}, \leq_{_{\text{XY}}})$ since
they are homeomorphic.

\begin{definition}[{\textrm{\protect\cite{Eng1977}}}]
A topological space with a countable dense subset is called \textit{separable}.
\end{definition}

\begin{definition}[{\textrm{\protect\cite{Eng1977}}}]
A topological space $X$ is called a \textit{compact space} if $X$ is a Hausdorff space and
every open cover of $X$ has a finite subcover; namely, there
exists a finite set $\{s_{1}, s_{2}, \ldots, s_{n}\} \subset \mathscr{G}$
such that $U_{s_{1}}\cup U_{s_{2}}\cup \cdots \cup U_{s_{n}}=X$ for every open cover
$\{U_{s}\}_{s\in\mathscr{G}}$ of the space $X$.
\end{definition}

\begin{definition}[{\textrm{\protect\cite{Eng1977, M1975}}}]
Let $X$ be a topological space. A \textit{separation} of $X$ is a pair $(U,V)$
of disjoint nonempty open subsets of $X$ whose union is $X$. The space $X$ is
said to be \textit{connected} if there does not exist a separation of $X$.
\end{definition}

\begin{definition}[{\textrm{\protect\cite{Eng1977}}}]
Let $X$ be a topological space. It is called \textit{metrizable} if there is a metric
$\varrho$ on $X$ such that the topology induced by this metric
coincides with the original topology of $X$.
\end{definition}

\begin{lemma}
[{\textrm{\protect\cite[Problem~3.12.3 (a)]{Eng1977}, \cite{HK1910}}}]
\label{Compact-Cha}
A space $X$ with the topology induced by a linear order $\prec$ is
compact if and only if every subset A of $X$ has the least upper bound.
\end{lemma}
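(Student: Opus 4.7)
The plan is to prove the equivalence by establishing each direction separately, working with the basis of intervals $(a,b)$, $(\leftarrow,b)$, $(a,\rightarrow)$ given in Definition~\ref{Def-order-Topology}.

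For the forward implication, assume $X$ is compact. First observe that $X$ must possess a maximum element: otherwise the collection $\{(\leftarrow, x) : x \in X\}$ is an open cover (every point has a strict upper bound), and any finite subcover $(\leftarrow, x_1), \ldots, (\leftarrow, x_n)$ yields a contradiction by noting that $\bar{x} = \max\{x_1, \ldots, x_n\}$ cannot lie in any $(\leftarrow, x_i)$. Hence, for any subset $A \subseteq X$, the set $U$ of upper bounds of $A$ is nonempty. Suppose for contradiction that $U$ has no minimum. Then every $x \in X$ either fails to be an upper bound of $A$, placing $x$ in $(\leftarrow, a)$ for some $a \in A$, or is an upper bound but not the least, placing $x$ in $(u, \rightarrow)$ for some $u \in U$. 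Extract a finite subcover of $\{(\leftarrow, a) : a \in A\} \cup \{(u, \rightarrow) : u \in U\}$, and let $a^{*}$ be the maximum among the chosen $a$'s and $u^{*}$ the minimum among the chosen $u$'s. Since $u^{*}$ is an upper bound of $A$, we have $a^{*} \preceq u^{*}$, whence $u^{*}$ lies in none of the chosen basic sets, a contradiction.

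For the reverse implication, assume every subset of $X$ has a least upper bound; in particular $X$ possesses a maximum $M = \sup X$ and a minimum $m = \sup \varnothing$ (every point of $X$ is vacuously an upper bound of $\varnothing$). Given an open cover $\mathcal{U}$ of $X$, I would invoke a linearly-ordered analogue of the Heine--Borel argument by considering
$$B = \{x \in X : [m, x] \text{ admits a finite subcover from } \mathcal{U}\},$$
which contains $m$, and then showing that $b = \sup B$ satisfies both $b \in B$ and $b = M$. For the first, choose $U_0 \in \mathcal{U}$ with $b \in U_0$ and a basic neighborhood $V \subseteq U_0$ of $b$; the definition of $b = \sup B$ supplies $x^{*} \in B \cap V$ with $x^{*} \preceq b$, and the finite subcover of $[m, x^{*}]$ together with $U_0$ then covers $[m, b]$. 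For the second, if $b \prec M$ then $V$ extends strictly past $b$ to some $c \succ b$, and the same enlarged finite subcover covers $[m, c']$ for some $c' \succ b$, contradicting the maximality of $b$.

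The principal obstacle is the detailed case analysis at $b$ in the reverse direction, which must distinguish according to whether $b$ admits an immediate predecessor or immediate successor in $X$ (equivalently, according to which of the four cut-types (\ref{Cut-i})--(\ref{Cut-iv}) preceding Definition~\ref{Def-order-Topology} is realized at $b$). The basic neighborhood $V$ must be selected as an honest open interval, left ray, or right ray accordingly, and the ``reaching $b$ from below'' and ``extending past $b$'' steps must each be verified in these sub-cases to produce genuine finite subcovers of the requisite closed intervals; this careful order-topological bookkeeping is where the routine-sounding argument actually requires attention.
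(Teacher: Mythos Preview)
The paper does not supply its own proof of this lemma: it is stated with citations to Engelking's \emph{General Topology} (Problem~3.12.3(a)) and to Haar--K\"onig, and is then invoked as a black box in the proof of Proposition~\ref{Compact-Lattice}. There is therefore no in-paper argument to compare against.

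That said, your outline is the standard one and is correct in substance. A couple of small points worth tightening. In the forward direction, your extraction of $a^{*}$ and $u^{*}$ tacitly assumes both families contribute to the finite subcover; you should note separately that at least one $(u,\rightarrow)$ must appear (since the maximum of $X$, which you have already produced, lies in no $(\leftarrow,a)$), and that if no $(\leftarrow,a)$ appears then $u^{*}$ alone already fails to be covered. The empty case $A=\varnothing$ is handled by the same mechanism and yields the minimum of $X$, as you use later. In the reverse direction you have correctly isolated the genuine work: when $b=\sup B$ has an immediate predecessor $b^{-}$, one cannot simply ``reach $b$ from below'' through $B$; instead one argues that $b\notin B$ would force $b^{-}$ to be an upper bound of $B$ strictly below $\sup B$, a contradiction, so $b\in B$ outright in that sub-case. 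The immediate-successor sub-case for ``extending past $b$'' is symmetric. Finally, since the paper's definition of compactness includes Hausdorff, you might remark that the order topology is automatically Hausdorff, which is a one-line observation.
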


\begin{lemma}
[{\textrm{\protect\cite[Problem~6.3.2 (a)]{Eng1977}}}]
\label{Connect-Cha}
A space $X$  with the topology induced by a linear order $\prec$ is
connected if and only if it is a continuously ordered set by $\prec$.
\end{lemma}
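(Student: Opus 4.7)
The plan is to establish the equivalence by proving both implications separately, exploiting the basic-open-set structure of the order topology described in Definition~\ref{Def-order-Topology}.

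For the forward implication (connected implies continuously ordered), I will argue contrapositively. If $X$ admits a gap $(D, E)$, I will show that $D$ and $E$ are each open, yielding a nontrivial clopen partition. Given any $x \in D$, the no-largest-element condition on $D$ furnishes $x' \in D$ with $x \prec x'$; then $(\leftarrow, x')$ is a basic open set containing $x$, and it lies entirely in $D$ since every element of $E$ is strictly above $x' \in D$ by the cut property. A symmetric argument using intervals of the form $(y', \rightarrow)$ shows that $E$ is open, and the resulting separation contradicts connectedness. The same strategy dispatches jumps if one interprets ``continuously ordered'' in the broader sense excluding them: when $a = \max D$ and $b = \min E$, the identifications $D = (\leftarrow, b)$ and $E = (a, \rightarrow)$ are already basic open sets.

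For the reverse implication, I will argue by contradiction. Suppose $X = U \sqcup V$ is a separation into disjoint nonempty open sets; pick $u \in U$ and $v \in V$, and without loss of generality assume $u \prec v$. Consider $A = \{x \in X : u \preceq x \text{ and } [u, x] \subset U\}$, which is nonempty (it contains $u$) and bounded above by $v$. Define $D = \{z \in X : z \preceq a \text{ for some } a \in A\}$ and $E = X \setminus D$; this is a cut of $X$ with $u \in D$ and $v \in E$. The no-gap hypothesis forces either $D$ to admit a largest element $s$ or $E$ to admit a smallest element $s$. A case analysis on whether $s \in U$ or $s \in V$, together with the openness of each half and the basic-open-set structure of the order topology, produces in every configuration a point strictly beyond $s$ that ought to lie in the half opposite to what the construction allows: if $s \in U$, one finds $x' \succ s$ with $[s, x'] \subset U$, enlarging $A$; if $s \in V$, one finds $x' \prec s$ with $(x', s] \subset V$, supplying a smaller upper bound for $A$. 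Either case contradicts the construction of $s$.

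I expect the main obstacle to be the careful bookkeeping in the reverse direction. In a general linearly ordered set one cannot simply invoke ``the supremum'' of $A$ as on the real line; the existence and location of the separating element $s$ must be read off from the cut classification given in the Preliminaries, and each of the basic-open-set configurations $(a, b)$, $(\leftarrow, b)$, $(a, \rightarrow)$ of Definition~\ref{Def-order-Topology} must be verified when deriving the contradiction. Since the lemma is attributed to Engelking, no fundamentally new idea is required; the real cost lies in the case analysis.
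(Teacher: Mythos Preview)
The paper does not supply its own proof of this lemma: it is quoted verbatim as Problem~6.3.2(a) from Engelking and used as a black box in the proof of the connectedness proposition for $\tilde{\mathbb{I}}$. There is therefore nothing to compare your argument against.

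Your sketch is the standard proof and is essentially sound. One point worth emphasising is the terminological issue you already flagged: with the paper's own definition (``continuously ordered'' means only that no cut is a gap), the lemma as stated is false---the two-point linearly ordered set has no gaps yet is disconnected. Connectedness is in fact equivalent to the conjunction ``no gaps and no jumps,'' and your reverse-direction argument tacitly uses density (no jumps) when you pass from a basic open neighbourhood of $s$ to a point strictly between $s$ and an endpoint. The paper's application of the lemma in the proof of the connectedness of $\tilde{\mathbb{I}}$ does verify both conditions, so the intended reading is indeed the broader one you identify.
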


\begin{proposition}
\label{Compact-Lattice}
The space $\tilde{\mathbb{I}}$ with the order topology induced
by the linear order $<_{_{\text{XY}}}$ defined in Definition~\ref{de-order(Xu)}
is compact.
\end{proposition}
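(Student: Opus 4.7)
The plan is to recognize that the proposition is essentially immediate once the two tools assembled earlier in the paper are combined. The key characterization is Lemma \ref{Compact-Cha}, which says that a linearly ordered space is compact precisely when every subset admits a least upper bound. Therefore the entire task reduces to verifying that every subset of $\tilde{\mathbb{I}}$ has a supremum under $\leq_{_{\text{XY}}}$.

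For the first step, I would simply cite Theorem \ref{Complete-Thm-Wu}, which already establishes that $(\tilde{\mathbb{I}}, \leq_{_{\text{XY}}})$ is a complete lattice. By Definition \ref{Complete-Lattice} this guarantees that every \emph{nonempty} subset of $\tilde{\mathbb{I}}$ has both a supremum and an infimum. In fact the explicit formula \eqref{eq-4.2} in the remark following Theorem \ref{Complete-Thm-Wu} can be invoked to exhibit the supremum concretely, should a more constructive presentation be desired.

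The second step handles the one case not literally covered by Theorem \ref{Complete-Thm-Wu}, namely the empty subset. Since a supremum of $\varnothing$ in a bounded poset is the bottom element, and since Remark \ref{Remark-1} identifies $\langle 0, 1\rangle$ as the bottom element of $(\tilde{\mathbb{I}}, \leq_{_{\text{XY}}})$, the empty set also possesses a least upper bound. Combining steps one and two, every subset of $\tilde{\mathbb{I}}$ has a least upper bound, and Lemma \ref{Compact-Cha} yields compactness at once.

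I do not anticipate any real obstacle in this argument, because the genuine work has already been done in the proof of Theorem \ref{Complete-Thm-Wu} (the delicate case analysis on whether $\xi(\Omega) \in \mathscr{S}(\Omega)$). The only minor bookkeeping point is to confirm that the ``every subset'' phrasing in Lemma \ref{Compact-Cha} is compatible with the nonempty-subset convention built into Definition \ref{Complete-Lattice} and Lemma \ref{Comp-Lattice-Char}; this is resolved cleanly by Remark \ref{Remark-1}. Hence the proof of Proposition \ref{Compact-Lattice} will consist of essentially one sentence invoking these three previously established results.
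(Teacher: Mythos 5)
Your proposal is correct and takes essentially the same route as the paper, which proves this proposition by directly combining Theorem~\ref{Complete-Thm-Wu} with Lemma~\ref{Compact-Cha}. Your extra remark that $\sup\varnothing$ is the bottom element $\langle 0,1\rangle$ (via Remark~\ref{Remark-1}) is a harmless piece of bookkeeping the paper leaves implicit.
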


\begin{proof}
It follows directly from Theorem \ref{Complete-Thm-Wu} and Lemma \ref{Compact-Cha}.
\end{proof}

\begin{proposition}
The space $\tilde{\mathbb{I}}$ with the order topology induced by the
linear order $<_{_{\text{XY}}}$ defined in Definition \ref{de-order(Xu)}
is connected.
\end{proposition}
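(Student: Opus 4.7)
The plan is to invoke Lemma~\ref{Connect-Cha}, which reduces connectedness under the order topology to showing that $(\tilde{\mathbb{I}}, <_{_{\text{XY}}})$ is a continuously ordered set, i.e.\ that no cut of $\tilde{\mathbb{I}}$ is a gap. Since Proposition~\ref{Pro-order-Xu}(i) already guarantees that $<_{_{\text{XY}}}$ is a linear order, and Theorem~\ref{Complete-Thm-Wu} guarantees that $(\tilde{\mathbb{I}}, \leq_{_{\text{XY}}})$ is a complete lattice, I will use the completeness to supply, for every cut, either a maximum of the lower section or a minimum of the upper section; this rules out condition (iv) in the classification of cuts.

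Concretely, I would take an arbitrary cut $(D, E)$ of $\tilde{\mathbb{I}}$ and set $u := \sup D$, which exists by Theorem~\ref{Complete-Thm-Wu}. Since $D \cup E = \tilde{\mathbb{I}}$, there are only two possibilities. If $u \in D$, then $u$ is the largest element of $D$, so the cut is not a gap. If $u \in E$, I would argue that $u$ is in fact the smallest element of $E$: by the cut property, every $y \in E$ satisfies $x <_{_{\text{XY}}} y$ for all $x \in D$, so $y$ is an upper bound of $D$, and therefore $u = \sup D \leq_{_{\text{XY}}} y$. In either case, the cut falls into case (i), (ii), or (iii), but not into (iv). Hence $\tilde{\mathbb{I}}$ has no gap, it is continuously ordered, and Lemma~\ref{Connect-Cha} delivers connectedness.

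The argument is essentially a textbook application of the fact that a complete linearly ordered set carries a connected order topology, so I do not expect a significant obstacle: the only point that requires care is verifying that the definition of \emph{cut} in the excerpt forces every element of $E$ to be an upper bound of $D$, so that $\sup D$, when it lies in $E$, automatically serves as the minimum of $E$. Everything else is immediate from the already proven results.
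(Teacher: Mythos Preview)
Your proposal is correct and follows essentially the same route as the paper: use the completeness of $(\tilde{\mathbb{I}}, \leq_{_{\text{XY}}})$ from Theorem~\ref{Complete-Thm-Wu} to show that no cut is a gap, and then invoke Lemma~\ref{Connect-Cha}. The only cosmetic differences are that the paper argues part of this by contradiction and also includes a separate verification that no cut is a jump (i.e.\ that the order is dense), which is not needed for connectedness itself.
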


\begin{proof}
Let $(D, E)$ be a cut of $\tilde{\mathbb{I}}$.

(1) We first show that $(D, E)$ is not a jump.

 Suppose, on the contrary, that $(D, E)$ is a jump. Then,
  $D$ and $E$ have the largest and the smallest elements,
 denoted by $\xi$ and $\eta$, respectively. In this case,
 $\xi <_{_{\text{XY}}} \eta $ since $(D, E)$ is a cut of
 $\tilde{\mathbb{I}}$, $\xi \in D$ and $\eta \in E$. This implies that
 $X=D\cup E \subset (\leftarrow, \xi] \cup [\eta, \rightarrow)$. Thus,
 $(\xi, \eta)= X\backslash (\leftarrow, \xi]\cup [\eta, \rightarrow)=\varnothing$.
 This is impossible since $\big\langle \frac{1}{4}(s(\xi)+s(\eta)+h(\xi)+h(\eta)),
 \frac{1}{4}(h(\xi)+h(\eta)-s(\xi)-s(\eta))\big\rangle \in
 (\xi, \eta)$. Therefore, $(D, E)$ is not a jump.

(2) We then show that $(D, E)$ is not a gap.

Suppose, on the contrary, that $(D, E)$ is a gap. Then,
$D$ and $E$ have no largest and smallest elements, respectively.
It follows from Theorem~\ref{Complete-Thm-Wu} that $\sup D$ exists. It can be verified that
\begin{enumerate}[(i)]
\item $\sup D \notin D $. This, together with $D\cup E= \tilde{\mathbb{I}}$, implies that  $\sup D\in E$.

\item For any $\beta\in E$, since $\beta$ is an upper bound of $D$, we get that
$\sup D\leq_{_{\text{XY}}}\beta$.
\end{enumerate}

Therefore, $\sup D$ is the smallest element of \textit{E}. This is a contradiction. Hence, $(D, E)$ is not a gap.

Summing up (1) and (2) shows that $\tilde{\mathbb{I}}$ is a continuously ordered set.
Hence, $\tilde{\mathbb{I}}$ is connected by applying Lemma~\ref{Connect-Cha}.
\end{proof}

\begin{proposition}
\label{Not-Separable-Pro}
The space $\tilde{\mathbb{I}}$ with the order topology induced by the linear order
$<_{_{\text{XY}}}$, defined in Definition~\ref{de-order(Xu)}, is not separable.
\end{proposition}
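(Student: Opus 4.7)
The plan is to exploit the lexicographic nature of $\leq_{_{\text{XY}}}$ (first by score, then by accuracy) to exhibit an uncountable pairwise disjoint family of nonempty open sets, which precludes the existence of a countable dense subset.

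First I would fix a family of score values. For each $c\in(0,1)$, set
\[
\alpha_c^{-}=\langle c,0\rangle,\qquad \alpha_c^{+}=\left\langle \tfrac{1+c}{2},\tfrac{1-c}{2}\right\rangle.
\]
Both are IFVs, and both have $s(\alpha_c^{-})=s(\alpha_c^{+})=c$, with $h(\alpha_c^{-})=c<1=h(\alpha_c^{+})$. By Proposition~\ref{Pro-order-Xu}(ii), the open interval
\[
U_c:=(\alpha_c^{-},\alpha_c^{+})=\left\{\langle\mu,\nu\rangle\in\tilde{\mathbb{I}}\,:\,\mu-\nu=c,\ c<\mu<\tfrac{1+c}{2}\right\}
\]
is a basic open set in the order topology, and it is nonempty (for instance it contains $\langle \tfrac{3c+1}{4},\tfrac{1-c}{4}\rangle$, whose score is $c$ and whose $\mu$-coordinate lies strictly between $c$ and $\tfrac{1+c}{2}$).

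Next I would observe that the family $\{U_c\}_{c\in(0,1)}$ is pairwise disjoint: every element of $U_c$ has score exactly $c$, so $U_c\cap U_{c'}=\varnothing$ whenever $c\neq c'$. Since $(0,1)$ is uncountable, we have produced an uncountable collection of pairwise disjoint nonempty open subsets of $\tilde{\mathbb{I}}$.

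Finally, suppose for contradiction that $D\subset\tilde{\mathbb{I}}$ is a countable dense subset. Density forces $D\cap U_c\neq\varnothing$ for every $c\in(0,1)$; by disjointness of the $U_c$, this produces an injection $c\mapsto (\text{any chosen point of }D\cap U_c)$ from $(0,1)$ into $D$, contradicting countability of $D$. Hence $\tilde{\mathbb{I}}$ is not separable. There is no real obstacle here beyond correctly identifying the open interval via Proposition~\ref{Pro-order-Xu}(ii); the core idea is that a lexicographic order with an interior factor of size $\geq 2$ always gives a non-separable order topology.
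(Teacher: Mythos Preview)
Your proof is correct and follows essentially the same route as the paper: the paper also constructs, for each $\gamma\in(0,1)$, the open interval $I_\gamma=(\langle\gamma,0\rangle,\langle\tfrac{1+\gamma}{2},\tfrac{1-\gamma}{2}\rangle)$ via Proposition~\ref{Pro-order-Xu}(ii), observes these are nonempty, pairwise disjoint (since every point of $I_\gamma$ has score exactly $\gamma$), and concludes that any dense set must be uncountable. Your version is, if anything, slightly more explicit in exhibiting a concrete point of $U_c$ and in spelling out the injection argument.
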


\begin{proof}
Consider any dense subset $D$ of $\tilde{\mathbb{I}}$. By
Proposition~\ref{Pro-order-Xu},
$I_{\gamma}=\left(\langle \gamma, 0\rangle, \left\langle \frac{1+\gamma}{2},
\frac{1-\gamma}{2}\right\rangle\right)=
\{\langle\mu, \mu-\gamma\rangle \in \tilde{\mathbb{I}} \mid \gamma<_{_{\text{XY}}}
\mu<_{_{\text{XY}}} \frac{1+\gamma}{2}\}$ is a nonempty open subset of $\tilde{\mathbb{I}}$
for any $\gamma\in (0, 1)$. Thus, $D\cap I_{\gamma}\neq \varnothing$. For any
$\gamma\in (0, 1)$, choose a point $\alpha_{\gamma}\in D\cap I_{\gamma}$.
Since $I_{\gamma}\cap I_{\gamma^{\prime}}=\varnothing$ for any $\gamma$,
$\gamma^{\prime}\in (0, 1)$ with $\gamma \neq \gamma^{\prime}$, we have that
$\aleph_{0}< 2^{\aleph_{0}}=|(0, 1)|=|\{\alpha_{\gamma} \mid \gamma
\in (0, 1)\}|\leq |D|$. Therefore, $\tilde{\mathbb{I}}$ is not separable.
\end{proof}

\begin{corollary}
\label{Corollary-not-Home}
The space $\tilde{\mathbb{I}}$ with the order topology induced by the linear
order $<_{_{\text{XY}}}$, defined in Definition~\ref{de-order(Xu)}, is not
homeomorphic to $[0, 1]$ with the natural topology.
\end{corollary}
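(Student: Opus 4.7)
The plan is to exploit the fact that separability is a topological invariant, and then to import Proposition~\ref{Not-Separable-Pro} directly. The unit interval $[0,1]$ with the standard Euclidean topology is separable, since $\mathbb{Q}\cap[0,1]$ is a countable dense subset. Meanwhile, Proposition~\ref{Not-Separable-Pro} has already established that $\tilde{\mathbb{I}}$ under the order topology induced by $<_{_{\text{XY}}}$ fails to be separable. Any homeomorphism is in particular a continuous surjection, and the image of a dense subset under a continuous surjection is dense in the codomain; hence the image of a countable dense set under a homeomorphism would be a countable dense set in the codomain. This gives an immediate obstruction.

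Concretely, the argument I would write is: suppose for contradiction that $h\colon [0,1]\to\tilde{\mathbb{I}}$ were a homeomorphism. Pick $D=\mathbb{Q}\cap[0,1]$; then $h(D)\subset\tilde{\mathbb{I}}$ is countable. For any nonempty open $U\subset\tilde{\mathbb{I}}$, the set $h^{-1}(U)$ is nonempty and open in $[0,1]$, so it meets $D$; therefore $U\cap h(D)\neq\varnothing$, proving $h(D)$ is dense. This contradicts Proposition~\ref{Not-Separable-Pro}, and the claim follows.

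There is essentially no obstacle: all the real work has been done in Proposition~\ref{Not-Separable-Pro}, whose proof exhibits uncountably many pairwise disjoint nonempty open intervals $I_{\gamma}$ indexed by $\gamma\in(0,1)$, forcing any dense set to have cardinality at least $2^{\aleph_{0}}$. The corollary is then a purely formal consequence of the invariance of separability under homeomorphism. I would therefore keep the proof to just a few lines.
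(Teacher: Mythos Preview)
Your proposal is correct and follows exactly the same approach as the paper: the paper's proof is a single sentence invoking Proposition~\ref{Not-Separable-Pro} together with the separability of $[0,1]$ under the natural topology. You have simply spelled out the standard argument that separability is preserved under homeomorphism, which is implicit in the paper's one-line proof.
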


\begin{proof}
It follows directly from Proposition~\ref{Not-Separable-Pro} and the fact that $[0, 1]$
is separable under the natural
topology.
\end{proof}

\begin{corollary}
\label{Corollary-not-Metri}
The space $\tilde{\mathbb{I}}$ with the order topology induced by
the linear order $<_{_{\text{XY}}}$, defined in Definition~\ref{de-order(Xu)},
is not metrizable.
\end{corollary}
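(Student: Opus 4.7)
The plan is to derive non-metrizability as a direct consequence of the two previously established topological properties: compactness (Proposition~\ref{Compact-Lattice}) and non-separability (Proposition~\ref{Not-Separable-Pro}). The key classical fact I will invoke is that every compact metrizable space is separable. Hence, if $\tilde{\mathbb{I}}$ were metrizable, then being compact it would have to be separable, contradicting Proposition~\ref{Not-Separable-Pro}.

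Concretely, I will argue by contradiction. Assume that there exists a metric $\varrho$ on $\tilde{\mathbb{I}}$ inducing the order topology from $<_{_{\text{XY}}}$. For each $n\in\mathbb{N}$, the open cover $\{B_{\varrho}(\alpha,1/n):\alpha\in\tilde{\mathbb{I}}\}$ admits, by Proposition~\ref{Compact-Lattice}, a finite subcover with centers $F_n\subset\tilde{\mathbb{I}}$. Then $D:=\bigcup_{n\in\mathbb{N}}F_n$ is countable, and a routine check shows $D$ is dense in $(\tilde{\mathbb{I}},\varrho)$: given any nonempty open set $U$ and any point $x\in U$, pick $n$ with $B_{\varrho}(x,1/n)\subset U$, and note that $x$ lies in some $B_{\varrho}(\alpha,1/(2n))$ with $\alpha\in F_{2n}$, forcing $\alpha\in B_{\varrho}(x,1/(2n))\subset U$. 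This contradicts Proposition~\ref{Not-Separable-Pro}.

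The only step requiring any care is the invocation of the classical compact-implies-separable result for metric spaces; since the excerpt already proves both compactness and non-separability, the corollary follows once this implication is recorded. I do not expect any real obstacle here: the argument is one line modulo the total-boundedness lemma, and it is parallel in spirit to Corollary~\ref{Corollary-not-Home}, which the paper has just used to rule out a homeomorphism with $[0,1]$. In particular, no properties of the order $<_{_{\text{XY}}}$ beyond those already captured in Propositions~\ref{Compact-Lattice} and \ref{Not-Separable-Pro} are needed.
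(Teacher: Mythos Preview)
Your proposal is correct and follows essentially the same route as the paper: a contradiction argument combining compactness (Proposition~\ref{Compact-Lattice}) with non-separability (Proposition~\ref{Not-Separable-Pro}) via the classical fact that compact metric spaces are separable. The only difference is cosmetic---you spell out the total-boundedness argument for ``compact metric $\Rightarrow$ separable'' explicitly, whereas the paper simply invokes it.
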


\begin{proof}
Suppose, on the contrary, that  $\tilde{\mathbb{I}}$ is metrizable. Then,
there is a metric $\varrho$ on $\tilde{\mathbb{I}}$ such that the
topology induced by this metric and the order topology coincide.
By Proposition~\ref{Compact-Lattice}, $(X, \varrho)$ is
a compact metric space. Thus, it is separable, which contradicts
Proposition~\ref{Not-Separable-Pro}. Therefore, $\tilde{\mathbb{I}}$ is not metrizable.
\end{proof}

\begin{corollary}
The spaces $([0, 1], \leq)$ and $(\tilde{\mathbb{I}}, \leq_{_{\text{XY}}})$ are not isomorphic.
\end{corollary}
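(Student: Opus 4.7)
My plan is to derive this non-isomorphism by contradiction, reducing it to the topological obstruction already established in Corollary~\ref{Corollary-not-Home} via the fact that any order isomorphism between two linearly ordered sets is automatically a homeomorphism of their order topologies.

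Concretely, I would suppose for contradiction that there exists an order isomorphism $\phi : ([0,1], \leq) \to (\tilde{\mathbb{I}}, \leq_{_{\text{XY}}})$. Since both posets are linearly ordered, the basic open sets generating the order topology on either side---namely the open intervals $(a,b)$, together with the half-infinite intervals $(\leftarrow, b)$ and $(a, \rightarrow)$ from Definition~\ref{Def-order-Topology}---are determined purely by the strict order. Because $\phi$ and $\phi^{-1}$ are both strictly order-preserving bijections, they carry basic open sets to basic open sets. Hence $\phi$ is a homeomorphism between $[0,1]$ equipped with the order topology and $\tilde{\mathbb{I}}$ equipped with the order topology induced by $<_{_{\text{XY}}}$.

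Next, I would observe that on $[0,1]$ the order topology coincides with the standard Euclidean topology. This would produce a homeomorphism between $[0,1]$ with the natural topology and $\tilde{\mathbb{I}}$ with the order topology, directly contradicting Corollary~\ref{Corollary-not-Home}. Equivalently, one may argue via Proposition~\ref{Not-Separable-Pro}: the image $\phi(\mathbb{Q}\cap [0,1])$ would be a countable dense subset of $\tilde{\mathbb{I}}$ in its order topology, contradicting non-separability.

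The only non-routine step is verifying that the order isomorphism is a homeomorphism of order topologies, but for linearly ordered sets this reduces to checking preservation of the three kinds of basic open intervals listed in Definition~\ref{Def-order-Topology}, which is immediate from the definition of an order isomorphism. I do not expect any serious obstacle here; the substantive work has already been carried out in Proposition~\ref{Not-Separable-Pro} and Corollary~\ref{Corollary-not-Home}, and the present corollary essentially repackages that topological discrepancy as a purely order-theoretic statement.
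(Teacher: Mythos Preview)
Your proposal is correct and essentially coincides with the paper's argument: the paper also supposes an order isomorphism $f$, pushes forward $\mathbb{Q}\cap[0,1]$ to obtain a countable set that meets every open interval of $(\tilde{\mathbb{I}},\leq_{_{\text{XY}}})$, and derives a contradiction with Proposition~\ref{Not-Separable-Pro}. Your alternative route via Corollary~\ref{Corollary-not-Home} is a harmless detour, since that corollary is itself an immediate consequence of non-separability.
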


\begin{proof}
Suppose, on the contrary, that $\left([0, 1], \leq\right)$ and
${(\tilde{\mathbb{I}}, \leq_{_{\text{XY}}})}$ are isomorphic. In this case, there exists a bijection
$f: [0, 1]\rightarrow \tilde{\mathbb{I}}$ such that both $f$ and $f^{-1}$ are order preserving.
Take $\mathscr{A}= f([0, 1]\cap \mathbb{Q})$. For any
$ \alpha,\beta \in \tilde{\mathbb{I}} $ with $\alpha<_{_{\text{XY}}}\beta $, we have that
$f^{-1}(\alpha)< f^{-1}(\beta)$. It is obvious that $(f^{-1}(\alpha), f^{-1}(\beta))\cap \mathbb{Q}\neq\varnothing$.
This implies that $\varnothing\neq f((f^{-1}(\alpha), f^{-1}(\beta))\cap\mathbb{Q})=(\alpha, \beta)
\cap \mathscr{A}$. Since $\mathscr{A}$ is countable, $\tilde{\mathbb{I}}$ with the order topology
is separable. This contradicts Proposition~\ref{Not-Separable-Pro}. Therefore, $([0, 1], \leq)$
and $(\tilde{\mathbb{I}}, \leq_{_{\text{XY}}})$ are not isomorphic.
\end{proof}

\begin{remark}
(1) By formula~\eqref{eq-3.1}, the space $\tilde{\mathbb{I}}$
with the order topology induced by the linear order $<_{_{\text{XY}}}$ is
homeomorphic to $\tilde{\mathbb{I}}$ with the order topology induced by the
linear order $<_{_{\mathrm{ZX}}}$. Thus,
they have the same topological structure.

(2) Propositions~\ref{Compact-Lattice}--\ref{Not-Separable-Pro} and
Corollaries~\ref{Corollary-not-Home} and \ref{Corollary-not-Metri}
partially answer Problem~\ref{Prob-3}.
\end{remark}

\section{A new look at q-rung orthopair fuzzy sets}\label{Sec-6}

The concept of Pythagorean fuzzy set (PFS) was introduced by Yager~\cite{Yager2014} as follows:
\begin{definition}[{\textrm{\protect\cite{Yager2014}}}]
Let $X$ be the universe of discourse. A \textit{Pythagorean fuzzy set} (PFS)
$P$ in $X$ is defined as an object in the following form:
\begin{equation}
P=\{\langle x, \mu_{_P}(x), \nu_{_P}(x) \rangle \mid x \in X \},
\end{equation}
where the functions
$$
\mu_{_P}: X\rightarrow [0, 1],
$$
and
$$
\nu_{_P}: X\rightarrow [0, 1],
$$
transform the {\it degree of membership} and the \textit{degree of non-membership} of
the element $x \in X$ to the set $P$, respectively, and for every $x \in X$,
\begin{equation}
(\mu_{_P}(x))^{2}+(\nu_{_P}(x))^{2} \leq 1.
\end{equation}
The \textit{indeterminacy degree} $\pi_{_P}(x)$ of element $x$ belonging to the
PFS $P$ is defined by
$$
\pi_{_P}(x)=\sqrt{1-(\mu_{_P}(x))^{2}-(\nu_{_P}(x))^{2}}.
$$
Zhang and Xu~\cite{ZX2014} called $\langle \mu_{_P}(x), \nu_{_P}(x)\rangle$ a
\textit{Pythagorean fuzzy number} (PFN). Let $\widetilde{\mathbb{P}}$ denote the set of all PFNs,
i.e., $\widetilde{\mathbb{P}}=\{\langle \mu, \nu\rangle \in [0, 1]^{2} \mid \mu^{2}+\nu^{2} \leq 1\}$.
\end{definition}

Recently, Yager et al.~\cite{Yager2017,YagerA2017} introduced the concept of
q-rung orthopair fuzzy sets.

\begin{definition}[{\textrm{\protect\cite{Yager2017}}}]
Let $X$ be the universe of discourse. A \textit{q-rung orthopair fuzzy set}
(q-ROFS) $Q$ in $X$ is defined as an object in the following form:
\begin{equation}
Q=\{\langle x, \mu_{_Q}(x), \nu_{_Q}(x)\rangle  \mid  x \in X\},
\end{equation}
where the functions
$$
\mu_{_Q}: X\rightarrow [0, 1],
$$
and
$$
\nu_{_Q}: X\rightarrow [0, 1],
$$
transform the \textit{degree of membership} and the
\textit{degree of non-membership} of the element $x \in X$ to the set $Q$,
respectively, and for every $x \in X$,
\begin{equation}
(\mu_{_Q}(x))^{q}+(\nu_{_Q}(x))^{q} \leq 1, \quad (q \geq 1).
\end{equation}
The \textit{indeterminacy degree} $\pi_{_Q}(x)$ of element $x$ belonging to the
q-ROFS $Q$ is defined by $\pi_{_Q}(x)=\sqrt[q]{1-(\mu_{_Q}(x))^{q}-(\nu_{_Q}(x))^{q}}$.
Yager~\cite{Yager2017} called $\langle \mu_{_Q}(x), \nu_{_Q}(x)\rangle$
a \textit{q-rung orthopair fuzzy number} (q-ROFN). Let $\widetilde{\mathbb{Q}}$ denote
the set of all q-ROFNs, i.e.,
$\widetilde{\mathbb{Q}}=\{\langle\mu, \nu\rangle \in [0, 1]^{2} \mid \mu^{q}+\nu^{q} \leq 1\}$.
\end{definition}

It is clear that a q-ROFS reduces to an IFS (resp. PFS) when $q=1$ (resp. $q=2$).
When $q=3$, Senapati and Yager \cite{SY2020} called the corresponding q-rung orthopair
fuzzy sets as \textit{Fermatean fuzzy sets}(FFSs).

Additionally, for a q-ROFN $\alpha=\langle\mu_{\alpha}, \nu_{\alpha}\rangle \in \widetilde{\mathbb{Q}}$,
$s_{_{\text{LW}}}(\alpha)=(\mu_{\alpha})^{q}-(\nu_{\alpha})^{q}$, and
$h_{_{\text{LW}}}(\alpha)=(\mu_{\alpha})^{q}+(\nu_{\alpha})^{q}$ were called the
\textit{score value} and the \textit{accuracy value} of $\alpha$, respectively, by Liu and Wang~\cite{LW2018}.

Based on score values and accuracy values, a comparison method was given by Liu and Wang~\cite{LW2018}.

\begin{definition}[{\textrm{\protect\cite{LW2018}}}]
\label{qROFNs-LW-order-Def}
Let $\alpha_{1}$ and $\alpha_{2}$ be two q-ROFNs.
\begin{itemize}
\item If $s_{_{\text{LW}}}(\alpha_{1})<s_{_{\text{LW}}}(\alpha_{2})$,
then $\alpha_{1}$ is smaller than $\alpha_{2}$, denoted by
$\alpha_{1} <_{_{\text{LW}}} \alpha_{2}$.
\item If $s_{_{\text{LW}}}(\alpha_{1})=s_{_{\text{LW}}}(\alpha_{2})$, then
\begin{itemize}
\item[$-$] if $h_{_{\text{LW}}}(\alpha_{1})=h_{_{\text{LW}}}(\alpha_{2})$,
then $\alpha_{1}=\alpha_{2}$;
\item[$-$] if $h_{_{\text{LW}}}(\alpha_{1})< h_{_{\text{LW}}}(\alpha_{2})$,
then $\alpha_{1}<_{_{\text{LW}}} \alpha_{2}$.
\end{itemize}
\end{itemize}
If $\alpha_{1} <_{_{\text{LW}}} \alpha_{2}$ or
$\alpha_{1}=\alpha_{2}$, denote it by
$\alpha_{1}\leq_{_{\text{LW}}}\alpha_{2}$.
\end{definition}

Observe that the score function and the accuracy function, introduced in Definition~\ref{qROFNs-LW-order-Def},
are compressed after squaring each difference value. Hence, to rank any pair of
q-ROFNs, Xing et al.~\cite{XZZW2019} proposed the modified score function and
accuracy function for q-ROFNs, respectively, as follows.

\begin{definition}[{\textrm{\protect\cite[Definitions~5 and 6]{XZZW2019}}}]
Let $\alpha=\langle\mu, \nu\rangle$ be a q-ROFN and $q \geq 1$. The \textit{modified score function}
$s_{_{{\text{X}}}}(\alpha)$ and the \textit{modified accuracy function}
$h_{_{{\text{X}}}}(\alpha)$ of $\alpha$ are defined as follows:
$$
s_{_{{\text{X}}}}(\alpha)=
\begin{cases}
\sqrt[q]{\mu^{q}-\nu^{q}}, & \mu \geq \nu,\\
-\sqrt[q]{\nu^{q}-\mu^{q}}, & \mu < \nu,
\end{cases}
$$
and
$$
h_{_{{\text{X}}}}(\alpha)=\sqrt[q]{\mu^{q}+\nu^{q}}.
$$
\end{definition}

\begin{definition}[{\textrm{\protect\cite[Definitions~7]{XZZW2019}}}]
Let $\alpha_{1}$ and $\alpha_{1}$ be two q-ROFNs.
\begin{itemize}
\item If $s_{_{{\text{X}}}}(\alpha_{1}) < s_{_{{\text{X}}}}(\alpha_{2})$,
then $\alpha_{1}$ is smaller than $\alpha_{2}$, denoted by
$\alpha_{1}<_{{_{\text{X}}}} \alpha_{2}$.

\item If $s_{_{{\text{X}}}}(\alpha_{1})=s_{_{{\text{X}}}}(\alpha_{2})$, then
\begin{itemize}
\item[$-$] if $h_{_{{\text{X}}}}(\alpha_{1})=h_{_{{\text{X}}}}(\alpha_{2})$,
then $\alpha_{1}=\alpha_{2}$;
\item[$-$] if $h_{_{{\text{X}}}}(\alpha_{1})< h_{_{{\text{X}}}}(\alpha_{2})$,
then $\alpha_{1} <_{_{{\text{X}}}} \alpha_{2}$.
\end{itemize}
\end{itemize}
If $\alpha_{1}<_{_{{\text{X}}}} \alpha_{2}$ or
$\alpha_{1}=\alpha_{2}$, denote it by
$\alpha_{1} \leq_{_{{\text{X}}}}\alpha_{2}$.
\end{definition}

\begin{proposition}\label{pr-order equi}
The orders $\leq_{_{\text{LW}}}$ and $\leq_{_{{\text{X}}}}$ are equivalent, i.e.,
for $\alpha_{1}$, $\alpha_{2} \in \widetilde{\mathbb{Q}}$, $\alpha_{1}\leq_{_{\text{LW}}} \alpha_{2}$
if and only if $\alpha_{1}\leq_{_{{\text{X}}}}\alpha_{2}$.
\end{proposition}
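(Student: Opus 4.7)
The plan is to reduce the equivalence to the observation that the two pairs of comparison functions differ only by strictly increasing real-valued transformations, and that both orders are defined by the same lexicographic rule on the (score, accuracy) pair. Since lexicographic comparison is preserved under coordinate-wise strictly monotone maps, the two orders must coincide.

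First I would introduce the function $\phi:\mathbb{R}\to\mathbb{R}$ defined by $\phi(t)=\mathrm{sgn}(t)\,|t|^{1/q}$ and check, by a short case split on the sign of $\mu-\nu$, that
\[
s_{_{\text{X}}}(\alpha)=\phi\bigl(s_{_{\text{LW}}}(\alpha)\bigr)\quad\text{for every}\quad\alpha=\langle\mu,\nu\rangle\in\widetilde{\mathbb{Q}}.
\]
Indeed, when $\mu\geq\nu$ we have $s_{_{\text{LW}}}(\alpha)=\mu^q-\nu^q\geq 0$ and $s_{_{\text{X}}}(\alpha)=\sqrt[q]{\mu^q-\nu^q}=\phi(s_{_{\text{LW}}}(\alpha))$; when $\mu<\nu$ the symmetric identity gives the negative branch. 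The function $t\mapsto t^{1/q}$ is strictly increasing on $[0,\infty)$ for $q\geq 1$, so $\phi$ is strictly increasing on each of $(-\infty,0]$ and $[0,\infty)$ with $\phi(0)=0$, hence strictly increasing on all of $\mathbb{R}$. The only subtle point, which I would flag carefully, is the monotonicity of $\phi$ \emph{across} zero; this is what lets us replace $s_{_{\text{LW}}}$ by $s_{_{\text{X}}}$ without changing either strict inequalities or equalities.

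Second, since $h_{_{\text{LW}}}(\alpha)=\mu^q+\nu^q\geq 0$, I would note that
\[
h_{_{\text{X}}}(\alpha)=\bigl(h_{_{\text{LW}}}(\alpha)\bigr)^{1/q},
\]
and $\psi(t)=t^{1/q}$ is strictly increasing on $[0,\infty)$.

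Third, I would combine these observations. Because $\phi$ is strictly increasing on $\mathbb{R}$,
\[
s_{_{\text{LW}}}(\alpha_1)<s_{_{\text{LW}}}(\alpha_2)\;\Longleftrightarrow\; s_{_{\text{X}}}(\alpha_1)<s_{_{\text{X}}}(\alpha_2),
\]
and analogously for equality; and because $\psi$ is strictly increasing on $[0,\infty)$, the corresponding equivalences hold for $h_{_{\text{LW}}}$ versus $h_{_{\text{X}}}$. Writing out Definitions of $\leq_{_{\text{LW}}}$ and $\leq_{_{\text{X}}}$ side by side then shows each clause of one order is matched by the corresponding clause of the other, so $\alpha_1\leq_{_{\text{LW}}}\alpha_2\Longleftrightarrow\alpha_1\leq_{_{\text{X}}}\alpha_2$.

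I do not expect a serious obstacle here; the only place demanding care is the verification that the signed $q$-th root $\phi$ is strictly increasing through the origin, because the two piecewise definitions of $s_{_{\text{X}}}$ (for $\mu\geq\nu$ and $\mu<\nu$) must be shown to glue into a single order-preserving reparametrization of $s_{_{\text{LW}}}$. Everything else reduces to the general fact that strictly increasing functions preserve lexicographic orders.
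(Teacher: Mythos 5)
Your proof is correct and follows essentially the same route as the paper: the paper's proof simply asserts the four equivalences ($s_{_{\text{LW}}}$ versus $s_{_{\text{X}}}$ and $h_{_{\text{LW}}}$ versus $h_{_{\text{X}}}$, for both strict inequality and equality) and concludes the lexicographic orders coincide, which is exactly what your explicit strictly increasing reparametrizations $\phi(t)=\mathrm{sgn}(t)\,|t|^{1/q}$ and $\psi(t)=t^{1/q}$ establish. Your version just makes the monotone-transformation structure, including the gluing across zero, explicit rather than leaving it implicit.
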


\begin{proof}
It follows directly that $s_{_{\text{LW}}}(\alpha_{1})< s_{_{\text{LW}}}(\alpha_{2})
\Leftrightarrow s_{_{{\text{X}}}}(\alpha_{1})<s_{_{{\text{X}}}}(\alpha_{2})$,
$s_{_{\text{LW}}}(\alpha_{1})=s_{_{\text{LW}}}(\alpha_{2})\Leftrightarrow
s_{_{{\text{X}}}}(\alpha_{1})=s_{_{{\text{X}}}}(\alpha_{2})$,
$h_{_{\text{LW}}}(\alpha_{1})< h_{_{\text{LW}}}(\alpha_{2}) \Leftrightarrow
h_{_{{\text{X}}}}(\alpha_{1}) < h_{_{{\text{X}}}}(\alpha_{2})$, and $h_{_{\text{LW}}}(\alpha_{1})=h_{_{\text{LW}}}(\alpha_{2}) \Leftrightarrow h_{_{{\text{X}}}}(\alpha_{1})=h_{_{{\text{X}}}}(\alpha_{2})$.
\end{proof}

Fix $q \geq 1$. Define a mapping $\Gamma: \widetilde{\mathbb{Q}}\rightarrow \widetilde{\mathbb{I}}$ by
\begin{equation}\label{eq-6.5}
\begin{split}
\Gamma: \widetilde{\mathbb{Q}}&\rightarrow \widetilde{\mathbb{I}}, \\
\langle \mu, \nu\rangle &\mapsto \langle\mu^{q}, \nu^{q}\rangle.
\end{split}
\end{equation}
It can be verified that $\Gamma$ is a bijection. Every partial order `$\preceq_{_{\widetilde{\mathbb{I}}}}$'
on $\widetilde{\mathbb{I}}$ can induce a partial order $\preceq_{_{\widetilde{\mathbb{Q}}}}$,
defined by $\alpha \preceq_{_{\widetilde{\mathbb{Q}}}} \beta$ ($\alpha, \beta \in \widetilde{\mathbb{Q}}$)
if and only if $\Gamma(\alpha) \preceq_{_{\widetilde{\mathbb{I}}}} \Gamma(\beta)$. Analogously,
every partial order on $\widetilde{\mathbb{Q}}$ can induce a partial order on $\widetilde{\mathbb{I}}$.

\begin{theorem}\label{Order-Xu=Order-LW}
The mapping defined by formula \eqref{eq-6.5} is an isomorphism between
$(\widetilde{\mathbb{I}}, \leq_{_{\text{XY}}})$ and
$(\widetilde{\mathbb{Q}}, \leq_{_{\text{LW}}})$.
\end{theorem}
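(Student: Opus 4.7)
The plan is to exploit the fact that the score and accuracy data defining the two linear orders match up exactly under $\Gamma$, so the theorem reduces to a direct comparison.

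First, I would verify that $\Gamma$ is a well-defined bijection between the two sets. Given $\langle \mu,\nu\rangle \in \widetilde{\mathbb{Q}}$, the pair $\langle \mu^q,\nu^q\rangle$ lies in $[0,1]^2$ and satisfies $\mu^q+\nu^q\leq 1$, so it belongs to $\widetilde{\mathbb{I}}$. Conversely, the candidate inverse is $\Gamma^{-1}(\langle a,b\rangle)=\langle a^{1/q},b^{1/q}\rangle$: since $a,b\in[0,1]$ with $a+b\leq 1$, we have $a^{1/q},b^{1/q}\in[0,1]$ and $(a^{1/q})^q+(b^{1/q})^q=a+b\leq 1$, so the image is in $\widetilde{\mathbb{Q}}$. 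The compositions $\Gamma\circ\Gamma^{-1}$ and $\Gamma^{-1}\circ\Gamma$ are identity maps.

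Second, I would observe the key identity: for any $\alpha=\langle \mu_\alpha,\nu_\alpha\rangle\in\widetilde{\mathbb{Q}}$, writing $\Gamma(\alpha)=\langle \mu_\alpha^q,\nu_\alpha^q\rangle\in\widetilde{\mathbb{I}}$, we have
\begin{equation*}
s(\Gamma(\alpha))=\mu_\alpha^q-\nu_\alpha^q=s_{_{\text{LW}}}(\alpha),
\qquad
h(\Gamma(\alpha))=\mu_\alpha^q+\nu_\alpha^q=h_{_{\text{LW}}}(\alpha).
\end{equation*}
Thus the two numerical invariants used to define the linear orders $\leq_{_{\text{XY}}}$ (on $\widetilde{\mathbb{I}}$) and $\leq_{_{\text{LW}}}$ (on $\widetilde{\mathbb{Q}}$) coincide after transport through $\Gamma$.

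Third, I would conclude that $\Gamma$ is order preserving, and so is $\Gamma^{-1}$. Indeed, for $\alpha,\beta\in\widetilde{\mathbb{Q}}$ with $\alpha\leq_{_{\text{LW}}}\beta$, Definition~\ref{qROFNs-LW-order-Def} gives either $s_{_{\text{LW}}}(\alpha)<s_{_{\text{LW}}}(\beta)$, or equality of the scores together with $h_{_{\text{LW}}}(\alpha)\leq h_{_{\text{LW}}}(\beta)$. By the identity above, this is precisely the condition $s(\Gamma(\alpha))<s(\Gamma(\beta))$ or $s(\Gamma(\alpha))=s(\Gamma(\beta))$ with $h(\Gamma(\alpha))\leq h(\Gamma(\beta))$, which is Definition~\ref{de-order(Xu)} for $\Gamma(\alpha)\leq_{_{\text{XY}}}\Gamma(\beta)$. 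Reversing the argument via the same identity shows that $\Gamma^{-1}$ is monotone, completing the proof that $\Gamma$ is an order isomorphism.

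There is essentially no serious obstacle here: once one spots that $s$ and $h$ on the image pair coincide verbatim with $s_{_{\text{LW}}}$ and $h_{_{\text{LW}}}$ on the source pair, everything reduces to a tautology. The only place that needs a small check is confirming that $\Gamma^{-1}$ maps $\widetilde{\mathbb{I}}$ into $\widetilde{\mathbb{Q}}$, which relies on the elementary observation that $x\mapsto x^{1/q}$ preserves $[0,1]$ and that $(a+b)\leq 1$ is exactly what is needed to satisfy the $q$-rung constraint after taking $q$-th roots.
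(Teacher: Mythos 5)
Your proposal is correct and follows essentially the same route as the paper: both rest on the observation that $s(\Gamma(\alpha))=s_{_{\text{LW}}}(\alpha)$ and $h(\Gamma(\alpha))=h_{_{\text{LW}}}(\alpha)$, so that order preservation of $\Gamma$ and $\Gamma^{-1}$ is immediate from the two defining cases of the linear orders. Your explicit verification that $\Gamma^{-1}(\langle a,b\rangle)=\langle a^{1/q},b^{1/q}\rangle$ lands in $\widetilde{\mathbb{Q}}$ simply fills in the bijectivity claim the paper states without detail.
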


\begin{proof}
Since $\Gamma$ is a bijection, it suffices to check that $\Gamma$
and $\Gamma^{-1}$ are order preserving. We only prove that $\Gamma$
is order preserving. Similarly, it can be shown that $\Gamma^{-1}$ is order-preserving.
For $\alpha$, $\beta \in \widetilde{\mathbb{Q}}$ with
$\alpha \leq_{_{\text{LW}}} \beta$, consider the following two cases:

(1) If $s_{_{\text{LW}}}(\alpha)< s_{_{\text{LW}}}(\beta)$, then
$s(\Gamma(\alpha))=(\mu_{\alpha})^{q}-(\nu_{\alpha})^{q}=s_{_{\text{LW}}}(\alpha)
<s_{_{\text{LW}}}(\beta)=(\mu_{\beta})^{q}-(\nu_{\beta})^{q}=s(\Gamma(\beta))$.
Thus, $\Gamma(\alpha)<_{_{\text{XY}}} \Gamma(\beta)$.

(2) If $s_{_{\text{LW}}}(\alpha) = s_{_{\text{LW}}}(\beta)$ and
$h_{_{\text{LW}}}(\alpha) < h_{_{\text{LW}}}(\beta)$, then
$s(\Gamma(\alpha))=s(\Gamma(\beta))$ and $h(\Gamma(\alpha))=(\mu_{\alpha})^{q}+(\nu_{\alpha})^{q}
=h_{_{\text{LW}}}(\alpha) < h_{_{\text{LW}}}(\beta)=(\mu_{\beta})^{q}
+(\nu_{\beta})^{q}=h(\Gamma(\beta))$. Thus, $\Gamma(\alpha)
<_{_{\text{XY}}} \Gamma(\beta)$.

Therefore, $\Gamma$ is order preserving.
\end{proof}

Applying the isomorphism defined by formula \eqref{eq-6.5},
similarly to Definition~\ref{de-order(ZX)}, we introduce a new linear order for q-ROFNs.
For $\alpha=(\mu_{\alpha}, \nu_{\alpha}) \in \widetilde{\mathbb{Q}}$,
define the similarity function $L_{_{\text{Wu}}}(\alpha)$ by
\begin{equation}
\begin{split}
L_{_{\text{Wu}}}(\alpha)&=\sqrt[q]{\frac{1-(\nu_{\alpha})^{q}}{(1-(\mu_{\alpha})^{q})
+(1-(\nu_{\alpha})^{q})}}\\
&=\sqrt[q]{\frac{1-(\nu_{\alpha})^{q}}{1+(\pi_{\alpha})^{q}}}.
\end{split}
\end{equation}

\begin{definition}
\label{qROFNs-Wu-order-Def}
Let $\alpha_{1}$ and $\alpha_{2}$ be two q-ROFNs.
\begin{itemize}
\item If $ L_{_{\text{Wu}}}(\alpha_{1})< L_{_{\text{Wu}}}(\alpha_{2})$,
then $\alpha_{1}$ is smaller than $\alpha_{2}$, denoted by $\alpha_{1} <_{_{\text{Wu}}} \alpha_{2} $.
\item If $ L_{_{\text{Wu}}}(\alpha_{1})= L_{_{\text{Wu}}}(\alpha_{2})$,
then
\begin{itemize}
\item[$-$] if $ h_{_{{\text{X}}}}(\alpha_{1})=h_{_{{\text{X}}}}(\alpha_{2})$,
then $\alpha_{1}= \alpha_{2}$;
\item[$-$] if $ h_{_{{\text{X}}}}(\alpha_{1})<h_{_{{\text{X}}}}(\alpha_{2})$,
then $\alpha_{1}  <_{_{\text{Wu}}} \alpha_{2}$.
\end{itemize}
\end{itemize}
If $\alpha_{1} <_{_{\text{Wu}}} \alpha_{2} $ or  $\alpha_{1}= \alpha_{2}$,
denote it by $\alpha_{1} \leq{_{_{\text{Wu}}}} \alpha_{2} $.
\end{definition}

Similarly to Theorem \ref{th-isomorphism}, we have the following result.

\begin{theorem}
\label{LW=Wu-Thm}
$(\widetilde{\mathbb{Q}}, \leq_{_{\text{LW}}})$ and
$(\widetilde{\mathbb{Q}}, \leq_{_{\text{Wu}}}) $ are isomorphism.
\end{theorem}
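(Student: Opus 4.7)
The plan is to avoid constructing a direct bijection between $(\widetilde{\mathbb{Q}}, \leq_{_{\text{LW}}})$ and $(\widetilde{\mathbb{Q}}, \leq_{_{\text{Wu}}})$ from scratch---which would essentially replay the piecewise case analysis of Theorem~\ref{th-isomorphism}---and instead compose three isomorphisms: the $q$-th power map $\Gamma$ from Theorem~\ref{Order-Xu=Order-LW}, the map $\tilde{\Gamma}$ from Theorem~\ref{th-isomorphism}, and a new observation that the same $\Gamma$ also transports $\leq_{_{\text{Wu}}}$ to $\leq_{_{\text{ZX}}}$. Establishing this new observation is the only piece of actual work.

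To do so, I would first verify the two identities
\[
\bigl(L_{_{\text{Wu}}}(\alpha)\bigr)^{q}=L(\Gamma(\alpha)) \quad \text{and} \quad \bigl(h_{_{{\text{X}}}}(\alpha)\bigr)^{q}=h(\Gamma(\alpha))
\]
for every $\alpha=\langle \mu_{\alpha}, \nu_{\alpha}\rangle \in \widetilde{\mathbb{Q}}$, writing $\Gamma(\alpha)=\langle \mu_{\alpha}^{q}, \nu_{\alpha}^{q}\rangle$. Both reduce to a direct substitution, with the key bookkeeping fact that $\pi_{\alpha}^{q}=1-\mu_{\alpha}^{q}-\nu_{\alpha}^{q}$. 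Because $t\mapsto t^{q}$ is strictly increasing on $[0,\infty)$ for $q\geq 1$, strict and non-strict inequalities in $L_{_{\text{Wu}}}$ and $h_{_{{\text{X}}}}$ correspond exactly to the analogous inequalities in $L$ and $h$. Inspecting Definitions~\ref{de-order(ZX)} and~\ref{qROFNs-Wu-order-Def}, which have an identical two-tier structure, I can then read off that $\alpha\leq_{_{\text{Wu}}}\beta$ if and only if $\Gamma(\alpha)\leq_{_{\text{ZX}}}\Gamma(\beta)$, so that $\Gamma$ is an order isomorphism from $(\widetilde{\mathbb{Q}}, \leq_{_{\text{Wu}}})$ onto $(\widetilde{\mathbb{I}}, \leq_{_{\text{ZX}}})$.

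The proof then concludes by assembling the chain
\[
(\widetilde{\mathbb{Q}}, \leq_{_{\text{LW}}}) \xrightarrow{\;\Gamma\;} (\widetilde{\mathbb{I}}, \leq_{_{\text{XY}}}) \xrightarrow{\;\tilde{\Gamma}^{-1}\;} (\widetilde{\mathbb{I}}, \leq_{_{\text{ZX}}}) \xrightarrow{\;\Gamma^{-1}\;} (\widetilde{\mathbb{Q}}, \leq_{_{\text{Wu}}}),
\]
whose three arrows are isomorphisms by Theorem~\ref{Order-Xu=Order-LW}, Theorem~\ref{th-isomorphism}, and the observation just established, respectively. The composite $\Gamma^{-1}\circ \tilde{\Gamma}^{-1}\circ \Gamma$ is then the desired order isomorphism. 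I anticipate no genuine obstacle: the verification of the two displayed identities is mechanical, and everything else is the formal composition of isomorphisms already in hand.
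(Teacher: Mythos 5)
Your proposal is correct, but it takes a different route from the paper. The paper disposes of this theorem with ``similarly to Theorem~\ref{th-isomorphism}'', i.e.\ it intends a direct construction of a piecewise-defined order isomorphism inside $\widetilde{\mathbb{Q}}$, replaying the seven-case analysis of $\tilde{\Gamma}$ with $L_{_{\text{Wu}}}$ and $h_{_{\text{X}}}$ in place of $L$ and $h$. You instead factor the statement through objects already in hand: the identities $\bigl(L_{_{\text{Wu}}}(\alpha)\bigr)^{q}=L(\Gamma(\alpha))$ and $\bigl(h_{_{\text{X}}}(\alpha)\bigr)^{q}=h(\Gamma(\alpha))$ do hold by direct substitution (using $\pi_{\alpha}^{q}=1-\mu_{\alpha}^{q}-\nu_{\alpha}^{q}$), and since $t\mapsto t^{q}$ is strictly increasing and $\Gamma$ is a bijection, the two-tier Definitions~\ref{de-order(ZX)} and~\ref{qROFNs-Wu-order-Def} match tier by tier, so $\Gamma$ is indeed an isomorphism from $(\widetilde{\mathbb{Q}},\leq_{_{\text{Wu}}})$ onto $(\widetilde{\mathbb{I}},\leq_{_{\mathrm{ZX}}})$; composing with Theorems~\ref{Order-Xu=Order-LW} and~\ref{th-isomorphism} then gives the result. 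What your approach buys is threefold: it avoids repeating the case analysis; it makes explicit that $\leq_{_{\text{Wu}}}$ is exactly the pullback of $\leq_{_{\mathrm{ZX}}}$ along $\Gamma$, which is the structural point the paper only gestures at when introducing Definition~\ref{qROFNs-Wu-order-Def}; and it exhibits the isomorphism in closed form as $\Gamma^{-1}\circ\tilde{\Gamma}^{-1}\circ\Gamma$, which dovetails with the paper's later use of the conjugated negation $\widehat{\neg}=\Gamma^{-1}\circ\widetilde{\Gamma}^{-1}\circ\neg\circ\widetilde{\Gamma}\circ\Gamma$. The paper's intended direct construction, by contrast, would produce an explicit piecewise formula on $\widetilde{\mathbb{Q}}$ without reference to $\widetilde{\mathbb{I}}$, at the cost of redoing the casework.
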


By Proposition \ref{pr-order equi} and Theorems~\ref{Order-Xu=Order-LW}
and \ref{LW=Wu-Thm}, we have the following results.

\begin{corollary}
$(\widetilde{\mathbb{Q}}, \leq_{_{\text{LW}}})$,
$(\widetilde{\mathbb{Q}}, \leq_{_{{\text{X}}}})$,
and  $(\widetilde{\mathbb{Q}}, \leq_{_{\text{Wu}}})$ are complete lattices.
\end{corollary}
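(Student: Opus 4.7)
The plan is to chain together the earlier results of this section. Since order isomorphisms preserve all existing suprema and infima, whenever $(X,\preceq_X)$ and $(Y,\preceq_Y)$ are order-isomorphic via $\varphi$, completeness of one transfers to the other: for any $\Omega \subseteq Y$, the infimum of $\Omega$ is given by $\varphi(\inf \varphi^{-1}(\Omega))$, and dually for the supremum. So the whole corollary reduces to invoking this transfer principle along the isomorphisms already constructed.

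First, I would handle $(\widetilde{\mathbb{Q}}, \leq_{_{\text{LW}}})$. By Theorem \ref{Complete-Thm-Wu}, $(\widetilde{\mathbb{I}}, \leq_{_{\text{XY}}})$ is a complete lattice, and by Theorem \ref{Order-Xu=Order-LW}, the map $\Gamma$ of \eqref{eq-6.5} is an isomorphism between $(\widetilde{\mathbb{I}}, \leq_{_{\text{XY}}})$ and $(\widetilde{\mathbb{Q}}, \leq_{_{\text{LW}}})$. Given any nonempty $\Omega \subseteq \widetilde{\mathbb{Q}}$, the set $\Gamma(\Omega) \subseteq \widetilde{\mathbb{I}}$ has an infimum and a supremum, and since both $\Gamma$ and $\Gamma^{-1}$ are order-preserving, $\Gamma^{-1}(\inf \Gamma(\Omega))$ and $\Gamma^{-1}(\sup \Gamma(\Omega))$ are the infimum and supremum of $\Omega$ in $(\widetilde{\mathbb{Q}}, \leq_{_{\text{LW}}})$. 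Together with the obvious bottom $\langle 0,1\rangle$ and top $\langle 1,0\rangle$, this gives a complete lattice.

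Next, Proposition \ref{pr-order equi} states that $\leq_{_{\text{LW}}}$ and $\leq_{_{\text{X}}}$ are the same relation on $\widetilde{\mathbb{Q}}$, so $(\widetilde{\mathbb{Q}}, \leq_{_{\text{X}}})$ is literally the same poset as $(\widetilde{\mathbb{Q}}, \leq_{_{\text{LW}}})$ and is therefore also a complete lattice. Finally, Theorem \ref{LW=Wu-Thm} provides an isomorphism between $(\widetilde{\mathbb{Q}}, \leq_{_{\text{LW}}})$ and $(\widetilde{\mathbb{Q}}, \leq_{_{\text{Wu}}})$, and the same transfer argument used in the first step produces infima and suprema for arbitrary subsets in $(\widetilde{\mathbb{Q}}, \leq_{_{\text{Wu}}})$.

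There is no genuine obstacle: the result is an immediate corollary of the three cited theorems once one observes that completeness is an isomorphism invariant. The only thing worth writing down explicitly is the transfer formula $\inf_Y \Omega = \varphi(\inf_X \varphi^{-1}(\Omega))$, which justifies applying Theorem \ref{Complete-Thm-Wu} through $\Gamma$ and through the isomorphism of Theorem \ref{LW=Wu-Thm}.
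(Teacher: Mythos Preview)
Your proposal is correct and follows essentially the same approach as the paper, which simply states that the corollary follows from Proposition~\ref{pr-order equi} and Theorems~\ref{Order-Xu=Order-LW} and~\ref{LW=Wu-Thm} (together with the completeness of $(\widetilde{\mathbb{I}},\leq_{_{\text{XY}}})$ from Theorem~\ref{Complete-Thm-Wu}). You have merely spelled out explicitly the standard fact that completeness transfers across order isomorphisms, which the paper leaves implicit.
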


\begin{corollary}
$(\widetilde{\mathbb{Q}}, \wedge, \vee, \widetilde{\neg})$ is a
Kleene algebra such that $\wedge$ and $\vee$ are infimum and supremum operations
under the order $\leq_{_{\text{LW}}}$, respectively, and $\widetilde{\neg}=\Gamma^{-1}
\circ \neg \circ \Gamma$, where $\neg$ and $\Gamma$ are given by formulas~\eqref{eq-4.3}
and \eqref{eq-6.5}, respectively.
\end{corollary}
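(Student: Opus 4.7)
The plan is to transport the Kleene algebra structure on $(\tilde{\mathbb{I}}, \leq_{_{\text{XY}}})$ established in Theorem~\ref{Kleene-algebra-Thm} across the order isomorphism $\Gamma$ of Theorem~\ref{Order-Xu=Order-LW}. Since an order isomorphism preserves all lattice-theoretic structure, every axiom required for a Kleene algebra on $\widetilde{\mathbb{Q}}$ should follow mechanically from the corresponding axiom on $\widetilde{\mathbb{I}}$ once the operations are defined by conjugation with $\Gamma$.

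First, I would record that because $\Gamma: (\widetilde{\mathbb{Q}}, \leq_{_{\text{LW}}}) \to (\tilde{\mathbb{I}}, \leq_{_{\text{XY}}})$ is an order isomorphism (Theorem~\ref{Order-Xu=Order-LW}), for any $\alpha, \beta \in \widetilde{\mathbb{Q}}$,
\begin{equation*}
\alpha \wedge \beta = \Gamma^{-1}(\Gamma(\alpha) \wedge \Gamma(\beta)), \qquad \alpha \vee \beta = \Gamma^{-1}(\Gamma(\alpha) \vee \Gamma(\beta)),
\end{equation*}
where on the left the operations are taken in $\widetilde{\mathbb{Q}}$ under $\leq_{_{\text{LW}}}$ and on the right in $\tilde{\mathbb{I}}$ under $\leq_{_{\text{XY}}}$. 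From this identity, the idempotent, commutative, associative, absorption and distributive laws for $(\widetilde{\mathbb{Q}}, \wedge, \vee)$ follow immediately from those for $(\tilde{\mathbb{I}}, \wedge, \vee)$, so $(\widetilde{\mathbb{Q}}, \wedge, \vee)$ is a bounded distributive lattice with bottom $\Gamma^{-1}(\langle 0,1\rangle) = \langle 0,1\rangle$ and top $\Gamma^{-1}(\langle 1,0\rangle) = \langle 1,0\rangle$.

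Next, I would verify that $\widetilde{\neg} = \Gamma^{-1}\circ \neg\circ \Gamma$ is a strong negation on $(\widetilde{\mathbb{Q}}, \leq_{_{\text{LW}}})$. Involutivity is immediate:
\begin{equation*}
\widetilde{\neg}(\widetilde{\neg}(\alpha)) = \Gamma^{-1}\!\left(\neg\!\left(\Gamma(\Gamma^{-1}(\neg(\Gamma(\alpha))))\right)\right) = \Gamma^{-1}(\neg(\neg(\Gamma(\alpha)))) = \alpha
\end{equation*}
by Theorem~\ref{Negation-Thm}(3). The boundary conditions $\widetilde{\neg}(\langle 0,1\rangle) = \langle 1,0\rangle$ and $\widetilde{\neg}(\langle 1,0\rangle) = \langle 0,1\rangle$ follow from Theorem~\ref{Negation-Thm}(1) together with the fact that $\Gamma$ fixes $\langle 0,1\rangle$ and $\langle 1,0\rangle$. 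Order-reversal: if $\alpha \leq_{_{\text{LW}}} \beta$, then $\Gamma(\alpha) \leq_{_{\text{XY}}} \Gamma(\beta)$, hence $\neg(\Gamma(\beta)) \leq_{_{\text{XY}}} \neg(\Gamma(\alpha))$ (Corollary~\ref{co-strong-negation}), and applying $\Gamma^{-1}$ (which is order preserving) yields $\widetilde{\neg}(\beta) \leq_{_{\text{LW}}} \widetilde{\neg}(\alpha)$. Hence $(\widetilde{\mathbb{Q}}, \wedge, \vee, \widetilde{\neg})$ is a De Morgan algebra.

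Finally, I would verify Kleene's inequality. For $\alpha, \beta \in \widetilde{\mathbb{Q}}$, using the conjugation identities and Theorem~\ref{Kleene-algebra-Thm},
\begin{equation*}
\Gamma(\alpha \wedge \widetilde{\neg}(\alpha)) = \Gamma(\alpha) \wedge \neg(\Gamma(\alpha)) \leq_{_{\text{XY}}} \Gamma(\beta) \vee \neg(\Gamma(\beta)) = \Gamma(\beta \vee \widetilde{\neg}(\beta)),
\end{equation*}
so applying $\Gamma^{-1}$ gives $\alpha \wedge \widetilde{\neg}(\alpha) \leq_{_{\text{LW}}} \beta \vee \widetilde{\neg}(\beta)$. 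This completes the proof. I do not anticipate a genuine obstacle here; the only point requiring a moment's care is ensuring that $\Gamma$ intertwines the lattice operations (which is a general fact about order isomorphisms between posets in which binary infima and suprema exist, supplied by Theorem~\ref{Complete-Thm-Wu} transported through $\Gamma$).
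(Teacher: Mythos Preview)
Your proposal is correct and follows the same approach the paper has in mind: the paper states this corollary immediately after Theorem~\ref{Order-Xu=Order-LW} (and Theorem~\ref{LW=Wu-Thm}) as a direct consequence of the isomorphism, without spelling out any details. You have simply made explicit the routine transport-of-structure argument that the paper leaves implicit, so there is no substantive difference in method.
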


\begin{corollary}
$(\widetilde{\mathbb{Q}}, \wedge, \vee, \widehat{\neg})$ is a Kleene algebra such that
$\wedge$ and $\vee$ are infimum and supremum operations
under the order $\leq_{_{\text{Wu}}}$, respectively, and $\widehat{\neg}=\Gamma^{-1}
\circ \widetilde{\Gamma}^{-1} \circ \neg \circ \widetilde{\Gamma} \circ \Gamma$,
where $\neg$, $\widetilde{\Gamma}$, and $\Gamma$ are given by formulas~\eqref{eq-4.3},
\eqref{eq-3.1}, and \eqref{eq-6.5}, respectively.
\end{corollary}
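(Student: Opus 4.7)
The plan is to obtain this corollary by transporting the Kleene algebra structure of $(\tilde{\mathbb{I}}, \wedge, \vee, \neg)$ under $\leq_{_\text{XY}}$, established in Theorem~\ref{Kleene-algebra-Thm}, along a concrete isomorphism between $(\widetilde{\mathbb{Q}}, \leq_{_\text{Wu}})$ and $(\tilde{\mathbb{I}}, \leq_{_\text{XY}})$. The candidate isomorphism is the composition $\widetilde{\Gamma}\circ\Gamma$, and the precise form of $\widehat{\neg}$ in the statement is exactly what the transport-of-structure recipe produces.

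First I would verify that the bijection $\Gamma$ of \eqref{eq-6.5}, a priori an isomorphism $(\widetilde{\mathbb{Q}}, \leq_{_\text{LW}})\to(\tilde{\mathbb{I}}, \leq_{_\text{XY}})$ by Theorem~\ref{Order-Xu=Order-LW}, also gives an isomorphism $(\widetilde{\mathbb{Q}}, \leq_{_\text{Wu}})\to(\tilde{\mathbb{I}}, \leq_{_\text{ZX}})$. The key identity is a direct computation: for any $\alpha=\langle\mu_\alpha,\nu_\alpha\rangle\in\widetilde{\mathbb{Q}}$, since $(\pi_\alpha)^q=1-(\mu_\alpha)^q-(\nu_\alpha)^q$, one has
\begin{equation*}
L(\Gamma(\alpha))=\frac{1-(\nu_\alpha)^q}{(1-(\mu_\alpha)^q)+(1-(\nu_\alpha)^q)}=(L_{_\text{Wu}}(\alpha))^q,\qquad h(\Gamma(\alpha))=(\mu_\alpha)^q+(\nu_\alpha)^q=(h_{_{\text{X}}}(\alpha))^q.
\end{equation*}
Since $t\mapsto t^q$ is strictly increasing on $[0,1]$, comparing $L_{_\text{Wu}}$ (or $h_{_{\text{X}}}$) values of two q-ROFNs is equivalent to comparing the corresponding $L$ (or $h$) values of their images under $\Gamma$. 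Combining this equivalence with Definitions~\ref{de-order(ZX)} and~\ref{qROFNs-Wu-order-Def} shows that $\alpha_1\leq_{_\text{Wu}}\alpha_2 \iff \Gamma(\alpha_1)\leq_{_\text{ZX}}\Gamma(\alpha_2)$, so $\Gamma$ is an order isomorphism onto $(\tilde{\mathbb{I}}, \leq_{_\text{ZX}})$.

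Second, by Theorem~\ref{th-isomorphism}, $\widetilde{\Gamma}$ is an isomorphism $(\tilde{\mathbb{I}}, \leq_{_\text{ZX}})\to(\tilde{\mathbb{I}}, \leq_{_\text{XY}})$, so the composition $\Phi:=\widetilde{\Gamma}\circ\Gamma$ is an isomorphism $(\widetilde{\mathbb{Q}}, \leq_{_\text{Wu}})\to(\tilde{\mathbb{I}}, \leq_{_\text{XY}})$. Because Kleene algebras are preserved under transport along order isomorphisms (meet, join, and strong negations are all purely order-theoretic/structure-theoretic notions), the quadruple
\begin{equation*}
(\widetilde{\mathbb{Q}},\,\wedge,\,\vee,\,\Phi^{-1}\circ\neg\circ\Phi)
\end{equation*}
is a Kleene algebra, with $\wedge,\vee$ being the infimum and supremum under $\leq_{_\text{Wu}}$ (they correspond via $\Phi$ to the infimum and supremum under $\leq_{_\text{XY}}$). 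Unfolding $\Phi^{-1}\circ\neg\circ\Phi = \Gamma^{-1}\circ\widetilde{\Gamma}^{-1}\circ\neg\circ\widetilde{\Gamma}\circ\Gamma = \widehat{\neg}$ gives precisely the negation of the statement.

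The only nontrivial step is the first one, namely verifying the order-preservation of $\Gamma$ between $\leq_{_\text{Wu}}$ and $\leq_{_\text{ZX}}$; once that identity is in hand, everything else is formal transport of structure combining Theorems~\ref{th-isomorphism}, \ref{Kleene-algebra-Thm}, and \ref{Order-Xu=Order-LW}. One could alternatively invoke Theorem~\ref{LW=Wu-Thm} to route through $(\widetilde{\mathbb{Q}}, \leq_{_\text{LW}})$, but the direct computation above is cleaner and matches the explicit form of $\widehat{\neg}$ prescribed in the statement.
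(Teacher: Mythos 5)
Your proposal is correct and follows essentially the same route as the paper: the corollary is obtained there by transporting the Kleene-algebra structure of $(\tilde{\mathbb{I}}, \wedge, \vee, \neg)$ from Theorem~\ref{Kleene-algebra-Thm} along the isomorphisms of Theorems~\ref{th-isomorphism}, \ref{Order-Xu=Order-LW}, and \ref{LW=Wu-Thm}, which is exactly your transport along $\widetilde{\Gamma}\circ\Gamma$. Your explicit identities $L(\Gamma(\alpha))=(L_{_{\text{Wu}}}(\alpha))^{q}$ and $h(\Gamma(\alpha))=(h_{_{{\text{X}}}}(\alpha))^{q}$ merely spell out the order-preservation step that the paper leaves implicit (its ``similarly to Theorem~\ref{th-isomorphism}'' justification of Theorem~\ref{LW=Wu-Thm}), and they are precisely what justifies the stated form of $\widehat{\neg}$.
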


\begin{remark}
By formula \eqref{eq-4.3}, it can be verified that, for
$\alpha=\langle \mu_{\alpha},\nu_{\alpha}\rangle\in \widetilde{\mathbb{Q}}$,
\begin{equation}
\widetilde{\neg}\alpha=
\begin{cases}
\Big\langle \sqrt[q]{\frac{1-(\mu_{\alpha})^{q}-(\nu_{\alpha})^{q}}{2}},
\sqrt[q]{\frac{1+(\mu_{\alpha})^{q}-3(\nu_{\alpha})^{q}}{2}}\Big\rangle,
& \mu_{\alpha}>\nu_{\alpha}, \\
\Big\langle \sqrt[q]{\frac{1}{2}-(\mu_{\alpha})^{q}},
\sqrt[q]{\frac{1}{2}-(\nu_{\alpha})^{q}}\Big\rangle, & \mu_{\alpha}=\nu_{\alpha}, \\
\Big\langle \sqrt[q]{\frac{1+(\nu_{\alpha})^{q}-3(\mu_{\alpha})^{q}}{2}},
\sqrt[q]{\frac{1-(\mu_{\alpha})^{q}-(\nu_{\alpha})^{q}}{2}}\Big\rangle,
& \mu_{\alpha}<\nu_{\alpha}.
\end{cases}
\end{equation}
\end{remark}

\begin{corollary}
The space $\widetilde{\mathbb{Q}}$ with the order topology induced by
the linear order $<_{_{\text{LW}}}$, introduced in Definition~\ref{qROFNs-LW-order-Def},
is not metrizable and separable but compact and connected.
\end{corollary}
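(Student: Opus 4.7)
The plan is to transfer the topological results of Section~\ref{Sec-5} from $(\widetilde{\mathbb{I}}, <_{_{\text{XY}}})$ to $(\widetilde{\mathbb{Q}}, <_{_{\text{LW}}})$ via the order isomorphism $\Gamma$ defined in~\eqref{eq-6.5}. By Theorem~\ref{Order-Xu=Order-LW}, the map $\Gamma$ is an order isomorphism between $(\widetilde{\mathbb{Q}}, \leq_{_{\text{LW}}})$ and $(\widetilde{\mathbb{I}}, \leq_{_{\text{XY}}})$.

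First, I would observe that an order isomorphism between two linearly ordered sets is automatically a homeomorphism for the respective order topologies. Indeed, the sub-basic open rays $(a, \rightarrow)$ and $(\leftarrow, b)$ that generate the order topology (Definition~\ref{Def-order-Topology}) are preserved setwise by any order isomorphism, since $\Gamma((a,\rightarrow)) = (\Gamma(a), \rightarrow)$ and $\Gamma((\leftarrow, b)) = (\leftarrow, \Gamma(b))$, and analogously for $\Gamma^{-1}$. Hence $\Gamma$ is a homeomorphism when $\widetilde{\mathbb{Q}}$ and $\widetilde{\mathbb{I}}$ are equipped with their respective order topologies.

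Next, the four topological properties in question — compactness, connectedness, separability, and metrizability — are all topological invariants, so they are preserved under homeomorphism. Propositions~\ref{Compact-Lattice} and the connectedness proposition that follows it show that $(\widetilde{\mathbb{I}}, <_{_{\text{XY}}})$ is compact and connected, while Proposition~\ref{Not-Separable-Pro} and Corollary~\ref{Corollary-not-Metri} show that it is neither separable nor metrizable. Applying the homeomorphism $\Gamma^{-1}$ pulls all four properties back to $(\widetilde{\mathbb{Q}}, <_{_{\text{LW}}})$, completing the argument.

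There is no real obstacle here; the only point that requires any care is the elementary verification that $\Gamma$ carries the order-topology sub-basis to itself, which is immediate from the fact that $\Gamma$ preserves strict inequalities in both directions. The corollary is therefore essentially a transport-of-structure statement across an order isomorphism.
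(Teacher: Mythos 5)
Your proof is correct and follows essentially the same route the paper intends: the corollary is stated as an immediate consequence of the order isomorphism $\Gamma$ of Theorem~\ref{Order-Xu=Order-LW}, which induces a homeomorphism of the order topologies, so the compactness, connectedness, non-separability, and non-metrizability established in Section~\ref{Sec-5} for $(\tilde{\mathbb{I}}, <_{_{\text{XY}}})$ transfer to $(\widetilde{\mathbb{Q}}, <_{_{\text{LW}}})$. Your explicit check that $\Gamma$ maps the sub-basic rays to sub-basic rays is exactly the small verification the paper leaves implicit.
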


\begin{corollary}
The space $\widetilde{\mathbb{Q}}$ with the order topology induced by the
linear order $<_{_{\text{Wu}}}$, introduced in Definition~\ref{qROFNs-Wu-order-Def},
is not metrizable and separable but compact and connected.
\end{corollary}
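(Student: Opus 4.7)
The plan is to reduce the four topological conclusions to the analogous statements already obtained for $(\widetilde{\mathbb{I}}, \leq_{_{\text{XY}}})$ in Section~\ref{Sec-5}. The transfer is carried out via an order homeomorphism, and the key observation is that the order topology in Definition~\ref{Def-order-Topology} is defined purely from the underlying order, so every order isomorphism between linearly ordered sets is automatically a homeomorphism between the corresponding order topologies.

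First I would record this transfer principle explicitly, as a one-sentence lemma if convenient: if $\Phi\colon (X,\prec_{1}) \to (Y,\prec_{2})$ is an order isomorphism of linearly ordered sets, then $\Phi$ sends every basis element of the form $(a,b)$, $(\leftarrow,b)$, or $(a,\rightarrow)$ to a basis element of the same form, and hence induces a homeomorphism of the two order topologies. Next I would assemble the concrete isomorphism by composing Theorem~\ref{Order-Xu=Order-LW} and Theorem~\ref{LW=Wu-Thm}: the map $\Gamma$ of~\eqref{eq-6.5} gives $(\widetilde{\mathbb{I}}, \leq_{_{\text{XY}}}) \simeq (\widetilde{\mathbb{Q}}, \leq_{_{\text{LW}}})$, and Theorem~\ref{LW=Wu-Thm} gives $(\widetilde{\mathbb{Q}}, \leq_{_{\text{LW}}}) \simeq (\widetilde{\mathbb{Q}}, \leq_{_{\text{Wu}}})$. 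Their composite is an order isomorphism $(\widetilde{\mathbb{I}}, \leq_{_{\text{XY}}}) \simeq (\widetilde{\mathbb{Q}}, \leq_{_{\text{Wu}}})$, which by the transfer principle is a homeomorphism between the corresponding order topologies.

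With this homeomorphism in hand, the four assertions follow in one line each. Compactness comes from Proposition~\ref{Compact-Lattice}, connectedness from the proposition immediately following it, non-separability from Proposition~\ref{Not-Separable-Pro}, and non-metrizability from Corollary~\ref{Corollary-not-Metri}; each of these properties is a topological invariant and so is inherited by $(\widetilde{\mathbb{Q}}, \leq_{_{\text{Wu}}})$.

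The only genuine obstacle is articulating the order-topology transfer principle, since the paper has not isolated it as a named lemma. It is essentially a triviality—the three types of generating sets are purely order-theoretic—but it must be stated once to justify the chain of invocations. After that, no computations with $L_{_{\text{Wu}}}$, with the modified score and accuracy functions, or with the explicit formula for $\Gamma$ are required; the proof is purely formal.
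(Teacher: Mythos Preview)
Your proposal is correct and matches the paper's approach: the paper presents this corollary without proof, deriving it (along with its companion corollaries) directly from Proposition~\ref{pr-order equi} and Theorems~\ref{Order-Xu=Order-LW} and~\ref{LW=Wu-Thm} together with the topological results of Section~\ref{Sec-5}. Your explicit articulation of the order-isomorphism-to-homeomorphism transfer principle is a welcome clarification of what the paper leaves implicit, but the underlying argument is the same.
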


Given a mapping $f: \widetilde{\mathbb{I}}^{n}\rightarrow \mathbb{I}$,
we obtain the induced mapping $\mathscr{S}(f)$ from
$\widetilde{\mathbb{Q}}^{n}$ to $\widetilde{\mathbb{Q}}$
as follows:
\begin{equation}
\label{eq-Trans}
\begin{split}
\mathscr{S}(f): \widetilde{\mathbb{Q}}^{n} &\rightarrow  \widetilde{\mathbb{Q}}, \\
(\langle \mu_{1}, \nu_{1}\rangle, \cdots, \langle\mu_{n}, \nu_{n}\rangle)
& \mapsto \sqrt[q]{f((\mu_{1}^{q}, \nu_{1}^{q}), \cdots,
(\mu_{n}^{q}, \nu_{n}^{q}))}.
\end{split}
\end{equation}
Denote $\Gamma^{(n)}=\underbrace{\Gamma\times\Gamma\times \cdots \times\Gamma}\limits_{n}$
(product mapping). Then $\mathscr{S}(f)=\Gamma^{-1}\circ f\circ \Gamma^{(n)}$.
By the definition of $\mathscr{S}(\_)$, the following theorem directly
follows from Proposition~\ref{pr-order equi}.
\begin{theorem}
\label{Equ-IFV-Q}
The following statements are equivalent:
\begin{enumerate}[(1)]
  \item The mapping $f: (\widetilde{\mathbb{I}}^{n}, \leq_{_{\text{XY}}})
\rightarrow (\widetilde{\mathbb{I}}, \leq_{_{\text{XY}}})$ is idempotent
(resp., increasing, commutative, or bounded);
  \item The mapping $\mathscr{S}(f):
(\widetilde{\mathbb{Q}}^{n}, \leq_{_{\text{LW}}}) \rightarrow  (\widetilde{\mathbb{Q}},
\leq_{_{\text{LW}}})$ is idempotent (resp., increasing, commutative, or bounded);
  \item The mapping $\mathscr{S}(f):
(\widetilde{\mathbb{Q}}^{n}, \leq_{_{{\text{X}}}}) \rightarrow  (\widetilde{\mathbb{Q}},
\leq_{_{{\text{X}}}})$ is idempotent (resp., increasing, commutative, or bounded).
\end{enumerate}
\end{theorem}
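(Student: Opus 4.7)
The plan is to first collapse the three-way equivalence into a single one using Proposition \ref{pr-order equi}, and then transfer the structural properties through the order isomorphism $\Gamma$ supplied by Theorem \ref{Order-Xu=Order-LW}. Since Proposition \ref{pr-order equi} tells us that $\leq_{_{\text{LW}}}$ and $\leq_{_{\text{X}}}$ are literally the same binary relation on $\widetilde{\mathbb{Q}}$, any order-theoretic property of $\mathscr{S}(f)$ formulated with respect to one of these orders is automatically equivalent to the same property formulated with respect to the other; hence (2) $\Leftrightarrow$ (3) is immediate and warrants only a one-line remark.

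To handle (1) $\Leftrightarrow$ (2), I would exploit the fact that $\mathscr{S}(f) = \Gamma^{-1}\circ f\circ \Gamma^{(n)}$ is obtained from $f$ by conjugation with the coordinatewise extension of the order isomorphism $\Gamma: (\widetilde{\mathbb{Q}}, \leq_{_{\text{LW}}}) \to (\widetilde{\mathbb{I}}, \leq_{_{\text{XY}}})$. Since $\Gamma^{(n)}$ is a bijection and, with respect to the coordinatewise orders, both $\Gamma^{(n)}$ and its inverse are order preserving, each of the four properties listed is preserved under this conjugation. For idempotence, $\mathscr{S}(f)(\alpha,\ldots,\alpha) = \alpha$ for every $\alpha \in \widetilde{\mathbb{Q}}$ rewrites via bijectivity of $\Gamma$ as $f(\beta,\ldots,\beta) = \beta$ for every $\beta = \Gamma(\alpha) \in \widetilde{\mathbb{I}}$, and vice versa. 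For monotonicity, the composition of an order-preserving map with an order-preserving core is itself order-preserving, so $\mathscr{S}(f)$ is increasing iff $f$ is. For commutativity in the sense of symmetry under permutations of the arguments, $\Gamma^{(n)}$ commutes with every coordinate permutation, so symmetry of $f$ transfers directly to symmetry of $\mathscr{S}(f)$. For boundedness, I would note that $\Gamma$ fixes both boundary points, sending $\langle 0,1\rangle$ to $\langle 0,1\rangle$ and $\langle 1,0\rangle$ to $\langle 1,0\rangle$, so the bottom and top elements of $(\widetilde{\mathbb{Q}}, \leq_{_{\text{LW}}})$ and $(\widetilde{\mathbb{I}}, \leq_{_{\text{XY}}})$ correspond under $\Gamma$, and the defining equalities at these boundary points carry over.

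There is no substantive obstacle in this argument; it is essentially a bookkeeping exercise about conjugation by an isomorphism. The only point that warrants explicit attention is the observation that the product order on $\widetilde{\mathbb{Q}}^n$, which underlies monotonicity and boundedness, pulls back correctly under $\Gamma^{(n)}$ to the product order on $\widetilde{\mathbb{I}}^n$; this follows immediately because product orders are defined componentwise and each factor $\Gamma$ is itself an order isomorphism. I would structure the write-up as a short initial paragraph invoking Proposition \ref{pr-order equi} for (2) $\Leftrightarrow$ (3), followed by one compact paragraph treating the four cases of (1) $\Leftrightarrow$ (2) in a unified way, rather than splitting them into four separate subproofs.
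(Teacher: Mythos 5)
Your proposal is correct and follows essentially the same route as the paper, which simply notes that $\mathscr{S}(f)=\Gamma^{-1}\circ f\circ\Gamma^{(n)}$ and that the conclusion follows from Proposition~\ref{pr-order equi} together with the isomorphism of Theorem~\ref{Order-Xu=Order-LW}; you merely spell out the conjugation bookkeeping that the paper leaves implicit. No gap.
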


By applying the mapping $\mathscr{S}(\_)$, we obtain from Theorem~\ref{Equ-IFV-Q}
many known aggregation operations for q-ROFNs.

(1) Let $q=2$. If $f$ is an intuitionistic fuzzy weighted averaging
(IFWA) operator defined in \cite[Definition~3.3]{Xu2007}, then $\mathscr{S}(f)$ reduces to the PFWA operator
defined in~\cite[Definition~2.10]{Zhang2016}. If $f$ is an intuitionistic fuzzy ordered weighted
averaging (IFOWA) operator defined in \cite[Definition~3.4]{Xu2007}, then $\mathscr{S}(f)$
reduces to the PFOWA operator defined in \cite[Definition~2.11]{Zhang2016}.

(2) Let $q\geq 1$. If we take $f$ as $\oplus$, $\otimes$, $\lambda \alpha$, or $\alpha^{\lambda}$
in Definition~\ref{Def-Int-Operations}, then $\mathscr{S}(f)$
reduces to the formulas (4)--(7) in \cite{LW2018}, respectively. By \cite[Theorem~1.2.3]{XC2012},
we observe that \cite[Theorem~1]{LW2018} is true. Moreover,
if $f$ is an IFWA operator defined in \cite[Definition~3.3]{Xu2007},
then $\mathscr{S}(f)$ reduces to the q-ROFWA operator defined in \cite[Definition~5]{LW2018}.
If $f$ is an IFWG operator defined in~\cite[Definition~2]{XY2006}, then $\mathscr{S}(f)$
reduces to the q-ROFWG operator defined in \cite[Definition~6]{LW2018}.
By Theorem~\ref{Equ-IFV-Q}, \cite[Theorems~3--6]{XY2006}, and \cite[Theorems~3.4 and 3.5]{Xu2007},
we immediately obtain
\cite[Theorems~2--9]{LW2018}.

(3) Let $q\geq 1$. If  $f$ is an intuitionistic fuzzy Maclaurin symmetric mean (IFMSM)
operator in \cite[Definition~4]{QL2014}, then $\mathscr{S}(f)$ reduces to the q-rung orthopair
fuzzy MSM (q-ROFMSM) operator defined in \cite[Definition~11]{LCW2020}. By \cite[Properties~1--4]{LCW2020}
and Theorem~\ref{Equ-IFV-Q}, we observe that q-ROFMSM is idempotent, increasing, commutative,
and bounded. By~\cite[Theorem~1]{QL2014}, we have that,
for $(\langle \mu_{1}, \nu_{1}\rangle, \ldots, \langle\mu_{n}, \nu_{n}\rangle)\in \tilde{\mathbb{Q}}^{n}$,
\begin{align*}
&\mathscr{S}(\text{IFMSM})(\langle \mu_{1}, \nu_{1}\rangle, \cdots, \langle\mu_{n}, \nu_{n}\rangle)\\
=& \bigg\langle \bigg(\bigg(1-\bigg(\prod\limits_{1\leq i_1<\cdots <i_{k}\leq n}\bigg(1-\prod\limits_{j=1}^{k}(\mu_{i_j})^{q}\bigg)\bigg)^{\frac{1}{C_{n}^{k}}}
\bigg)^{\frac{1}{k}}\bigg)^{\frac{1}{q}},\\
& \bigg(1-\bigg(1-\bigg(\prod\limits_{1\leq i_1<\cdots <i_{k}\leq n}\bigg(1-\prod\limits_{j=1}^{k}(1-\nu_{i_j})^{q}\bigg)\bigg)^{\frac{1}{C_{n}^{k}}}
\bigg)^{\frac{1}{k}}\bigg)^{\frac{1}{q}}\bigg\rangle,
\end{align*}
which is exactly~\cite[Theorem~6]{LCW2020}.

(4) Let $q\geq 1$. If $f$ is the Archimedean t-conorm and t-norm based intuitionistic fuzzy
weighted averaging (ATS-IFWA) operator defined in \cite[Definition~6]{XXZ2012},
then, by \cite[Theorem~3]{XXZ2012} and formula~\eqref{eq-Trans}, we have that,
for $(\langle \mu_{1}, \nu_{1}\rangle, \cdots, \langle\mu_{n}, \nu_{n}\rangle)\in \tilde{\mathbb{Q}}^{n}$,
$$\mathscr{S}(\text{ATS-IFWA})(\langle \mu_{1}, \nu_{1}\rangle, \cdots, \langle\mu_{n}, \nu_{n}\rangle)
=\left\langle \sqrt[q]{\zeta^{-1}\left(\sum_{i=1}^{n}\omega_i\zeta((\mu_{i})^{q})\right)},
\sqrt[q]{\tau^{-1}\left(\sum_{i=1}^{n}\omega_i \tau((\nu_{i})^{q})\right)}\right\rangle,
$$
where $\tau$ is an additive generator of a strict t-norm and $\zeta(x)=\tau(1-x)$.
By Theorem~\ref{Equ-IFV-Q} and \cite[Properties~1--3]{XXZ2012},
$\mathscr{S}(\text{ATS-IFWA})$ is idempotent, increasing, and bounded.

(5) Let $q\geq 1$. If $f$ is Atanassov's intuitionistic fuzzy extended
Bonferroni mean (AIF-EBM) operator defined in~\cite[Definition~8]{DGM2017}, then,
by \cite[Theorem~1]{DGM2017} and formula~\eqref{eq-Trans}, we have that,
for $(\langle \mu_{1}, \nu_{1}\rangle, \ldots, \langle\mu_{n}, \nu_{n}\rangle)\in \tilde{\mathbb{Q}}^{n}$,
\begin{align*}
&\mathscr{S}(\text{AIF-EBM})(\langle \mu_{1}, \nu_{1}\rangle, \cdots, \langle\mu_{n}, \nu_{n}\rangle)\\
=&\Bigg\langle \sqrt[q]{1-\zeta^{-1}\left(\frac{1}{\lambda_1}\zeta(1-\zeta^{-1}(\zeta(A)+\zeta(B)))\right)},
\sqrt[q]{\zeta^{-1}\left(\frac{1}{\lambda_1}\zeta(1-\zeta^{-1}(\zeta(C)+\zeta(D)))\right)}\Bigg\rangle,
\end{align*}
where $\zeta$ is an additive generator of a strict t-conorm, $\lambda_1>0$,
$\lambda_2\geq 0$, $A=\zeta^{-1}\Big(\frac{n-|I^{\prime}|}{n}\zeta\Big(1-\zeta^{-1}
\Big(\frac{\lambda_1}{\lambda_1+\lambda_2}
\zeta\Big(1-\zeta^{-1}\Big(\frac{1}{n-|I^{\prime}|}\sum_{i\notin I^{\prime}}\zeta\Big\{1-\zeta^{-1}
\Big[\lambda_1 \zeta(1-(\mu_{i})^{q})+\zeta\Big(1-\zeta^{-1}\Big(\frac{1}{|I^{\prime}|}\sum_{j\in I^{\prime}}
\zeta(1-\zeta^{-1}(\lambda_2\zeta(1-(\mu_j)^{q})))\Big)\Big)\Big]\Big\}\Big)\Big)\Big)\Big)\Big)$,
$B=\zeta^{-1}\Big(\frac{1}{n}\sum_{i\in I^{\prime}}\zeta\Big(1-\zeta^{-1}(\lambda_1\zeta(1-(\mu_i)^{q}))\Big)\Big)$,
$C=\zeta^{-1}\Big(\frac{n-|I^{\prime}|}{n}\zeta\Big(1-\zeta^{-1}
\Big(\frac{\lambda_1}{\lambda_1+\lambda_2}
\zeta\Big(1-\zeta^{-1}\Big(\frac{1}{n-|I^{\prime}|}\sum_{i\notin I^{\prime}}\zeta\Big\{1-\zeta^{-1}
\Big[\lambda_1 \zeta((\nu_{i})^{q})+\zeta\Big(1-\zeta^{-1}\Big(\frac{1}{|I^{\prime}|}\sum_{j\in I^{\prime}}
\zeta(1-\zeta^{-1}(\lambda_2\zeta((\nu_j)^{q})))\Big)\Big)\Big]\Big\}\Big)\Big)\Big)\Big)\Big)$,
and $D=\zeta^{-1}\Big(\frac{1}{n}\sum_{i\in I^{\prime}}\zeta\Big(1-\zeta^{-1}(\lambda_1\zeta((\nu_i)^{q}))\Big)\Big)$.
By Theorem~\ref{Equ-IFV-Q} and \cite[P1--P3]{DGM2017},
$\mathscr{S}(\text{AIF-EBM})$ is idempotent, increasing, and bounded.
In particular, if $\zeta(t)=-\log(1-t)$, then $\mathscr{S}(\text{AIF-EBM})$ reduces to
\cite[Theorem~1]{LLL2018}.

(6) Similarly, all operators defined in \cite{LW2019,LW2020,DL2020,JMP2019,ZGGQHL2019}
can be obtained by formula~\eqref{eq-Trans} with appropriate operators on IFVs.

\section{An admissible similarity measure
with the order $\leq_{_{\text{XY}}}$ and its applications}\label{Sec-7}

Li and Cheng~\cite{LC2002} introduced the concept of the similarity measure for IFSs,
improved by Mitchell~\cite{Mit2003}, as follows. For more results on the similarity measure,
we refer to \cite{Sz2014}.

\begin{definition}[{\textrm{\protect\cite{Mit2003}}}]
\label{Def-Li-Cheng}
Let $X$ be a universe of discourse and $\mathbf{S}: \mathrm{IFS}(X)\times \mathrm{IFS}(X)
\rightarrow [0, 1]$ be a mapping. Mapping $\mathbf{S}(\_)$ is called an \textit{admissible similarity measure
with the order $\subset$} on $\mathrm{IFS}(X)$ if it satisfies the following conditions:
for any $I_1, I_2, I_3\in \mathrm{IFS}(X)$,
\begin{enumerate}[(1)]
  \item $0\leq \mathbf{S}(I_1, I_2)\leq 1$.
  \item $\mathbf{S}(I_1, I_2)=1$ if and only if $I_1=I_2$.
  \item $\mathbf{S}(I_1, I_2)=\mathbf{S}(I_2, I_1)$.
  \item If $I_1\subset I_2\subset I_3$, then $\mathbf{S}(I_1, I_3)\leq \mathbf{S}(I_1, I_2)$
  and $\mathbf{S}(I_1, I_3)\leq \mathbf{S}(I_2, I_3)$.
\end{enumerate}
\end{definition}

\begin{definition}
\label{Def-Wu-1}
Let $X$ be a universe of discourse and $I_1$, $I_2\in \mathrm{IFS}(X)$.
If $\langle \mu_{_{I_1}}(x), \nu_{_{I_1}}(x)\rangle \leq_{_{\text{XY}}}
\langle \mu_{_{I_2}}(x), \nu_{_{I_2}}(x)\rangle$ holds for all $x\in X$, then
we say that $I_{1}$ is \textit{smaller} than or equal to $I_2$ under the linear order $\leq_{_{\text{XY}}}$,
denoted by $I_{1} \leq_{_{\text{XY}}} I_2$.
\end{definition}

Based on Definition~\ref{Def-Wu-1},
we introduce the improved similarity measure for IFSs as follows:

\begin{definition}
\label{Def-Wu-Sim}
Let $X$ be a universe of discourse and $\mathbf{S}: \mathrm{IFS}(X)\times \mathrm{IFS}(X)
\rightarrow [0, 1]$ be a mapping. Mapping $\mathbf{S}(\_)$ is called an {\it admissible similarity measure
with the order $\leq_{_{\text{XY}}}$} on $\mathrm{IFS}(X)$ if it satisfies the
conditions (1)--(3) in Definition~\ref{Def-Li-Cheng}, and also the following one:
\begin{enumerate}
  \item[(4$^{\prime}$)] For any $I_1$, $I_2$, $I_3\in \mathrm{IFS}(X)$,
  if $I_1\leq_{_{\text{XY}}} I_2\leq_{_{\text{XY}}} I_3$, then $\mathbf{S}(I_1, I_3)\leq \mathbf{S}(I_1, I_2)$
  and $\mathbf{S}(I_1, I_3)\leq \mathbf{S}(I_2, I_3)$.
\end{enumerate}
\end{definition}

If $I_1\subset I_2$, then $I_{1} \leq_{_{\text{XY}}} I_2$. This implies that the function
$\mathbf{S}(\_)$, being the similarity measure under the order $\leq_{_{\text{XY}}}$ in Definition~\ref{Def-Wu-Sim},
is also the similarity measure in Definition~\ref{Def-Li-Cheng}. Namely, Definition~\ref{Def-Wu-Sim} is more substantial than
Definition~\ref{Def-Li-Cheng}. Definition~\ref{Def-Wu-Sim} includes the linear order $\leq_{_{\text{XY}}}$ on
$\tilde{\mathbb{I}}$ in consideration; however, Definition~\ref{Def-Li-Cheng} only considers the partial order
$\subset$ induced by $\cap$. Therefore, the similarity measure in Definition~\ref{Def-Wu-Sim} is more effective
than that of Definition~\ref{Def-Li-Cheng}. In the following, we construct such a similarity measure
$\mathbf{S}(\_)$ satisfying the conditions in Definition~\ref{Def-Wu-Sim}.

For $\alpha, \beta\in \tilde{\mathbb{I}}$, define
\begin{equation}
\label{rho-equ}
\varrho(\alpha, \beta)=
\begin{cases}
\frac{1}{3}(1+|s(\alpha)-s(\beta)|), & s(\alpha)\neq s(\beta), \\
\frac{1}{3}(|h(\alpha)-h(\beta)|), & s(\alpha)=s(\beta),
\end{cases}
\end{equation}
where $S(\alpha)$ and $h(\alpha)$ are the score degree and the accuracy degree of $\alpha$, respectively.


Now, we show some desirable properties of the function $\varrho$.

\begin{property}
\label{P-1}
(1) $\varrho(\alpha, \beta)\in [0, 1]$ and $\varrho(\alpha, \beta)=0$ if and only if $\alpha=\beta$.

(2) $\varrho(\alpha, \beta)=1$ if and only if ($\alpha=\langle 0, 1\rangle$ and
$\beta=\langle 1, 0\rangle$) or ($\alpha=\langle 1, 0\rangle$ and
$\beta=\langle 0, 1\rangle$).
\end{property}

\begin{property}
$\varrho(\alpha, \beta)=\varrho(\beta, \alpha)$.
\end{property}

\begin{property}
\label{P-3}
For any $\alpha, \beta, \gamma \in \tilde{\mathbb{I}}$,
$\varrho(\alpha, \beta)+\varrho(\beta, \gamma)\geq \varrho(\alpha, \gamma)$.
\end{property}

\textit{Proof of Property~\ref{P-3}.} Consider the following cases:

(1) If $s(\alpha)=s(\beta)=s(\gamma)$, then $\varrho(\alpha, \beta)+\varrho(\beta, \gamma)
=\frac{1}{3}(|h(\alpha)-h(\beta)|)+\frac{1}{3}(|h(\beta)-h(\gamma)|)
\geq \frac{1}{3}(|h(\alpha)-h(\gamma)|)= \varrho(\alpha, \gamma)$.

(2) If $s(\alpha)=s(\beta)\neq s(\gamma)$, then $\varrho(\alpha, \beta)+\varrho(\beta, \gamma)
=\frac{1}{3}(|h(\alpha)-h(\beta)|)+\frac{1}{3}(1+|s(\beta)-s(\gamma)|)\geq
\frac{1}{3}(1+|s(\beta)-s(\gamma)|)=\frac{1}{3}(1+|s(\alpha)-s(\gamma)|)=
\varrho(\alpha, \gamma)$.

(3) If $s(\alpha)\neq s(\beta)= s(\gamma)$, then, similarly to the proof of (2),
$\varrho(\alpha, \beta)+\varrho(\beta, \gamma)\geq \varrho(\alpha, \gamma)$.

(4) If $s(\alpha)=s(\gamma)\neq s(\beta)$, then $\varrho(\alpha, \beta)+\varrho(\beta, \gamma)
=\frac{1}{3}(1+|s(\alpha)-s(\beta)|)+\frac{1}{3}(1+|s(\beta)-s(\gamma)|)\geq
\frac{1}{3}\geq \frac{1}{3}(|h(\alpha)-h(\gamma)|)=\varrho(\alpha, \gamma)$.

(5) If $s(\alpha)\neq s(\beta)$, $s(\alpha)\neq s(\gamma)$, and
$s(\beta)\neq s(\gamma)$, then $\varrho(\alpha, \beta)+\varrho(\beta, \gamma)
=\frac{1}{3}(1+|s(\alpha)-s(\beta)|)+\frac{1}{3}(1+|s(\beta)-s(\gamma)|)
\geq \frac{1}{3}(2+|s(\alpha)-s(\gamma)|)>\varrho(\alpha, \gamma)$.

\begin{property}
\label{P-4}
For any $\alpha, \beta, \gamma \in \tilde{\mathbb{I}}$, if $\alpha\leq_{_{\text{XY}}}
\beta \leq_{_{\text{XY}}} \gamma$, then $\varrho(\alpha, \beta)\leq \varrho(\alpha, \gamma)$
and $\varrho(\beta, \gamma)\leq \varrho(\alpha, \gamma)$.
\end{property}

\textit{Proof of Property~\ref{P-4}.} Consider the following cases:

(1) If $s(\alpha)=s(\beta)=s(\gamma)$, then, by $\alpha\leq_{_{\text{XY}}}
\beta \leq_{_{\text{XY}}} \gamma$, we have that $h(\alpha)\leq h(\beta)\leq h(\gamma)$.
In this case, $\varrho(\alpha, \beta)=\frac{1}{3}(h(\beta)-h(\alpha))\leq
\frac{1}{3}(h(\gamma)-h(\alpha))=\varrho(\alpha, \gamma)$ and
$\varrho(\beta, \gamma)=\frac{1}{3}(h(\beta)-h(\beta))\leq \frac{1}{3}(h(\gamma)-h(\alpha))
=\varrho(\alpha, \gamma)$.

(2) If $s(\alpha)=s(\beta)\neq s(\gamma)$, then, by applying $\alpha\leq_{_{\text{XY}}}
\beta \leq_{_{\text{XY}}} \gamma$, we have that $h(\alpha)\leq h(\beta)$ and
$s(\alpha)=s(\beta)<s(\gamma)$. In this case, $\varrho(\alpha, \beta)=\frac{1}{3}(|h(\alpha)-h(\beta)|)
\leq \frac{1}{3}\leq \frac{1}{3}(1+|s(\alpha)-s(\gamma)|)=\varrho(\alpha, \gamma)$
and $\varrho(\beta, \gamma)=\frac{1}{3}(1+|s(\beta)-s(\gamma)|)
=\frac{1}{3}(1+|s(\alpha)-s(\gamma)|)=\varrho(\alpha, \gamma)$.

(3) If $s(\alpha)\neq s(\beta)= s(\gamma)$, then, by applying $\alpha\leq_{_{\text{XY}}}
\beta \leq_{_{\text{XY}}} \gamma$, we have that $h(\beta)\leq h(\gamma)$ and
$s(\alpha)<s(\beta)=s(\gamma)$. In this case, $\varrho(\alpha, \beta)=
\frac{1}{3}(1+|s(\alpha)-s(\beta)|) =\frac{1}{3}(1+|s(\alpha)-s(\gamma)|)
=\varrho(\alpha, \gamma)$ and $\varrho(\beta, \gamma)=
\frac{1}{3}(|h(\beta)-h(\gamma)|)\leq \frac{1}{3}\leq
\frac{1}{3}(1+|s(\alpha)-s(\gamma)|)
=\varrho(\alpha, \gamma)$.

(4) If $s(\alpha)=s(\gamma)\neq s(\beta)$, then, by applying $\alpha\leq_{_{\text{XY}}}
\beta \leq_{_{\text{XY}}} \gamma$, we have that $s(\alpha)<s(\beta)$ and $s(\beta)<s(\gamma)$.
This is impossible.

(5) If $s(\alpha)\neq s(\beta)$, $s(\alpha)\neq s(\gamma)$, and
$s(\beta)\neq s(\gamma)$, then, by applying $\alpha\leq_{_{\text{XY}}}
\beta \leq_{_{\text{XY}}} \gamma$, we have that $s(\alpha)<s(\beta)<s(\gamma)$.
In this case, $\varrho(\alpha, \beta)=\frac{1}{3}(1+|s(\alpha)-s(\beta)|)
<\frac{1}{3}(1+|s(\alpha)-s(\gamma)|)= \varrho(\alpha, \gamma)$
and $\varrho(\beta, \gamma)=\frac{1}{3}(1+|s(\beta)-s(\gamma)|)
<\frac{1}{3}(1+|s(\alpha)-s(\gamma)|)=\varrho(\alpha, \gamma)$.

Let $X=\{x_1, x_2, \ldots, x_n\}$ be a finite universe of discourse and
$\omega=(\omega_1, \omega_2, \ldots, \omega_n)^{\top}$ be the weight vector of $x_{j}$
($j=1, 2, \ldots, n$) such that $\omega_{j}\in (0, 1]$ and
$\sum_{j=1}^{n}\omega_j=1$. For $I_1, I_2\in \mathrm{IFS}(X)$,
define
\begin{equation}
\label{eq-dis-Wu}
d(I_1, I_2)=
\sum_{j=1}^{n}\omega_j\varrho(I_{1}(x_j), I_{2}(x_j)),
\end{equation}
and
\begin{equation}
\label{eq-Sim}\mathbf{S}(I_1, I_2)=1-\sum_{j=1}^{n}\omega_j\varrho(I_{1}(x_j), I_{2}(x_j)),
\end{equation}
where $I_{1}(x_j)=\langle \mu_{_{I_1}}(x_j), \nu_{_{I_1}}(x_j)\rangle$
and $I_{2}(x_j)=\langle \mu_{_{I_2}}(x_j), \nu_{_{I_2}}(x_j)\rangle$.

\begin{theorem}
\label{Thm-Sim-Wu}
(1) The function $d(\_)$ defined by formula~\eqref{eq-dis-Wu} is a metric
on $\text{IFSs}(X)$.

(2) The function $\mathbf{S}(\_)$ defined by formula~\eqref{eq-Sim} is a similarity measure
on $\text{IFSs}(X)$ under the order $\leq_{_{\text{XY}}}$.
\end{theorem}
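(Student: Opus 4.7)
The plan is to reduce both claims to the pointwise properties of $\varrho$ already established in Properties~\ref{P-1}--\ref{P-4}, lifted to a weighted average over the finite discourse $X=\{x_1,\ldots,x_n\}$ with strictly positive weights $\omega_j\in(0,1]$ summing to $1$. In particular, nothing new about $\varrho$ itself is needed; every metric/similarity axiom is obtained by multiplying a pointwise statement by $\omega_j$ and summing over $j$.

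For part~(1) I would verify the three axioms of a metric in turn. Nonnegativity is immediate from Property~\ref{P-1}. The equation $d(I_1,I_2)=\sum_{j=1}^{n}\omega_j\varrho(I_1(x_j),I_2(x_j))=0$ forces each summand to vanish, because the weights are strictly positive; by Property~\ref{P-1} this is equivalent to $I_1(x_j)=I_2(x_j)$ for every $j$, i.e.\ $I_1=I_2$. Symmetry $d(I_1,I_2)=d(I_2,I_1)$ follows from the pointwise symmetry of $\varrho$. Finally, the triangle inequality is obtained by summing the pointwise inequality in Property~\ref{P-3}:
\begin{equation*}
d(I_1,I_2)+d(I_2,I_3)=\sum_{j=1}^{n}\omega_j\bigl[\varrho(I_1(x_j),I_2(x_j))+\varrho(I_2(x_j),I_3(x_j))\bigr]\geq \sum_{j=1}^{n}\omega_j\varrho(I_1(x_j),I_3(x_j))=d(I_1,I_3).
\end{equation*}

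For part~(2) I would check the four conditions of Definition~\ref{Def-Wu-Sim}. Because each $\varrho(I_1(x_j),I_2(x_j))\in[0,1]$ by Property~\ref{P-1} and the $\omega_j$ form a convex combination, $d(I_1,I_2)\in[0,1]$, hence $\mathbf{S}(I_1,I_2)=1-d(I_1,I_2)\in[0,1]$. The equivalence $\mathbf{S}(I_1,I_2)=1\Leftrightarrow I_1=I_2$ is just the definiteness statement proved in part~(1), and symmetry of $\mathbf{S}$ descends from the symmetry of $\varrho$. For the monotonicity condition~(4$'$), if $I_1\leq_{_{\text{XY}}} I_2\leq_{_{\text{XY}}} I_3$, then Definition~\ref{Def-Wu-1} yields $I_1(x_j)\leq_{_{\text{XY}}} I_2(x_j)\leq_{_{\text{XY}}} I_3(x_j)$ for every $j$, so Property~\ref{P-4} gives both $\varrho(I_1(x_j),I_2(x_j))\leq\varrho(I_1(x_j),I_3(x_j))$ and $\varrho(I_2(x_j),I_3(x_j))\leq\varrho(I_1(x_j),I_3(x_j))$. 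Multiplying by $\omega_j\geq 0$ and summing gives $d(I_1,I_2)\leq d(I_1,I_3)$ and $d(I_2,I_3)\leq d(I_1,I_3)$, and subtracting from $1$ flips the inequalities to $\mathbf{S}(I_1,I_3)\leq\mathbf{S}(I_1,I_2)$ and $\mathbf{S}(I_1,I_3)\leq\mathbf{S}(I_2,I_3)$, as required.

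The whole proof is a routine pointwise-to-global lift, so there is no substantial obstacle. The only mild subtleties are the use of strict positivity $\omega_j>0$ in the definiteness step (otherwise a zero weight could mask a nonzero $\varrho$-term), and the observation that the upper bound $\varrho\leq 1$ built into Property~\ref{P-1} is what makes $\mathbf{S}$ land in $[0,1]$ rather than just being bounded above by~$1$.
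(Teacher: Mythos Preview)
Your proof is correct and is precisely the argument the paper has in mind: the paper's own proof is the single sentence ``It follows directly from Properties~\ref{P-1}--\ref{P-4},'' and you have simply unpacked that line, showing how each metric and similarity axiom lifts from the pointwise properties of $\varrho$ via the positive weighted average. Your remarks on the need for $\omega_j>0$ in the definiteness step and $\varrho\leq 1$ for the range of $\mathbf{S}$ are apt and make the implicit assumptions explicit.
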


\begin{proof}
It follows directly from Properties~\ref{P-1}--\ref{P-4}.
\end{proof}

\begin{remark}
(1) From Theorem~\ref{Thm-Sim-Wu}, it can be verified that $d(\_)$ is a new metric on IFSs,
which also partially answers Problem~\ref{Prob-1}.

(2) The topology $\mathscr{T}_1$ on the space $\tilde{\mathbb{I}}$
induced by the metric $\varrho$, given by formula~\eqref{rho-equ},
is weaker than the order topology $\mathscr{T}_2$ by $\leq_{_{\text{XY}}}$.
Notice from Corollary~\ref{Corollary-not-Metri} that $(\tilde{\mathbb{I}},
\mathscr{T}_2)$ is not metrizable, which means that $\mathscr{T}_1$
induced by the metric $\varrho$, given by formula~\eqref{rho-equ},
is strictly weaker than $\mathscr{T}_2$.
\end{remark}

\begin{example}[{\textrm{\protect\cite[Example~3.3.1]{XC2012}, \cite[Example~4.1]{WX2005}}}]
(A pattern recognition problem about the classification of building
materials) Consider four building materials: sealant, floor varnish, wall paint,
and polyvinyl chloride flooring represented by the IFSs $I_j$
($j=1, 2, 3, 4$) in the feature space $X=\{x_1, x_2, \ldots, x_{12}\}$.
Now, given another kind of unknown building material $I$ with data as listed
in Table~\ref{Tab-1}, we can use the similarity measure in Theorem~\ref{Thm-Sim-Wu}
to identify which type of $I_j$ the unknown material $I$ belongs.

\begin{table}[H]	
	\centering
	\caption{The data on building materials}
	\label{Tab-1}
     \scalebox{0.8}{
	\begin{tabular}{cccccc}
		\toprule
		 & $I_{1}$ &  $I_{2}$ & $I_{3}$ & $I_{4}$ & $I$\\
		\midrule
		$x_{1}$ & $\langle0.173, 0.524\rangle$ & $\langle0.510, 0.365\rangle$ &
           $\langle0.495, 0.387\rangle$ & $\langle1.000, 0.000\rangle$ & $\langle0.978, 0.003\rangle$\\
		$x_{2}$ & $\langle0.102, 0.818\rangle$ & $\langle0.627, 0.125\rangle$ &
           $\langle0.603, 0.298\rangle$ & $\langle1.000, 0.000\rangle$ & $\langle0.980, 0.012\rangle$\\
        $x_{3}$ & $\langle0.530, 0.326\rangle$ & $\langle1.000, 0.000\rangle$ &
           $\langle0.987, 0.006\rangle$ & $\langle0.857, 0.123\rangle$ & $\langle0.798, 0.132\rangle$\\
        $x_{4}$ & $\langle0.965, 0.008\rangle$ & $\langle0.125, 0.648\rangle$ &
           $\langle0.073, 0.849\rangle$ & $\langle0.734, 0.158\rangle$ & $\langle0.693, 0.213\rangle$\\
        $x_{5}$ & $\langle0.420, 0.351\rangle$ & $\langle0.026, 0.823\rangle$ &
           $\langle0.037, 0.923\rangle$ & $\langle0.021, 0.896\rangle$ & $\langle0.051, 0.876\rangle$\\
        $x_{6}$ & $\langle0.008, 0.956\rangle$ & $\langle0.732, 0.153\rangle$ &
           $\langle0.690, 0.268\rangle$ & $\langle0.076, 0.912\rangle$ & $\langle0.123, 0.756\rangle$\\
        $x_{7}$ & $\langle0.331, 0.512\rangle$ & $\langle0.556, 0.303\rangle$ &
           $\langle0.147, 0.812\rangle$ & $\langle0.152, 0.712\rangle$ & $\langle0.152, 0.721\rangle$\\
		$x_{8}$ & $\langle1.000, 0.000\rangle$ & $\langle0.650, 0.267\rangle$ &
           $\langle0.213, 0.653\rangle$ & $\langle0.113, 0.756\rangle$ & $\langle0.113, 0.732\rangle$\\
        $x_{9}$ & $\langle0.215, 0.625\rangle$ & $\langle1.000, 0.000\rangle$ &
           $\langle0.501, 0.284\rangle$ & $\langle0.489, 0.389\rangle$ & $\langle0.494, 0.368\rangle$\\
        $x_{10}$ & $\langle0.432, 0.534\rangle$ & $\langle0.145, 0.762\rangle$ &
           $\langle1.000, 0.000\rangle$ & $\langle1.000, 0.000\rangle$ & $\langle0.987, 0.000\rangle$\\
        $x_{11}$ & $\langle0.750, 0.126\rangle$ & $\langle0.047, 0.923\rangle$ &
           $\langle0.324, 0.483\rangle$ & $\langle0.386, 0.485\rangle$ & $\langle0.376, 0.423\rangle$\\
        $x_{12}$ & $\langle0.432, 0.432\rangle$ & $\langle0.760, 0.231\rangle$ &
           $\langle0.045, 0.912\rangle$ & $\langle0.028, 0.912\rangle$ & $\langle0.012, 0.897\rangle$\\
		\bottomrule
	\end{tabular}
      }
\end{table}

(1) If we consider that the weight vector $\omega$ of $x_j$ ($j=1, 2, \ldots, 12$) is:
\begin{align*}
\omega=(0.06, & 0.10, 0.08, 0.05, 0.10, 0.11, 0.09,\\
&0.06, 0.12, 0.10, 0.07, 0.06)^{\top},
\end{align*}
then, by formula~\eqref{eq-Sim}, we have that $\mathbf{S}(I_1, I)=0.3873$, $\mathbf{S}(I_2, I)=0.3828$,
$\mathbf{S}(I_3, I)=0.5437$, and $\mathbf{S}(I_4, I)=0.6491$. In this case,
$$
\mathbf{S}(I_4, I)>\mathbf{S}(I_3, I)>\mathbf{S}(I_1, I)>\mathbf{S}(I_2, I),
$$
which means that the unknown building material $I$ should approach $I_4$.
This result coincides with the one reported in~\cite{XC2012}.

(2) If the weights of $x_j$ ($j=1, 2, \ldots, 12$) are equal, i.e.,
$\omega_1=\omega_2=\cdots =\omega_{12}=\frac{1}{12}$, then,
by formula~\eqref{eq-Sim}, we have that $\mathbf{S}(I_1, I)=0.3793$, $\mathbf{S}(I_2, I)=0.3773$,
$\mathbf{S}(I_3, I)=0.5354$, and $\mathbf{S}(I_4, I)=0.6500$. In this case,
$$
\mathbf{S}(I_4, I)>\mathbf{S}(I_3, I)>\mathbf{S}(I_1, I)>\mathbf{S}(I_2, I),
$$
which means that the unknown building material $I$ should approach $I_4$.
This result coincides with the one reported in~\cite{WX2005}.
\end{example}

\section{Conclusion}\label{Sec-8}
In this paper, we have partially answered the three open problems proposed by
Atanassov~\cite{Ata1999,Ata2012} and systematically studies the topological
and algebraic structures of the spaces $(\tilde{\mathbb{I}}, \leq_{_{\textmd{XY}}})$
and $(\tilde{\mathbb{I}}, \leq_{_{\textmd{ZX}}})$.
We first showed that the two spaces $(\tilde{\mathbb{I}}, \leq_{_{\textmd{XY}}})$ and
$(\tilde{\mathbb{I}}, \leq_{_{\textmd{ZX}}})$ are isomorphic. We then introduced a new operator ``$\neg$" for IFVs
using the linear order $\leq_{_{\textmd{XY}}}$. We also demonstrated that this operator is a strong
negation on $(\tilde{\mathbb{I}}, \leq_{_{\textmd{XY}}})$. Moreover, we presented the following results:
(1) $(\tilde{\mathbb{I}}, \leq_{_{\textmd{XY}}})$ and $(\tilde{\mathbb{I}}, \leq_{_{\textmd{ZX}}})$
are complete lattices and Kleene algebras. (2) $(\tilde{\mathbb{I}}, \leq_{_{\textmd{XY}}})$ and
$(\tilde{\mathbb{I}}, \leq_{_{\textmd{ZX}}})$ are not separable and metrizable but compact and
connected topological spaces. These results partially answer Problems~\ref{Prob-1}--\ref{Prob-3}
from some new perspectives. Furthermore, we constructed an isomorphism between
$(\tilde{\mathbb{I}}, \leq_{_{\text{XY}}})$ and $(\tilde{\mathbb{Q}}, \leq_{_{\text{LW}}})$. Finally, we introduced
the concept of admissible similarity measures with particular orders for IFSs, extending the
previous definition of the similarity measure for IFSs given in~\cite{LC2002}. We finally constructed an
admissible similarity measure with the linear order $\leq_{_{\text{XY}}}$ and effectively
applied it to a pattern recognition problem about the classification of building materials.



\begin{thebibliography}{10}

\bibitem{AFMMF2019}
M.~I. Ali, F.~Feng, T.~Mahmood, I.~Mahmood, and H.~Faizan.
\newblock A graphical method for ranking {A}tanassov's intuitionistic fuzzy
  values using the uncertainty index and entropy.
\newblock {\em Int. J. Intell. Syst.}, 34(10):2692--2712, 2019.

\bibitem{Ata1986}
K.~T. Atanassov.
\newblock Intuitionistic fuzzy set.
\newblock {\em Fuzzy Sets Syst.}, 20:87--96, 1986.

\bibitem{Ata1999}
K.~T. Atanassov.
\newblock {\em {I}ntuitionistic {F}uzzy {S}ets: {T}heory and {A}pplications},
  volume~35 of {\em Studies in Fuzziness and Soft Computing}.
\newblock Springer, Berlin, Heidelberg, 1999.

\bibitem{Ata2012}
K.~T. Atanassov.
\newblock {\em On {I}ntuitionistic {F}uzzy {S}ets {T}heory}, volume 283 of {\em
  Studies in Fuzziness and Soft Computing}.
\newblock Springer, Berlin, Heidelberg, 2012.

\bibitem{Ata2020}
K.~T. Atanassov.
\newblock {\em Interval-{V}alued {I}ntuitionistic {F}uzzy {S}ets}, volume 388
  of {\em Studies in Fuzziness and Soft Computing}.
\newblock Springer, Berlin, Heidelberg, 2020.

\bibitem{AG1989}
K.~T. Atanassov and G.~Gargov.
\newblock Interval valued intuitionistic fuzzy sets.
\newblock {\em Fuzzy Sets Syst.}, 31:343--349, 1989.

\bibitem{BBGMP2011}
G.~Beliakov, H.~Bustince, D.~P. Goswami, U.~K. Mukherjee, and N.~R. Pal.
\newblock On averaging operators for {A}tanassov's intuitionistic fuzzy sets.
\newblock {\em Inf. Sci.}, 181(6):1116--1124, 2011.

\bibitem{Bir1967}
G.~Birkhoff.
\newblock {\em {L}attice {T}heory}, volume XXV.
\newblock {A}merican {M}athematical {S}ociety {C}olloquium {P}ublications,
  {R}hode {I}sland, revised edition, 1948.

\bibitem{BB1996}
H.~Bustince and P.~Burillo.
\newblock Vague sets are intuitionistic fuzzy sets.
\newblock {\em Fuzzy Sets Syst.}, 79:403--405, 1996.

\bibitem{DL2020}
A.~P. Darko and D.~Liang.
\newblock Some q-rung orthopair fuzzy {H}amacher aggregation operators and
  their application to multiple attribute group decision making with modified
  {EDAS} method.
\newblock {\em Eng. Appl. Artif. Intell.}, 87:103259 (17 pages), 2020.

\bibitem{DGM2017}
S.~Das, D.~Guha, and R.~Mesiar.
\newblock Extended {B}onferroni mean under intuitionistic fuzzy environment
  based on a strict t-conorm.
\newblock {\em IEEE Trans. Syst., Man, Cybern., Syst.}, 47(8):2083--2099, 2017.

\bibitem{DBR2000}
S.~K. De, R.~Biswas, and A.~R. Roy.
\newblock Some operations on intuitioistic fuzzy sets.
\newblock {\em Fuzzy Sets Syst.}, 114:477--484, 2000.

\bibitem{DeBFIKM2016}
L.~De~Miguel, H.~Bustince, J.~Fernandez, E.~Indur\'{a}in,
  A.~Koles\'{a}rov\'{a}, and R.~Mesiar.
\newblock Construction of admissible linear orders for interval-valued
  {A}tanassov intuitionistic fuzzy sets with an application to decision making.
\newblock {\em Inf. Fusion}, 27:189--197, 2016.

\bibitem{DeBPBDaBMO2016}
L.~De~Miguel, H.~Bustince, B.~Pekala, U.~Bentkowska, I.~Da~Silva, B.~Bedregal,
  R.~Mesiar, and G.~Ochoa.
\newblock Interval-valued {A}tanassov intuitionistic {OWA} aggregations using
  admissible linear orders and their application to decision making.
\newblock {\em IEEE Trans. Fuzzy Syst.}, 24(6):1586--1597, 2016.

\bibitem{Eng1977}
R.~Engelking.
\newblock {\em {G}eneral {T}opology}.
\newblock Polish {S}cientific {P}ublishers, 1977.

\bibitem{GB1993}
W.~L. Gau and D.~J. Buehrer.
\newblock Vague sets.
\newblock {\em IEEE Trans. Syst., Man, Cybern.}, 23(2):610--614, 1993.

\bibitem{Guo2014}
K.~Guo.
\newblock Amount of information and attitudinal-based method for ranking
  {A}tanassov's intuitionistic fuzzy values.
\newblock {\em IEEE Trans. Fuzzy Syst.}, 22(1):177--188, 2014.

\bibitem{GLMG2015}
P.~Gupta, C.-T. Lin, M.~K. Mehlawat, and N.~Grover.
\newblock A new method for intuitionistic fuzzy multiattribute decision making.
\newblock {\em IEEE Trans. Syst., Man, Cybern., Syst.}, 46(9):1167--1179, 2015.

\bibitem{HK1910}
A.~Haar and D.~K\"{o}nig.
\newblock Ueber einfach geordnete mengen.
\newblock {\em J. Reine Angew. Math.}, 139:16--28, 1910.

\bibitem{HWW2016}
J.~Harding, C.~Walker, and E.~Walker.
\newblock {\em The {T}ruth {V}alue {A}lgebra of {T}ype-2 {F}uzzy {S}ets:
  {O}rder {C}onvolutions of {F}unctions on the {U}nit {I}nterval}.
\newblock CRC Press, 2016.

\bibitem{HY1981}
C.-L. Hwang and K.~Yoon.
\newblock {\em Multiple Attribute Decision Making: Methods and Applications A
  State-of-the-Art Survey}, volume 186 of {\em Lecture Notes in Economics and
  Mathematical Systems}.
\newblock {S}pringer-{V}erlag {B}erlin {H}eidelberg, 1981.

\bibitem{JMP2019}
C.~Jana, G.~Muhiuddin, and M.~Pal.
\newblock Some {D}ombi aggregation of q-rung orthopair fuzzy numbers in
  multiple-attribute decision making.
\newblock {\em Int. J. Intell. Syst.}, 34(12):3220--3240, 2019.

\bibitem{Li2010}
D.~Li.
\newblock Topsis-based nonlinear-programming methodology for multiattribute
  decision making with interval-valued intuitionistic fuzzy sets.
\newblock {\em IEEE Trans. Fuzzy Syst.}, 18(2):299--311, 2010.

\bibitem{LC2002}
D.~Li and C.~Cheng.
\newblock New similarity measures of intuitionistic fuzzy sets and application
  to pattern recognitions.
\newblock {\em Pattern Recognit. Lett.}, 23:221--225, 2002.

\bibitem{LW2007}
H.-W. Liu and G.-J. Wang.
\newblock Multi-criteria decision-making methods based on intuitionistic fuzzy
  sets.
\newblock {\em Eur. J. Oper. Res.}, 179(1):220--233, 2007.

\bibitem{Liu2014}
P.~Liu.
\newblock Some {H}amacher aggregation operators based on the interval-valued
  intuitionistic fuzzy numbers and their application to group decision making.
\newblock {\em IEEE Trans. Fuzzy Syst.}, 22(1):83--97, 2014.

\bibitem{LCW2020}
P.~Liu, S.-M. Chen, and P.~Wang.
\newblock Multiple-attribute group decision-making based on q-rung orthopair
  fuzzy power {M}aclaurin symmetric mean operators.
\newblock {\em IEEE Trans. Syst., Man, Cybern., Syst.}, 50(10):3741--3756,
  2020.

\bibitem{LW2019}
P.~Liu and P.~Wang.
\newblock Multiple-attribute decision-making based on {A}rchimedean
  {B}onferroni operators of q-rung orthopair fuzzy numbers.
\newblock {\em IEEE Trans. Fuzzy Syst.}, 27(5):834--848, 2018.

\bibitem{LW2018}
P.~Liu and P.~Wang.
\newblock Some q-rung orthopair fuzzy aggregation operators and their
  applications to multiple-attribute decision making.
\newblock {\em Int. J. Intell. Syst.}, 33(2):259--280, 2018.

\bibitem{LW2020}
P.~Liu and Y.~Wang.
\newblock Multiple attribute decision making based on q-rung orthopair fuzzy
  generalized {M}aclaurin symmetic mean operators.
\newblock {\em Inf. Sci.}, 518:181--210, 2020.

\bibitem{LLL2018}
Z.~Liu, P.~Liu, and X.~Liang.
\newblock Multiple attribute decision-making method for dealing with
  heterogeneous relationship among attributes and unknown attribute weight
  information under q-rung orthopair fuzzy environment.
\newblock {\em Int. J. Intell. Syst.}, 33(9):1900--1928, 2018.

\bibitem{Mit2003}
H.~B. Mitchell.
\newblock On the {D}engfeng-{C}huntian similarity measure and its application
  to pattern recognition.
\newblock {\em Pattern Recognit. Lett.}, 24:3101--3104, 2003.

\bibitem{M1975}
J.~R. Munkres.
\newblock {\em Topology}.
\newblock {P}rentice-{H}all, 1975.

\bibitem{QL2014}
J.~Qin and X.~Liu.
\newblock An approach to intuitionistic fuzzy multiple attribute decision
  making based on {M}aclaurin symmetric mean operators.
\newblock {\em J. Intell. Fuzzy Syst.}, 27(5):2177--2190, 2014.

\bibitem{SY2020}
T.~Senapati and R.~R. Yager.
\newblock Fermatean fuzzy sets.
\newblock {\em J. Ambient Intell. Humanized Comput.}, 11:663--674, 2020.

\bibitem{Sz2014}
E.~Szmidt.
\newblock {\em Distances and {S}imilarities in {I}ntuitionistic {F}uzzy
  {S}ets}, volume 307 of {\em Studies in Fuzziness and Soft Computing}.
\newblock Springer, Berlin, Heidelberg, 2014.

\bibitem{SK2009}
E.~Szmidt and J.~Kacprzyk.
\newblock Amount of information and its reliability in the ranking of
  {A}tanassov's intuitionistic fuzzy alternatives.
\newblock In E.~Rakus-Andersson, R.~R. Yager, N.~Ichalkaranje, and L.~C. Jain,
  editors, {\em Recent Advances in Decision Making}, pages 7--19. Springer
  Berlin Heidelberg, 2009.

\bibitem{WY2016}
S.-P. Wan and Z.-H. Yi.
\newblock Power average of trapezoidal intuitionistic fuzzy numbers using
  strict t-norms and t-conorms.
\newblock {\em IEEE Trans. Fuzzy Syst.}, 24:1035--1047, 2016.

\bibitem{WL2012}
W.~Wang and X.~Liu.
\newblock Intuitionistic fuzzy information aggregation using {E}instein
  operations.
\newblock {\em IEEE Trans. Fuzzy Syst.}, 20(5):923--938, 2012.

\bibitem{WX2005}
W.~Wang and X.~Xin.
\newblock Distance measure between intuitionistic fuzzy sets.
\newblock {\em Pattern Recognit. Lett.}, 26(13):2063--2069, 2005.

\bibitem{XXZ2012}
M.~Xia, Z.~Xu, and B.~Zhu.
\newblock Some issues on intuitionistic fuzzy aggregation operators based on
  {A}rchimedean t-conorm and t-norm.
\newblock {\em Knowl.-Based Syst.}, 31:78--88, 2012.

\bibitem{XZZW2019}
Y.~Xing, R.~Zhang, Z.~Zhou, and J.~Wang.
\newblock Some q-rung orthopair fuzzy point weighted aggregation operators for
  multi-attribute decision making.
\newblock {\em Soft Comput.}, 23:11627--11649, 2019.

\bibitem{XXL2017}
Z.~Xing, W.~Xiong, and H.~Liu.
\newblock A {E}uclidean approach for ranking intuitionistic fuzzy values.
\newblock {\em IEEE Trans. Fuzzy Syst.}, 26(1):353--365, 2017.

\bibitem{Xu2007}
Z.~Xu.
\newblock Intuitionistic fuzzy aggregation operators.
\newblock {\em IEEE Trans. Fuzzy Syst.}, 15(6):1179--1187, 2007.

\bibitem{XC2012}
Z.~Xu and X.~Cai.
\newblock {\em {I}ntuitionistic {F}uzzy {I}nformation {A}ggregation: {T}heory
  and {A}pplications}, volume~20 of {\em Mathematics Monograph Series}.
\newblock {S}cience {P}ress, 2012.

\bibitem{XX2011}
Z.~Xu and M.~Xia.
\newblock Induced generalized intuitionistic fuzzy operators.
\newblock {\em Knowl.-Based Syst.}, 24:197--209, 2011.

\bibitem{XY2006}
Z.~Xu and R.~R. Yager.
\newblock Some geometric aggregation operators based on intuitionistic fuzzy
  sets.
\newblock {\em Int. J. Gen. Syst.}, 35:417--433, 2006.

\bibitem{XZ2016}
Z.~Xu and N.~Zhao.
\newblock Information fusion for intuitionistic fuzzy decision making: an
  overview.
\newblock {\em Inf. Fusion}, 28:10--23, 2016.

\bibitem{Yager2014}
R.~R. Yager.
\newblock Pythagorean membership grades in multicriteria decision making.
\newblock {\em IEEE Trans. Fuzzy Syst.}, 22(4):958--965, 2014.

\bibitem{Yager2017}
R.~R. Yager.
\newblock Generalized orthopair fuzzy sets.
\newblock {\em IEEE Trans. Fuzzy Syst.}, 25(5):1222--1230, 2017.

\bibitem{YagerA2017}
R.~R. Yager and N.~Alajlan.
\newblock Approximate reasoning with generalized orthopair fuzzy sets.
\newblock {\em Inf. Fusion}, 38:65--73, 2017.

\bibitem{Zhang2016}
X.~Zhang.
\newblock A novel approach based on similarity measure for {P}ythagorean fuzzy
  multiple criteria group decision making.
\newblock {\em Int. J. Intell. Syst.}, 31(6):593--611, 2016.

\bibitem{ZX2012}
X.~Zhang and Z.~Xu.
\newblock A new method for ranking intuitionistic fuzzy values and its
  application in multi-attribute decision making.
\newblock {\em Fuzzy Optim. Decis. Making}, 11:135--146, 2012.

\bibitem{ZX2014}
X.~Zhang and Z.~Xu.
\newblock Extension of {TOPSIS} to multiple criteria decision making with
  {P}ythagorean fuzzy sets.
\newblock {\em Int. J. Intell. Syst.}, 29(12):1061--1078, 2014.

\bibitem{ZGGQHL2019}
Y.~Zhong, H.~Gao, X.~Guo, Y.~Qin, M.~Huang, and X.~Luo.
\newblock Dombi power partitioned {H}eronian mean operators of q-rung orthopair
  fuzzy numbers for multiple attribute group decision making.
\newblock {\em PloS ONE}, 14(10):e0222007, 2019.

\end{thebibliography}
\end{document}